 \newtheorem{assumption}{Assumption}
 \DeclareMathOperator*{\argmax}{arg\,max}
 \DeclareMathOperator*{\argmin}{arg\,min}
\begin{document}

\title{Resilient Consensus-based Multi-agent Reinforcement Learning with Function Approximation}

\author{\name Martin Figura\email mfigura@nd.edu \\
       \addr Department of Electrical Engineering\\
       University of Notre Dame\\
       Notre Dame, IN 46556, USA
       \AND
       \name Yixuan Lin \email yixuan.lin.1@stonybrook.edu \\
       \addr Department of Applied Mathematics and Statistics \\ Stony Brook University\\
       Stony Brook, NY 11794, USA
       \AND
       \name Ji Liu \email ji.liu@stonybrook.edu\\
       \addr Department of Electrical and Computer Engineering \\ Stony Brook University\\
       Stony Brook, NY 11794, USA
       \AND
       \name Vijay Gupta\email vgupta2@nd.edu \\
       \addr Department of Electrical Engineering\\
       University of Notre Dame\\
       Notre Dame, IN 46556, USA}

\editor{Tong Zhang, Martin Jaggi, Pradeep Ravikumar, Honglak Lee, Laurent Orseau}

\maketitle

\begin{abstract}
Adversarial attacks during training can strongly influence the performance of multi-agent reinforcement learning algorithms. It is, thus, highly desirable to augment existing algorithms such that the impact of adversarial attacks on cooperative networks is eliminated, or at least bounded. In this work, we consider a fully decentralized network, where each agent receives a local reward and observes the global state and action. We propose a resilient consensus-based actor-critic algorithm, whereby each agent estimates the team-average reward and value function, and communicates the associated parameter vectors to its immediate neighbors. We show that in the presence of Byzantine agents, whose estimation and communication strategies are completely arbitrary, the estimates of the cooperative agents converge to a bounded consensus value with probability one, provided that there are at most $H$ Byzantine agents in the neighborhood of each cooperative agent and the network is $(2H+1)$-robust. Furthermore, we prove that the policy of the cooperative agents converges with probability one to a bounded neighborhood around a local maximizer of their team-average objective function under the assumption that the policies of the adversarial agents asymptotically become stationary.
\end{abstract}

\begin{keywords}
cooperative multi-agent reinforcement learning, Byzantine-resilient learning, adversarial attacks, decentralized networks, consensus
\end{keywords}

\section{Introduction}
In multi-agent reinforcement learning (MARL), multiple agents interact with each other and a common environment to learn policies that maximize their objective functions. The definition of the objective function for each agent determines whether the agents are cooperative, competitive, or mixed \citep{zhang2019survey}. To this date, most notable success stories of applied MARL have occurred in the competitive setting, particularly in games where each agent wishes to optimize its own objective function~\citep{vinyals2019}. More recently, cooperative MARL, in which agents wish to maximize a team objective function and whose implementation is arguably more challenging due to the required team effort in learning and information sharing, has emerged as an exciting method to solve dynamic programming approximately for teams of agents with aligned objectives. Cooperative MARL has numerous potential applications in cyber-physical systems \citep{xu2021}, robotics \citep{yang2004}, traffic networks \citep{kuyer2008}, swarm systems \citep{huettenrauch2019}, or economics \citep{charpentier2021}.\par
In cooperative MARL, the agents are generally assumed to be independent decision-makers whose control policies are reinforced through their actions in the environment. The cooperative agents use the individual or local reward data obtained from their actions, the state of the environment, and communicated data from the other agents to achieve a team objective that is usually assumed to be optimizing the sum of the individual objective functions of all the agents. There is a long line of literature that assumes that the agents first participate in centralized training and then execute their decentralized policy at test time \citep{foerster2018,lowe2017,rashid2020}. While these methods show great promise, they assume that the agents share their local rewards, which they may wish to keep private in certain applications. Recently, this assumption was removed by establishing methods for completely decentralized learning where both training and execution at test time are decentralized, such that the agents receive only local rewards and communicate local information about the team performance (e.g., local rewards or parameters of the estimated team-average action-value function) to their neighbors according to a directed graph \citep{zhang2018,qu2020,lin2019,suttle2020}. The difference between reinforcement learning paradigms with centralized and decentralized training is depicted schematically in Figure~\ref{fig:dis_vs_dec}.\par
In this paper, we focus on decentralized learning using consensus-based actor-critic (AC) MARL methods \citep{zhang2018,qu2019} that have been shown to scale well with the size of the multi-agent Markov decision processes (MMDP) and to ensure sufficient exploration through the implementation of stochastic policies. AC methods are particularly efficient in problems with large action spaces, where deep Q-learning may become impractical. Moreover, consensus-based AC MARL methods are attractive in the setting where individual agents observe the full state of the environment and actions taken by fellow agents. These algorithms are based on parameter sharing, i.e., the agents locally update parameters of the team-average value function surrogate using their local reward signal and communicate the updated parameters to their immediate neighbors. This idea, which originated in decentralized Q-learning \citep{kar2013}, was adopted in two consensus-based AC MARL algorithms with linear function approximation proposed in \citep{zhang2018} and the consensus-based AC MARL algorithm with nonlinear function approximation proposed in \citep{qu2019}.\par

\begin{figure}[t]
\centering
\includegraphics[width=0.8\linewidth]{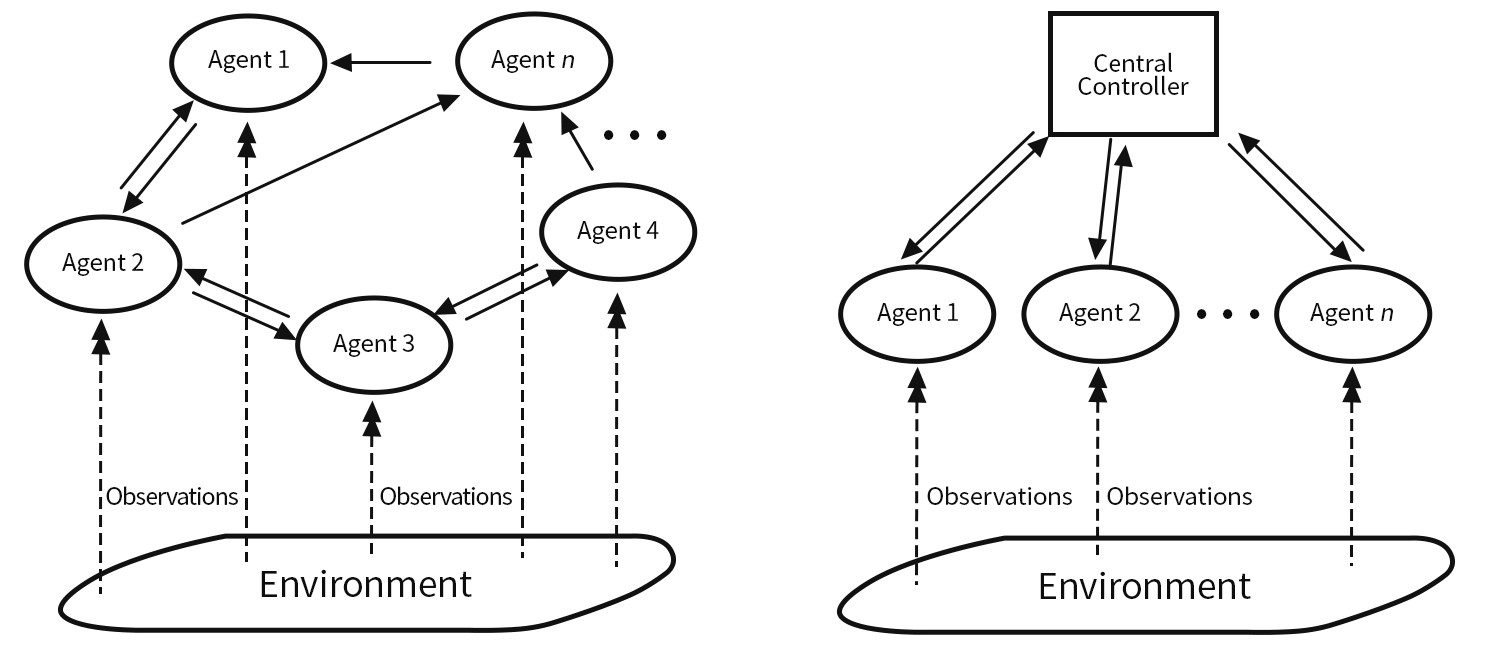}
\caption{MARL training paradigms with centralized v/s decentralized training. Decentralized training (left) is executed locally by individual agents that communicate to their neighbors according to a directed graph, whereas centralized training (right) takes place exclusively at a centralized coordinator. The communication topology on the right is also followed in paradigms such as distributed or federated machine learning, where a centralized coordinator can assign tasks to the agents. Our paper focuses on the decentralized paradigm depicted on the left.}
\label{fig:dis_vs_dec}
\end{figure}

While there is much to be admired about the state-of-the-art consensus-based AC MARL algorithms, there are some question marks about their resilience. In \citep{figura2021}, it was shown that the consensus-based AC MARL algorithm in \citep{zhang2018} is susceptible to a simple adversarial attack. Specifically, a single self-interested agent can mislead all the other (cooperative) agents to learn policies that maximize the objective function of the self-interested agent, even though these policies may be arbitrarily poor for the team objective. The fragility of the algorithm motivates our present work, where the goal is to understand the learning performance in the presence of adversarial agents and provide an answer to the question
\begin{quote}
\textit{Can we design a consensus-based AC MARL algorithm with parametric function approximation for decentralized learning that is provably resilient to adversarial attacks, in the sense that the cooperative agents learn optimal policies in an environment influenced by the adversarial agents?}
\end{quote}
It is important to note that the adversarial agents that we consider impact the other agents both due to the information they communicate to them as well as through implementation of control policies that affect the evolution of the state of the environment. To achieve resilience against adversarial attacks on control policies is difficult in our setting as we do not assume that the agents are aware of the control policies of one another. Our goal is to design a resilient algorithm that leads the cooperative agents to learn near-optimal policies in an environment that is affected by the adversarial agents. This is still a unique challenge because the adversarial agents can model attacks on communication channels that seek to degrade the network performance, which has been considered in cyber-physical system safety \citep{dibaji2019survey}. Resilience to adversarial attacks is a highly desirable property if MARL has to be used in safety-critical systems \citep{koutsoukos2017}.

We would also like to note that in the context of MARL, attacks may occur either during the training process or during execution. In this paper, we assume that the attacks occur during training, and hence the goal of the resilient cooperative agents is to learn policies that at least roughly maximize the objective function of the cooperative agents in the presence of adversaries.
\subsection{Relevant Work}
Any form of multi-agent learning essentially requires aggregation of information from multiple agents. Resilient multi-agent learning algorithms are ones in which the effects of any adversarial data injection are eliminated or at least attenuated. Adversarial agents with the capability to transmit arbitrary information to other agents and implement any control policies as we consider are usually termed as Byzantine agents. Unfortunately, resilient multi-agent learning algorithms that {\em eliminate} the effects of Byzantine attacks entirely quickly run into the curse of dimensionality. For instance, even in the simple context of agents trying to learn the mean of a static vector value that they hold when some agents are Byzantine -- the so-called {\em Byzantine consensus vector} problem, the number of reliable machines must be proportional to the product of the number of Byzantine machines and the dimension of the shared vectors \citep{vaidya2014}.\par
One field that has studied the presence of Byzantine adversarial agents in the context of learning is distributed machine learning (ML) with master-worker networks. In this setting, where the communication topology is similar to the right subfigure in Figure~\ref{fig:dis_vs_dec}, each agent applies stochastic gradient descent (SGD) to update shared function parameters and a central machine aggregates parameters received from multiple worker agents. If some agents can be adversarial, one can extrapolate the algorithms from Byzantine vector consensus; however, they quickly run into a similar curse of dimensionality. This has led to the development of alternative algorithms for resilient distributed ML. Here, we can point to algorithms based on the concept of the geometric median \citep{xie2018,chen2017}. To reduce the computational complexity, the method of median of means was proposed in \citep{tu2021} and the method of entry-wise median was introduced in \citep{alistarh2018}. Another recently proposed approach, the Krum method, is based on medoids \citep{blanchard2017}. It is important to note that all these methods merely bound the deviation of the aggregated parameter vector from the desired one.\par
In consensus-based MARL, the agents communicate data to their neighbors according to a given graph to facilitate estimation of the team-average value function. A popular method to attenuate the effect of adversarial data injected by Byzantine agents is to apply the element-wise trimmed-mean \citep{xie2021,lin2020,wu2021}, where the agents discard $H$ largest and $H$ smallest parameter values received from the neighbors. The hyperparameter $H$ denotes the maximum number of Byzantine agents allowed in the network. According to \citep{xie2021}, a cooperative network that applies this resilient method in decentralized $Q$-learning  attains a robust estimate of the team-average action-value function, and thus the policies of the cooperative agents converge to a neighborhood around the team-optimal policy. Matters become significantly more complicated in large MMDPs, where agents employ function approximation and execute consensus updates in the parameter space. A resilient consensus-based AC MARL algorithm with linear function approximation was proposed in \citep{lin2020}; however, this method involves a centralized coordinator that receives parameter vectors from all agents and provides them with an element-wise trimmed mean of each parameter. This approach is analogous to distributed ML due to the assumed master-worker setting, but is not applicable in decentralized MARL. The approach was later extended to fully decentralized MARL in \citep{wu2021}, where the element-wise trimmed mean is computed on a local level. While the element-wise truncation methods guarantee boundedness of the estimated parameters, the Byzantine agents can still design attacks that manipulate individual parameters within bounded intervals. If properly designed, these attacks lead to the overestimation of selected features, which may compound large errors in the approximated functions. Consequently, the cooperative agents are prevented from learning better policies.

\subsection{Contributions}
In this work, we introduce a novel resilient projection-based consensus method for decentralized AC MARL. Our main contribution is the resilient projection-based consensus AC algorithm with linear function approximation (Algorithm~\ref{alg:2}), whereby the cooperative agents estimate the critic and team-average reward function that are crucial in the approximation of the true policy gradient. The algorithm includes two important steps that jointly faciliate a high degree of resilience in the critic and team-average reward function. In the first step, the received parameters are projected into the feature vectors that are the same for all agents since the agents train linear models using the same basis functions. The projection effectively maps the received parameter vectors into a 1-D subspace and enables the cooperative agents to estimate the estimation errors applied in the local SGD updates by their neighbors. In the second step, the cooperative agents perform resilient aggregation in the 1-D space of the estimated neighbors' estimation errors and apply the aggregated estimation error in the SGD update, which ensures diffusion of local data across the network. One similarity between our resilient projection-based consensus method and the method based on trimmed means as proposed in \citep{wu2021} is in the rule for resilient aggregation; our method includes trimming of extreme values of the estimated  estimation errors at the neighbors and subsequent averaging of the remaining values. The most notable contrast between the methods is in the dimension of the aggregated data as the resilient aggregation using our projection-based method is performed over scalars, whereas the method based on trimmed mean aggregates parameter vectors that are of the same dimension as the feature vectors. Thus, with Algorithm~\ref{alg:2}, the Byzantine agents are essentially forced to propose a scalar value to the cooperative agents despite transmitting a parameter vector. Intuitively, it is significantly more difficult for the Byzantine agents to stage a successful attack that degrades network performance when the cooperative agents apply the projection-based consensus method than when they apply the resilient consensus method based on trimmed mean. We provide a convergence analysis of Algorithm~\ref{alg:2}, which guarantees convergence to the neighborhood of a locally optimal policy under reasonable assumptions on the policies of the Byzantine agents.\par
Our second contribution is the integration of the resilient projection-based consensus method in the AC algorithm with nonlinear function approximation (Algorithm~\ref{alg:3}). We also make a side contribution regarding non-resilient consensus AC algorithms. Specifically, we introduce the projection-based consensus AC algorithm (Algorithm~1), which is a special case of Algorithm~\ref{alg:2} with no trimming applied in the consensus updates. We make assumptions that are analogous to \citep{zhang2018} but different from assumptions made for Algorithm~\ref{alg:2}. We prove that the agents find locally optimal policies under Algorithm~\ref{alg:2} despite applying consensus updates over scalars. These findings are consistent with the convergence results for the consensus-based AC algorithms in \citep{zhang2018}.

\subsection{Paper Organization}
In Section~\ref{sec:background}, we specify the MMDP and the objective functions of the cooperative agents, and elaborate on functions approximated in the consensus-based AC MARL algorithms with a view towards resilient consensus. The three consensus-based  AC MARL algorithms are presented in Section~\ref{sec:algorithms}. We provide a convergence analysis for Algorithm~1 and Algorithm~2 in Section~\ref{sec:convergence} and demonstrate the efficacy of Algorithm~3 in Section~\ref{sec:simulation}. For completeness, some technical results on stochastic approximation can be found in the appendix.

\section{Background}\label{sec:background}
This section formulates the decentralized MARL problem with cooperative and adversarial agents. We begin with the definition of an underlying MMDP, which is analogous to \citep{zhang2018}. The definition assumes that the agents are independent decision-makers that receive private rewards, which depend on the global state which is observable. The state transition probability is also a function of the global state and action, which renders the MMDP setting quite general. We highlight that the cooperative and Byzantine agents have conflicting objectives, which poses a difficulty in the learning process. Since the Byzantine agents are present in the network, we emphasize the importance of resilient approximation on behalf of the cooperative agents.
\subsection{Networked Markov Decision Process}\label{Sec MMDP}
We consider an MMDP given as a tuple  $(\mathcal{S},\{\mathcal{A}^i\}_{i\in\mathcal{N}},\mathcal{P},\{\mathcal{R}^i\}_{i\in\mathcal{N}},\mathcal{G})$, where $\mathcal{N}=\{1,\dots,N\}$ is the set of all agents, $\mathcal{S} $ is a set of (global) states, $\mathcal{P}$ is a set of transitional probabilities, $\gamma\in[0,1)$ is a discount factor, $\mathcal{G}$ represents a set of communication graphs, and $\mathcal{A}^i$ and $\mathcal{R}^i$ are a set of actions and rewards of agent $i$, respectively. The communication graph active at time $t$ is denoted by $\mathcal{G}_{t}$. With a small abuse of notation, we let $\mathcal{G}_t=(\mathcal{N},\mathcal{E}_t)$ so that the set of vertices is also denoted by $\mathcal{N}$, with each vertex $i$ being associated with agent $i$, and a set of directed edges $\mathcal{E}_t\subseteq\mathcal{N}\times\mathcal{N}$. Furthermore, we define sets $\mathcal{N}_{in,t}^i$ and $\mathcal{N}_{out,t}^i$ that include all agents that transmit data to and receive data from agent~$i$ at time~$t$, respectively. The global state is denoted by $s\in\mathcal{S}$. The global action is obtained by stacking the actions of all the agents and is denoted by $a$. We will use $s^\prime$ to denote the global state at the future step. All variables with the superscript $i$ pertain to agent $i$. We let $r^i(s,a):\mathcal{S}\times\mathcal{A}\rightarrow\mathcal{R}^i\subset\mathbb{R}$ denote the local reward of subsystem $i$, $p(s^\prime|s,a):\mathcal{S}\times\mathcal{S}\times\mathcal{A}\rightarrow\mathcal{P}\subset\mathbb{R}$ the joint transitional probability, and $\pi^i(a^i|s):\mathcal{S}\times\mathcal{A}^i\rightarrow(0,1)$ the policy of subsystem~$i$. The global policy is given as $\pi(a|s)=\prod_{i\in\mathcal{N}} \pi^i(a^i|s)$. If needed, we emphasize the dependence of a signal on time by using subscript~$t$, i.e., $r_{t+1}^i(s_t,a_t)$. If the dependence is clear from the context, we drop the subscript to reduce notational clutter. The rewards remain private and each agent generally receives a different reward, i.e., $r^i\neq r^j$ for $i,j\in\mathcal{N},\,i\neq j$. We assume that every agent observes the global state $s$ and action $a$ at each step in training. We define the average individual reward under global policy $\pi(a|s)$ as $r_\pi^i(s)=\sum_a\pi(a|s)r^i(s,a)$, the average individual reward under global policy $\pi(a|s)$ at all states $s\in\mathcal{S}$ as $R_\pi^i=[r_\pi^i(s),s\in\mathcal{S}]^T\in\mathbb{R}^{|\mathcal{S}|}$, and the average individual reward at all state-action pairs $(s,a)$ as $R^i=[r^i(s,a),s\in\mathcal{S},a\in\mathcal{A}]^T\in\mathbb{R}^{|\mathcal{S}|\cdot|\mathcal{A}|}$. The distributions of states and state-action pairs visited by the agents under a fixed policy $\pi(a|s)$ are denoted as $d_\pi(s)$ and $d_\pi^\prime(s,a)$, respectively. 

\subsection{Objective Functions}
In this subsection, we define the learning objective of the agents in the MMDP. We divide the agents into a set of cooperative agents and a set of Byzantine agents, which we denote by $\mathcal{N}^+$ and $\mathcal{N}^-$, respectively. The definition of a Byzantine agent is stipulated as follows.
\begin{definition}\label{def:byzantine}
A Byzantine agent is one that communicates arbitrary and generally distinct information to each of its neighbors in the set $\mathcal{N}_{out,t}^i$ and enacts an arbitrary policy $\pi^i(a^i|s)$. 
\end{definition}
We note that the membership or cardinality of the sets $\mathcal{N}^+$ and $\mathcal{N}^-$ is not known. In other words, we do not know a priori whether an agent is cooperative or Byzantine. We let $\pi^+(a^+|s)=\prod_{i\in\mathcal{N}^+}\pi^i(a^i|s)$ denote the aggregated policy of the cooperative agents, where $a^+$ is the aggregated action of the cooperative agents. Similarly, we define the aggregated policy of the Byzantine agents as  $\pi^-(a^-|s)=\prod_{i\in\mathcal{N}^-}\pi^i(a^i|s)$, where $a^-$ represents the aggregated action of the Byzantine agents. Each cooperative agent~$i$, $i\in\mathcal{N}^+$, is associated with an objective function:
\begin{align*}
J^i(\pi)=J^i(\pi^+,\pi^-)=\mathbb{E}_{\pi,d_\pi}\big[\sum_{t=0}^\infty \gamma^tr_{t+1}^i(s_t,a_t)\big],
\end{align*}
for a discount factor $\gamma\in(0,1).$ The goal of the cooperative agents is to find a policy $\pi^+(a^+|s)$ that maximizes a team-average objective function $J^+(\pi)=\frac{1}{N^+}\sum_{i=\in\mathcal{N^+}}J^i(\pi)$. In other words, the cooperative agents solve the following well-defined optimization problem:
\begin{align}\label{local opt problem}
\pi^+_*=\argmax_{\pi^+} J^+(\pi^+,\pi^-),
\end{align}
in which the cooperative agents optimize their policy $\pi^+$ in an environment that is jointly affected by the policies $\pi^-$ of the Byzantine agents. It is important to note that the cooperative agents search for a policy that is optimal when the MMDP evolution is affected by the Byzantine agents rather than a policy that is optimal when the Byzantine agents are absent from the problem. In addition to the policy optimization at the cooperative agents, the Byzantine agents seek to maximize an arbitrary and potentially unknown objective function $J^-(\pi^-,\pi^+)$ that may, in general, not be aligned with $J^+(\pi^+,\pi^-)$. The policy $\pi^-$ is unknown to the cooperative agents. Furthermore, we assume that the Byzantine agents cannot be identified in the training process. Under this assumption, traditional consensus-based AC MARL algorithms, such as the ones proposed in~\citep{zhang2018,qu2019}, are fragile in the sense that a network of many cooperative agents employing a consensus-based AC MARL algorithm ends up maximizing an arbitrary objective function proposed by a single Byzantine agent, $J^-(\pi^-,\pi^+)$, instead of the desired team-average objective function $J^+(\pi^+,\pi^-)$ \citep{figura2021}. In this work, resilient learning is equivalent to solving (at least approximately) the optimization problem in \eqref{local opt problem} despite the Byzantine agents' best efforts to convince the cooperative agents that the true team-average objective function is not $J^+(\pi^+,\pi^-)$. We propose two resilient consensus-based AC MARL algorithms that ensure a resilient approximation of the gradient of the objective function, $\nabla_{\pi^+}J^+(\pi^+,\pi^-)$, which leads to the policy improvement of the cooperative agents despite the presence of the Byzantine agents. The multi-agent policy gradient $\nabla_{\pi^+}J^+(\pi^+,\pi^-)$ is discussed in detail in the next subsection.

\subsection{Multi-agent Policy Gradient}
Since AC algorithms are gradient-based optimization methods, we first establish a general framework to evaluate the gradient of the objective function, $\nabla_{\pi^+}J^+(\pi^+,\pi^-)$. We recall the well-known policy gradient theorem \citep{sutton2018book}, according to which the gradient of the objective function $J^+(\pi^+,\pi^-)$ is given as follows
\begin{align*}
\nabla_{\pi} J^+(\pi^+,\pi^-)=\mathbb{E}_{\pi,d_\pi}\big[Q_{\pi^+}(s,a)\nabla_\pi\log\pi(a|s)\big],
\end{align*}
where $Q_{\pi^+}(s,a)=\mathbb{E}_\pi\big[\big(\frac{1}{N^+}\sum_{i\in\mathcal{N}^+}\sum_{t=0}^\infty \gamma^t r_{t+1}^i(s_t,a_t)\big)\big|s_0=s,a_0=a\big]$. Using the fact that $\log\pi(a|s)=\log\big(\prod_{i\in\mathcal{N}}\pi^i(a^i|s)\big)=\sum_{i\in\mathcal{N}}\log\pi^i(a^i|s)$, the policy gradient can be expressed as a sum of gradients with respect to the local policies, i.e., $\nabla_{\pi^+} J^+(\pi^+,\pi^-)=\sum_{i\in\mathcal{N}^+}\nabla_{\pi^i} J^+(\pi^+,\pi^-)$, where
\begin{align*}
\nabla_{\pi^i} J^+(\pi^+,\pi^-)=\mathbb{E}_{\pi,d_\pi}\big[Q_{\pi^+}(s,a)\nabla_{\pi^i}\log\pi^i(a^i|s)\big].
\end{align*}
It is important to note that methods that use the policy gradient in the presented form are known to suffer from high variance since the magnitude of the gradient steps depends on the sampled rewards. Baseline policy gradient methods take a step further by employing a baseline that reduces the variance of the policy gradients. We define a critic $V_{\pi^+}(s)=\mathbb{E}_{\pi}\big[Q_{\pi^+}(s,a)\big]$, which serves as the baseline. Then, the baseline policy gradient reads as
\begin{align*}
\nabla_{\pi^i} J^+(\pi^+,\pi^-)=\mathbb{E}_{\pi,d_\pi}\big[\big(Q_{\pi^+}(s,a)-V_{\pi^+}(s)\big)\nabla_{\pi^i}\log\pi^i(a^i|s)\big].
\end{align*}
In AC methods, the agents employ TD learning, which is a bootstrapping method, whereby the agents sample the advantage function $Q_{\pi^+}(s,a)-V_{\pi^+}(s)$ online, and thus accelerate the learning process. In this work, we consider the TD(0) method, where the advantage function is sampled as follows
\begin{align}
Q_{\pi^+}(s,a)-V_{\pi^+}(s)\sim\frac{1}{N^+} \sum_{i\in\mathcal{N}^+}r^i(s,a)+\gamma V_{\pi^+}(s^\prime)-V_{\pi^+}(s),
\end{align}
where $\frac{1}{N^+} \sum_{i\in\mathcal{N}^+}r^i(s,a)+\gamma V_{\pi^+}(s^\prime)-V_{\pi^+}(s)$ is the team-average TD error. The distributed AC policy gradient consists of two components: the team-average advantage function $Q_{\pi^+}(s,a)-V_{\pi^+}(s)$ and the term $\nabla_{\pi^i}\log\pi^i(a_t^i|s_t)$. Whereas the latter can be evaluated locally by each agent, the team-average advantage function cannot be sampled directly in decentralized networks because the agents neither observe the team-average rewards nor have access to the centralized critic. However, as shown by \citep{zhang2018}, there exists a solution method that applies approximation of the critic and team-average reward function and communication between agents that enables the agents to sample the team-average advantage function $Q_{\pi^+}(s,a)-V_{\pi^+}(s)$. We explore the method, consensus-based AC, in the next subsection.

\subsection{Consensus-based Actor-critic Method}
In this subsection, we present the consensus-based AC method. This method is particularly useful in the setting of the MMDP with large state and action spaces that we introduced in Section~\ref{Sec MMDP}. The method exploits the idea that the team-average advantage function $Q_{\pi^+}(s,a)-V_{\pi^+}(s)$ can be estimated by every agent, even though it cannot be directly sampled. We establish an estimator of the team-average objective function by taking inspiration from Algorithm~2 in \citep{zhang2018}, which employs approximations of the team-average reward function and critic. We make the following assumption about these approximations.
\begin{assumption}\label{as:linear_approx}
For each cooperative agent $i$, the critic $V_\pi(s)$ and team-average reward function $\bar{r}(s,a)=\frac{1}{N^+}\sum_{i\in\mathcal{N}^+}r^i(s,a)$ are approximated by linear models, i.e., $V (s;v^i) = \phi(s)^Tv^i$ and $\bar{r}(s,a;\lambda^i)=f(s,a)^T\lambda^i$, where $\phi(s) = [\phi_1(s),\dots,\phi_L(s)]\in\mathbb{R}^L$ and $f(s,a)=[f_1(s,a),\dots,f_M(s,a)]\in\mathbb{R}^M$ are the features associated with $s$ and $(s,a)$, respectively. The feature vectors $\phi(s)$ and $f(s,a)$ are uniformly bounded for any $s\in\mathcal{S}$, $a\in\mathcal{A}$. Furthermore, we let the feature matrix $\Phi\in\mathbb{R}^{|\mathcal{S}|\times L}$ have $[\phi_l(s), s\in\mathcal{S}]^T$ as its $l$-th column for any $l\in [L]$, and the feature matrix $F\in\mathbb{R}^{|\mathcal{S}|\cdot|\mathcal{A}|\times M}$ have $[f_m(s,a), s\in\mathcal{S},a\in\mathcal{A}]^T$ as its $m$-th column for any $m\in[M]$. Both $\Phi$ and $F$ have full column rank.
\end{assumption}
The goal of the cooperative network in the consensus-based AC MARL algorithm is to solve a constrained distributed optimization problem:
\begin{align}
v_\pi&=\argmin\limits_{v^i}\mathbb{E}_{d_\pi}\bigg\{\mathbb{E}_{\pi,p}\bigg(\frac{1}{N^+}\sum\limits_{i\in\mathcal{N}^+}r^i(s,a)+\gamma V(s^\prime;v^i)-V(s;v^i)\bigg)^2\bigg\}\quad \text{s.t.}\quad v^i=v^j\label{opt problem v},\\
\lambda_\pi&=\argmin\limits_{\lambda^i}\mathbb{E}_{d_\pi}\bigg\{\mathbb{E}_{\pi}\bigg(\frac{1}{N^+}\sum\limits_{i\in\mathcal{N}^+}r^i(s,a)-\bar{r}(s,a;\lambda)\bigg)^2\bigg\}\quad \text{s.t.}\quad \lambda^i=\lambda^j\label{opt problem lambda},
\end{align}
where the equality constraint applies to $i,j\in\mathcal{N}^+$. Letting $\delta_{v,t}^i=r_{t+1}^i(s_t,a_t)+\gamma V(s_{t+1};v_t^i)-V(s_t;v_t^i)$ and $\delta_{\lambda,t}^i=r_{t+1}^i(s_t,a_t)-\bar{r}_{t+1}(s_t,a_t;\lambda_t^i)$ denote the local estimation errors, the distributed optimization problem can be solved as follows \citep{zhang2018}:
\begin{align}
\tilde{v}_t^i&=v_t^i+\alpha_{v,t}\cdot\delta_{v,t}^i\cdot\nabla_vV(s_t;v_t^i), &v_{t+1}^i=\sum_{j\in\mathcal{N}} c_t(i,j)\tilde v_t^j\label{v_update2}, \\
\tilde\lambda_{t}^i&=\lambda_t^i+\alpha_{\lambda,t}\cdot\delta_{\lambda,t}^i\cdot\nabla_{\lambda}\bar{r}_{t+1}(s_t,a_t;\lambda_t^i),&\lambda_{t+1}^i=\sum_{j\in\mathcal{N}}c_t(i,j)\tilde\lambda_t^j\label{lambda_update2}.
\end{align}
The critic and team-average reward function parameters are updated locally based on the most recent local reward and observation of the global state and action before being transmitted to immediate neighbors on the communication graph.\par
The success of the distributed stochastic approximation in \eqref{v_update2} and \eqref{lambda_update2} hinges on the assumption that all agents are reliable, i.e., $\mathcal{N}^-=\emptyset$. It is easy to see that in the presence of Byzantine agents, some cooperative agents may include the parameter values transmitted by the Byzantine agents in the consensus updates. Even if a single agent deviates from the proposed updates, the distributed stochastic approximation can yield arbitrarily poor results, e.g., a single adversary can drive the cooperative network to maximize only the objective function $J^-(\pi^-,\pi^+)$ \citep{figura2021}. To provide resilience against adversarial attacks in consensus-based MARL, the method of trimmed means was proposed in \citep{wu2021}. In this method, assuming that there are at most $H$ Byzantine agents in the network, the cooperative agents rank the values received from their neighbors for each parameter and do not consider the $H$ largest and $H$ smallest values in the consensus update. While the method is easy to implement, it may lead to loss of performance as illustrated by the following simple example.
\begin{example}\label{ex:1}
Suppose that under a policy $\pi(a|s)$, the transition to states $s=0$ and $s=1$ occurs with probability $p$ and $1-p$, respectively. There are three cooperative agents, $\mathcal{N}^+=\{1,2,3\}$, that receive a private reward $r^i(s)=i-4s$ and wish to estimate the team-average reward $\bar{r}(s)=\frac{1}{N^+}\sum_{i\in\mathcal{N}^+}r^i(s)=2-4s$ in the presence of a Byzantine agent. We consider a linear model $\bar{r}(s;\lambda^i)=\begin{bmatrix} 1 & s \end{bmatrix}\lambda^i$. The Byzantine agent transmits constant values $\tilde\lambda^i=\begin{bmatrix}5 & -10\end{bmatrix}^T$ to the fully connected network of agents. The cooperative agents apply updates as in \eqref{lambda_update2}, except that the consensus updates include trimming of the maximum and minimum values. The simulation results in Figure~\ref{fig:ex1} show that the method of trimmed means overestimates the team-average reward $\bar{r}(s)$, i.e., $\bar{r}(s=0;\lambda^i)>\max_{i\in\mathcal{N}^+}\{r^i(s=0)\}$. This simple example illustrates that the correctness of the linear approximation under the method of trimmed means is already lost under this simple model of a Byzantine agent.
\end{example}
\begin{figure}[t]
\begin{subfigure}{.49\textwidth}
  \centering
\includegraphics[width=1\linewidth]{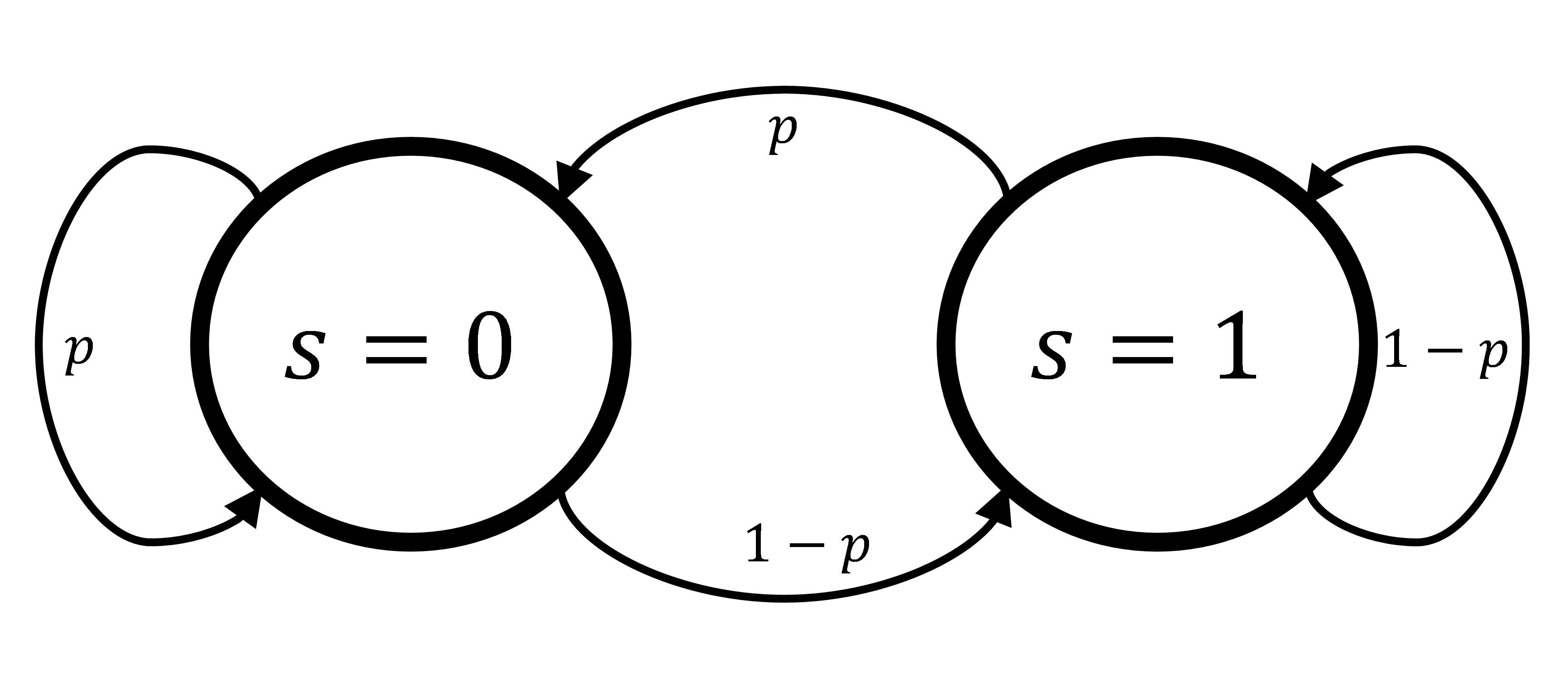}
\end{subfigure}
\begin{subfigure}{.49\textwidth}
  \centering
\includegraphics[width=1\linewidth]{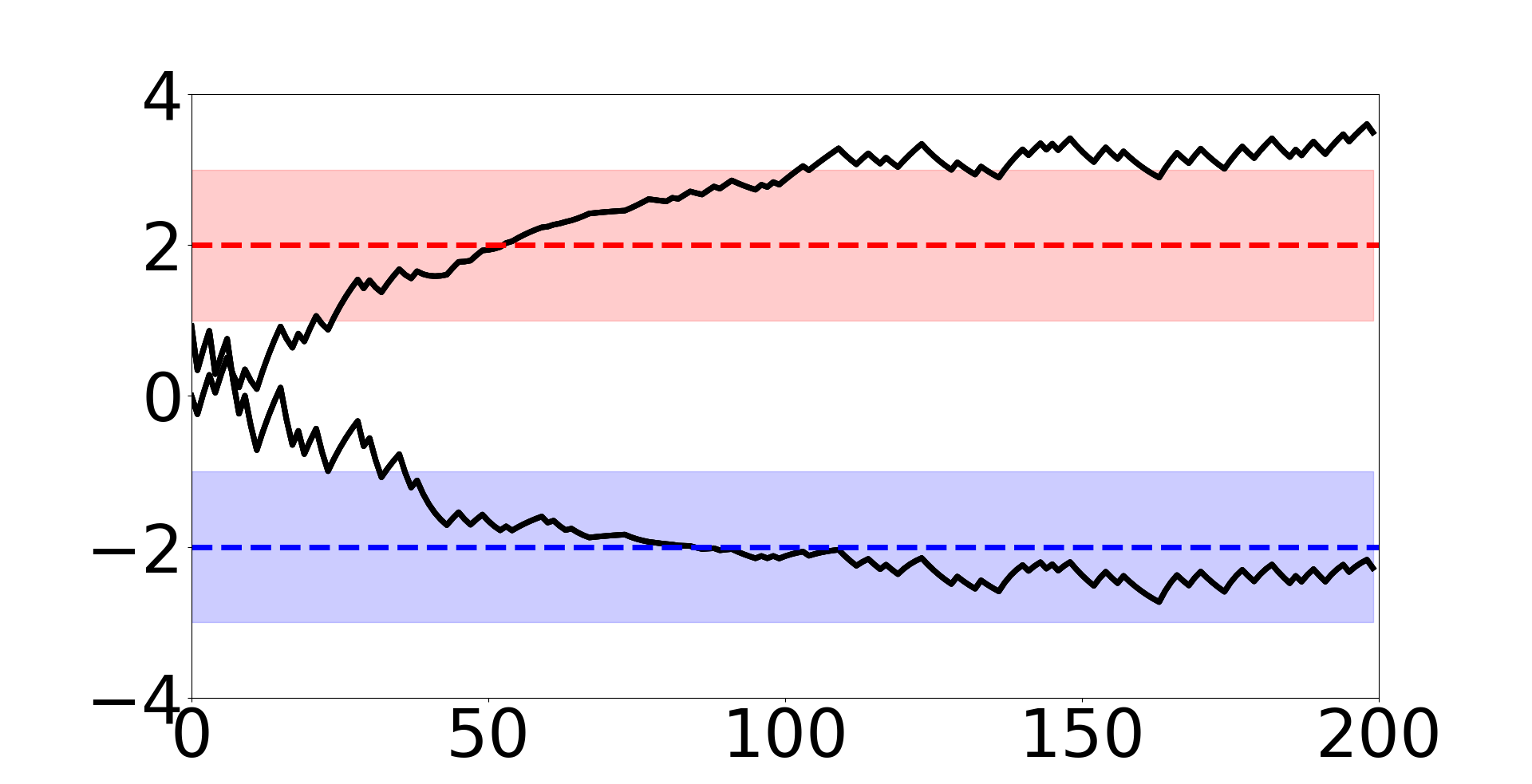}
\end{subfigure}
\caption{Team-average reward function estimation in Example~\ref{ex:1} for $p=0.5$. The estimated values $\bar{r}(s;\lambda^i)$ for $s\in\{0,1\}$ are shown in black color. The true team-average reward for $s\in\{0,1\}$ is depicted by the red and the blue dashed line, respectively. The shaded regions correspond to the convex hull of $r^i(s)$ for $i\in\mathcal{N}^+$. We selected the step size $\alpha_{\lambda,t}=0.05$ in the local updates.}
\label{fig:ex1}
\end{figure}
Motivated by Example~\ref{ex:1}, our goal is to design a resilient consensus method that is suitable for function approximation, and thus can be integrated in MARL algorithms. In the next subsection, we introduce three consensus-based AC MARL algorithms that employ a novel projection-based consensus method that minimizes the impact of Byzantine attacks.

\section{Algorithms}\label{sec:algorithms}
In this section, we present our resilient consensus-based AC MARL algorithms. In the first part, we introduce a novel projection-based consensus method that allows agents to conservatively perform consensus updates. The method is incorporated into the projection-based consensus AC MARL algorithm with linear approximation (Algorithm~1). In the second part, we present the resilient projection-based consensus AC MARL algorithm with linear approximation (Algorithm~2) that further includes a truncation step that provides resilience in the team-average estimation. In the last part, we introduce the deep resilient projection-based consensus AC MARL algorithm (Algorithm~3) that employs deep neural networks to approximate the actor, critic, and team-average reward function.
 
\subsection{Projection-based Consensus Actor-critic with Linear Approximation}
In Section~2.2 and 2.3, we defined the team-average objective function $J^+(\pi^+,\pi^-)$, presented the distributed AC policy gradient, and stressed the need to evaluate the team-average TD error. In Section~2.4, we defined the stochastic and consensus updates of the vanilla consensus-based AC MARL algorithm with function approximation, whereby the agents simultaneously optimize the critic and team-average reward function in order to facilitate the evaluation of the team-average TD error without the knowledge of other agents' rewards. In this subsection, we present a novel projection-based consensus method that also facilitates the evaluation of the team-average TD error by each agent. As its name suggests, the method includes a scalar projection of the received neighbors' parameters. In the following paragraphs, we provide a detailed explanation of the method.\par
Throughout the paper, we use shorthand $f_t$ and $\phi_t$ to denote the feature vectors $f(s_t,a_t)$ and $\phi(s_t)$, respectively. By Assumption~\ref{as:linear_approx}, we can rewrite the critic and team-average reward stochastic updates from \eqref{v_update2} and \eqref{lambda_update2} as follows:
\begin{align}
\tilde{v}_t^i&=v_t^i+\alpha_{v,t}\big(r_{t+1}^i+\gamma \phi_{t+1}^Tv_t^i-\phi_t^Tv_t^i\big)\phi_t,\qquad
\tilde\lambda_{t}^i=\lambda_t^i+\alpha_{\lambda,t}\big(r_{t+1}^i-f_t^T\lambda_t^i\big)f_t,\label{stochastic updates}
\end{align}
where the feature vectors $\phi_t$ and $f_t$ are the same for all agents   because they consider the same basis functions in the critic and team-average reward approximation, respectively. It is easy to see that a single update is performed in the subspace spanned by the feature vectors and its magnitude and direction in this subspace is governed by the step size and estimation error. We recall that the agents receive neither the rewards nor the estimation errors of their neighbors. However, they can exploit the knowledge of the common feature vectors $\phi_t$ and $f_t$ to estimate the estimation error of their neighbors using scalar projection as follows
\begin{align}
r_{t+1}^j+\gamma \phi_{t+1}^Tv_t^j-\phi_t^Tv_t^j\approx\frac{\phi_t^T(\tilde{v}_t^j-v_t^i)}{\alpha_{v,t}\Vert\phi_t\Vert^2},\qquad r_{t+1}^j-f_t^T\lambda_t^j\approx\frac{f^T(\tilde\lambda_t^j-\lambda_t^i)}{\alpha_{\lambda,t}\Vert f_t\Vert^2}.\label{estimated_errors}
\end{align}
We note that the signals on the RHS are available to agent $i$, which makes the projection-based approximation feasible. In Lemma~\ref{lem:error estimation}, we prove that the approximation technique in~\eqref{estimated_errors} becomes exact once the agents reach consensus on the parameter values.
\begin{lemma}\label{lem:error estimation}
Suppose that agent $i$ reaches consensus on the critic and team-average reward function parameters with its neighbors, i.e., $x_t^i=x_t^j$ for $x\in\{v,\lambda\}$ and all $j\in\mathcal{N}_{in,t}^i$. Then, the agent can exactly evaluate estimation errors $r_{t+1}^j-f_t^T\lambda_t^j$ and $r_{t+1}^j+\gamma \phi_{t+1}^Tv_t^j-\phi_t^Tv_t^j$.
\end{lemma}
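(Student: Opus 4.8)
The plan is to verify both identities by direct substitution of the local stochastic updates in~\eqref{stochastic updates}, exploiting the fact that each local update is a rank-one increment along the shared feature vector. I would handle the team-average reward case and the critic case in parallel, since they are structurally identical.

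First I would write out neighbor $j$'s local update for $\lambda$: from~\eqref{stochastic updates}, $\tilde\lambda_t^j - \lambda_t^j = \alpha_{\lambda,t}\big(r_{t+1}^j - f_t^T\lambda_t^j\big) f_t$, so the increment produced by agent $j$ is precisely its scalar estimation error, scaled by $\alpha_{\lambda,t}$ and directed along $f_t$. The quantity that agent $i$ actually forms in~\eqref{estimated_errors}, however, uses agent $i$'s own parameter $\lambda_t^i$ in place of $\lambda_t^j$, since agent $i$ never observes $\lambda_t^j$. Decomposing $\tilde\lambda_t^j - \lambda_t^i = (\tilde\lambda_t^j - \lambda_t^j) + (\lambda_t^j - \lambda_t^i)$ isolates the entire source of the approximation error as the consensus gap $\lambda_t^j - \lambda_t^i$.

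Next I would invoke the hypothesis $\lambda_t^i = \lambda_t^j$, which makes the consensus gap vanish, so that $\tilde\lambda_t^j - \lambda_t^i = \alpha_{\lambda,t}\big(r_{t+1}^j - f_t^T\lambda_t^j\big) f_t$ exactly. Left-multiplying by $f_t^T$, using $f_t^T f_t = \Vert f_t\Vert^2$, and dividing by $\alpha_{\lambda,t}\Vert f_t\Vert^2$ then recovers $r_{t+1}^j - f_t^T\lambda_t^j$ exactly, turning the $\approx$ in~\eqref{estimated_errors} into an equality. The critic identity follows by the identical three steps applied to $\tilde v_t^j - v_t^j = \alpha_{v,t}\big(r_{t+1}^j + \gamma\phi_{t+1}^T v_t^j - \phi_t^T v_t^j\big)\phi_t$, with the hypothesis $v_t^i = v_t^j$ removing the gap and projection onto $\phi_t$ extracting the scalar error.

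The computation itself is routine; the only genuinely substantive point — which is really the content of the lemma — is the decomposition in the second step that pins the whole approximation error on the consensus gap $x_t^j - x_t^i$ for $x\in\{v,\lambda\}$. This simultaneously explains why the scalar projection is exact under consensus and why, away from consensus, each cooperative agent's estimate of a neighbor's error is biased by exactly that gap. A minor technical caveat I would flag is that the division requires $\phi_t \neq 0$ and $f_t \neq 0$; under Assumption~\ref{as:linear_approx} the feature vectors are uniformly bounded and taken to be nonzero along sampled trajectories, so the quotients are well defined.
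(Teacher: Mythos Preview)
Your proposal is correct and follows essentially the same route as the paper: start from the local stochastic updates in~\eqref{stochastic updates}, use the consensus hypothesis $x_t^i=x_t^j$ to replace $\lambda_t^j$ (resp.\ $v_t^j$) by $\lambda_t^i$ (resp.\ $v_t^i$), and then project onto $f_t$ (resp.\ $\phi_t$) to extract the scalar error. Your explicit decomposition $\tilde\lambda_t^j-\lambda_t^i=(\tilde\lambda_t^j-\lambda_t^j)+(\lambda_t^j-\lambda_t^i)$ and the remark on nonzero feature vectors add a bit of clarity beyond the paper's terse argument, but the underlying computation is identical.
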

\begin{proof}
We manipulate the neighbor updates in \eqref{stochastic updates} and apply scalar projection into their respective feature vectors $\phi_t$ and $f_t$. We obtain $r_{t+1}^j+\gamma \phi_{t+1}^Tv_t^j-\phi_t^Tv_t^j=\frac{\phi_t^T(\tilde{v}_t^j-v_t^i)}{\alpha_{v,t}\Vert\phi_t\Vert^2}$ and $r_{t+1}^j-f_t^T\lambda_t^j=\frac{f_t^T(\tilde\lambda_t^j-\lambda_t^i)}{\alpha_{\lambda,t}\Vert f_t\Vert^2}$, where we used the fact that $x_t^i=x_t^j$ for $x\in\{v,\lambda\}$. Therefore, agent $i$ evaluates the estimation errors exactly.
\end{proof}
Lemma~\ref{lem:error estimation} validates our intuition that the estimation of the neighbors' estimation errors proposed in \eqref{estimated_errors} is meaningful. As we noted before, an important part of decentralized cooperative learning is the diffusion of local information across the network, which is facilitated by communicating data to neighbors and aggregating data at each agent. In contrast to the vector aggregation via consensus updates in \eqref{v_update2} and \eqref{lambda_update2}, the aggregation in the projection-based consensus method is done via consensus updates over the estimated neighbors' estimation errors that take scalar values. The method is incorporated in the parameter updates of the projection-based consensus AC algorithm (Algorithm~\ref{alg:1}) that have the following structure
\begin{align}
\tilde{v}_t^i&=v_t^i+\alpha_{v,t}\cdot\delta_{v,t}^i\cdot\nabla_vV(s_t;v_t^i),
&\tilde\lambda_{t}^i&=\lambda_t^i+\alpha_{v,t}\cdot\delta_{\lambda,t}^i\cdot\nabla_{\lambda}\bar{r}_{t+1}(\lambda_t^i), \nonumber\\
\epsilon_{v,t}^{ij}&=\frac{\phi_t^T(\tilde{v}_t^j-v_t^i)}{\alpha_{v,t}\Vert\phi_t\Vert^2},
&\epsilon_{\lambda,t}^{ij}&=\frac{f^T(\tilde\lambda_t^j-\lambda_t^i)}{\alpha_{v,t}\Vert f_t\Vert^2},\nonumber\\
 \epsilon_{v,t}^i&=\sum_{j\in\mathcal{N}_{in,t}^i} c_t(i,j)\cdot\epsilon_{v,t}^{ij},
&\epsilon_{\lambda,t}^i&=\sum_{j\in\mathcal{N}_{in,t}^i} c_t(i,j)\cdot\epsilon_{\lambda,t}^{ij},\nonumber\\
v_{t+1}^i&=v_t^i+\alpha_{v,t}\cdot\epsilon_{v,t}^i\cdot\nabla_vV(s_t;v_t^i),
&\lambda_{t+1}^i&=\lambda_t^i+\alpha_{v,t}\cdot\epsilon_{\lambda,t}^i\cdot\nabla_{\lambda}\bar{r}_{t+1}(\lambda_t^i).\label{alg1_updates}
\end{align}

\begin{figure}[t]
\begin{subfigure}{.49\textwidth}
  \centering
\includegraphics[width=1\linewidth]{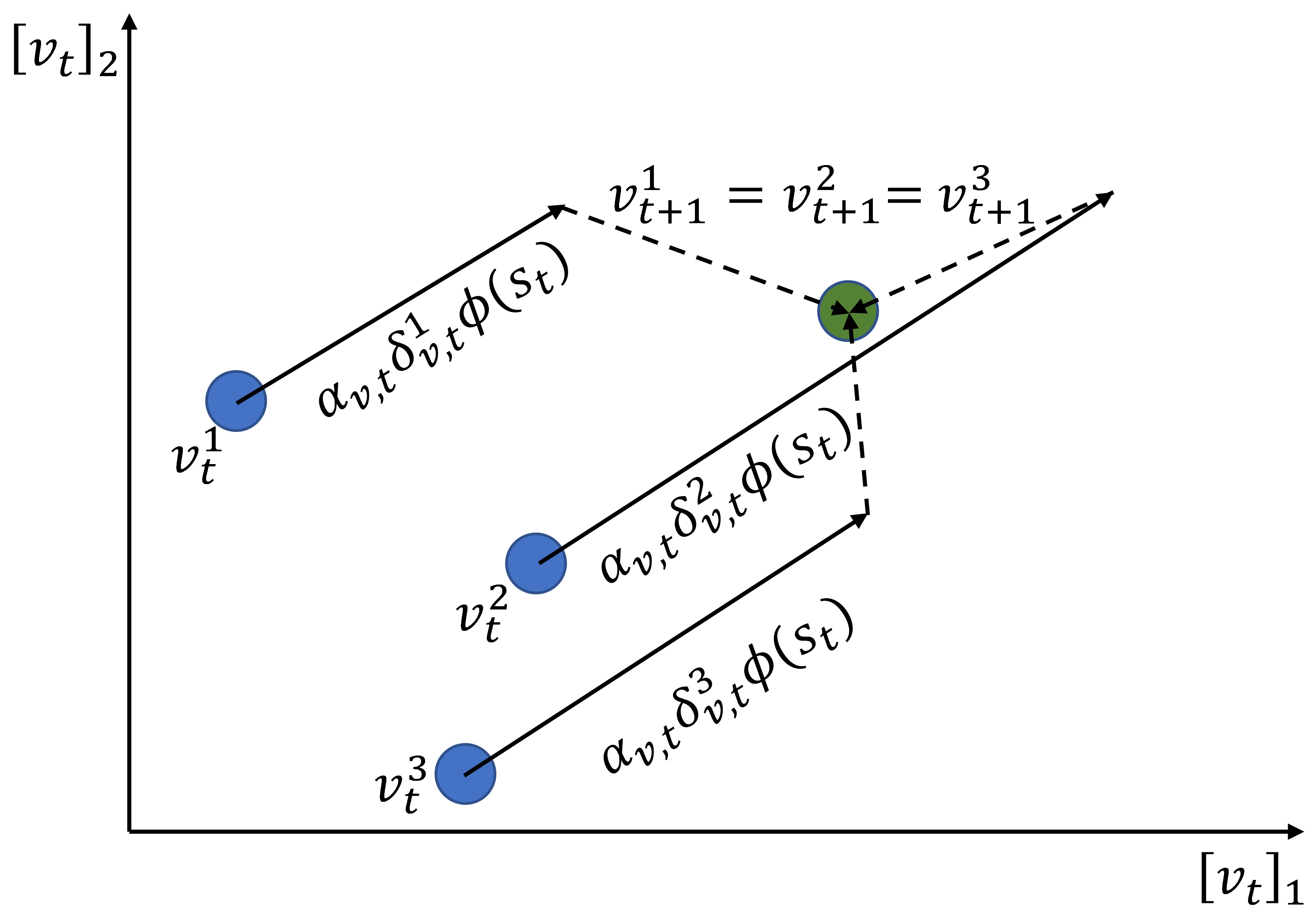}
  \caption{Consensus-based AC updates}
  \label{fig:proj_free_consensus}
\end{subfigure}
\begin{subfigure}{.49\textwidth}
  \centering
\includegraphics[width=1\linewidth]{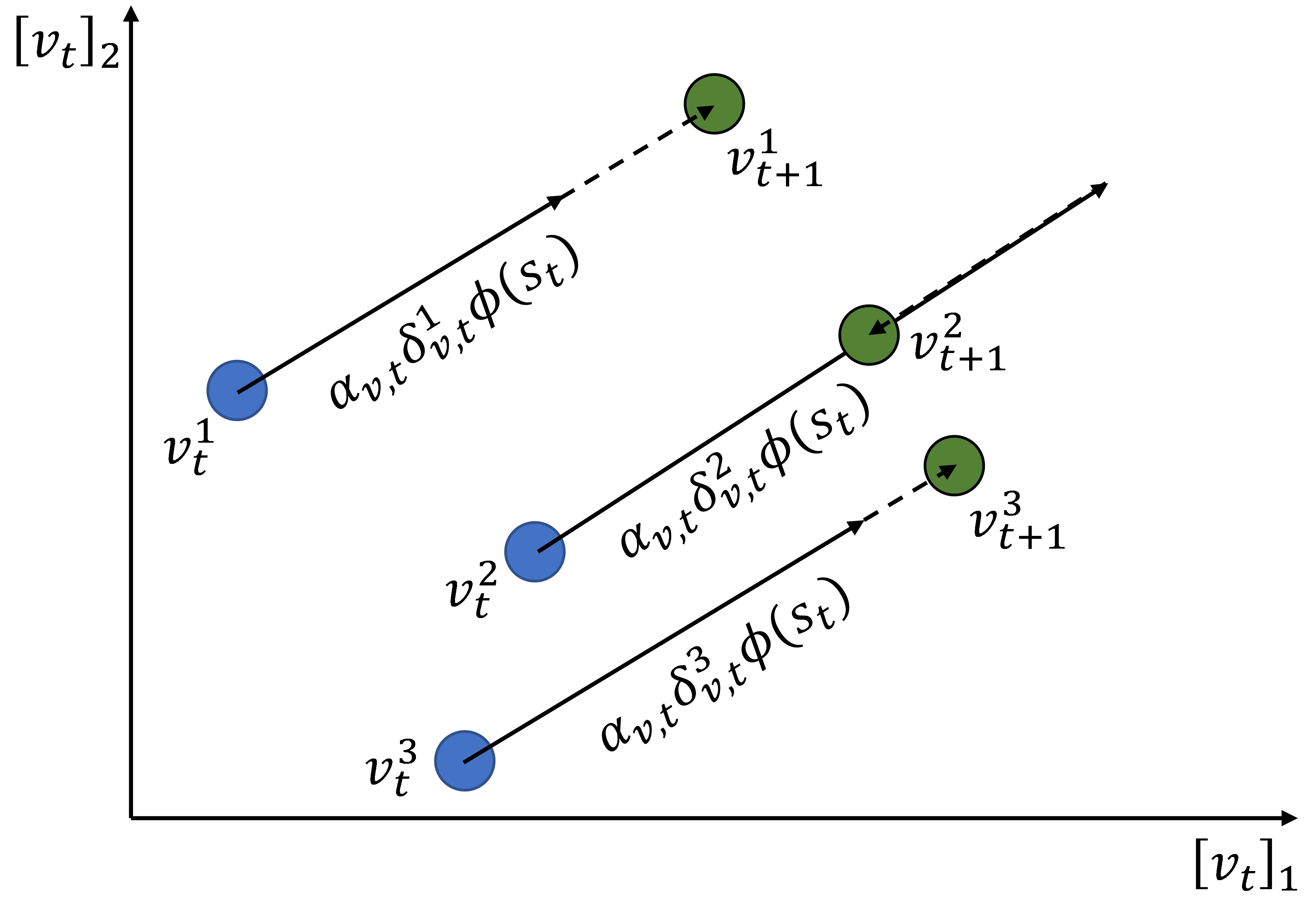}
  \caption{Projection-based consensus AC updates}
  \label{fig:proj_based_consensus}
\end{subfigure}
\caption{Critic updates of the consensus-based AC and the projection-based consensus AC algorithm in a 2-D parameter space. In this example, we consider that the agents communicate on a complete graph and compute a simple average of the received critic parameter values. The stochastic updates are depicted by solid lines while the consensus updates are represented by dashed lines. The updated parameter values are shown in green color.}
\label{fig:consensus_comparison}
\end{figure}
\noindent
The agents perform a stochastic update using their local reward $r_{t+1}^i$ and exchange the updated parameters $\tilde{v}_t^i$ and $\tilde\lambda_t^i$ over the communication graph. Then, they estimate the average estimation errors $\epsilon_{v,t}^i$ and $\epsilon_{\lambda,t}^i$ through the projection-based consensus update and apply the average estimation errors in the parameter updates that yield new values $v_{t+1}^i$ and $\lambda_{t+1}^i$. In Fig.~\ref{fig:consensus_comparison}, we provide a comparison between the consensus-based AC updates that are given in \eqref{v_update2} and \eqref{lambda_update2}, and the projection-based consensus AC updates that are given in \eqref{alg1_updates}. We note that the projection-based consensus AC algorithm performs more conservative updates compared to the consensus-based AC algorithm. The pseudo-code for the projection-based consensus AC algorithm is presented in Algorithm~\ref{alg:1}. In the following lines, we make some standard assumptions that apply to the algorithm.

\begin{assumption}\label{as:policy_coop}
The policy of cooperative agent $i$, $\pi^i(a^i|s;\theta^i)$, is approximated by a generalized linear model. It is stochastic, i.e., $\pi^i(a^i|s;\theta^i)>0$ for any $i\in\mathcal{N}^+$, $\theta^i\in\Theta^i$, $s\in\mathcal{S}$, $a^i\in\mathcal{A}^i$, and continuously differentiable in $\theta^i$. We define $p_\pi(s^\prime|s)=\sum_{a\in\mathcal{A}}p(s^\prime|s,a)\pi(a|s;\theta)$ and let $P_\pi=\begin{bmatrix} p_\pi(s^\prime|s),s^\prime\in\mathcal{S},s\in\mathcal{S}\end{bmatrix}\in\mathbb{R}^{|\mathcal{S}|\times|\mathcal{S}|}$ denote the state transition matrix of the Markov chain $\{s_t\}_{t\geq 0}$ induced by policy $\pi(a|s;\theta)$. The Markov chain $\{s_t\}_{t\geq 0}$ is irreducible and aperiodic under any $\pi(a|s;\theta)$.
\end{assumption}

\begin{assumption}\label{as:reward_bound}
The reward $r^i(s,a)$ is uniformly bounded for any $i\in\mathcal{N}^+$.
\end{assumption}

\begin{algorithm}[t]
	\SetAlgoLined
	 \textbf{Initialize} $s_0,\{\alpha_{v,t}\}_{t\geq 0},,\{\alpha_{\lambda,t}\}_{t\geq 0},\{\alpha_{\theta,t}\}_{t\geq 0}, t\leftarrow 0, \theta_0^i,\lambda^i_0,\tilde\lambda^i_0,v_0^i,\tilde{v}^i_0$, $\forall i\in\mathcal{N}$\\
	 \textbf{Take} action $a_0\sim\pi(a_0|s_0;\theta_0)$\;
	 \textbf{Repeat until convergence} \\
 	 \For{$i\in\mathcal{N}$}{
  		\textbf{Observe} state $s_{t+1}$, action $a_t$, and reward $r_{t+1}^i$\\
  		\textbf{Update actor}\\
  		$\delta_t^i\leftarrow \bar{r}_{t+1}(\lambda_t^i)+\gamma V(s_{t+1};v_t^i)-V(s_t;v_t^i)$\\
  		$\psi_t^i\leftarrow\nabla_{\theta^i}\log\pi^i(a_t^i|s_t^i;\theta_t^i)$\\
  		$\theta_{t+1}^i\leftarrow\theta_t^i+\alpha_{\theta,t}\delta_t^i\psi_t^i$\\
  		\textbf{Update critic and team reward function}\\
  		$\tilde v_{t}^i\leftarrow v_t^i+\alpha_{v,t}\big(r_{t+1}^i+\gamma V(s_{t+1};v_t^i)-V(s_t;v_t^i)\big)\nabla_v V(s_t;v_t^i)$\\
  		$\tilde\lambda_{t}^i\leftarrow \lambda_t^i+\alpha_{\lambda,t}\big(r_{t+1}^i-\bar{r}_{t+1}(\lambda_t^i)\big)\nabla_{\lambda}\bar{r}_{t+1}(\lambda_t^i)$\\
  		\textbf{Send} $\tilde\lambda_t^i,\tilde{v}_t^i$ to $j\in\mathcal{N}_{out,t}^i$\\
  		}
  		\For{$i\in\mathcal{N}$}{
  		\textbf{Receive} $\tilde\lambda_t^j,\tilde{v}_t^j$ from $j\in\mathcal{N}_{in,t}^i$\\
  		\textbf{Projection-based consensus step}\\
  		$\epsilon_{v,t}^{ij}\leftarrow\frac{\phi_t^T(\tilde{v}_t^j-v_t^i)}{\alpha_{v,t}\Vert\phi_t\Vert^2}$, $\epsilon_{\lambda,t}^{ij}\leftarrow\frac{f^T(\tilde\lambda_t^j-\lambda_t^i)}{\alpha_{\lambda,t}\Vert f_t\Vert^2}$ for $j\in\mathcal{N}_{in,t}^i$\\
  		$ \epsilon_{v,t}^i\leftarrow\sum_{j\in\mathcal{N}_{in,t}^i} c_t(i,j)\cdot\epsilon_{v,t}^{ij}$, 
  		$\epsilon_{\lambda,t}^i\leftarrow\sum_{j\in\mathcal{N}_{in,t}^i} c_t(i,j)\cdot\epsilon_{\lambda,t}^{ij}$\\
		$v_{t+1}^i\leftarrow v_t^i+\alpha_{v,t}\cdot\epsilon_{v,t}^i\cdot\nabla_vV(s_t;v_t^i)$\\
		$\lambda_{t+1}^i\leftarrow\lambda_t^i+\alpha_{\lambda,t}\cdot\epsilon_{\lambda,t}^i\cdot\nabla_{\lambda}\bar{r}_{t+1}(s_t,a_t;\lambda_t^i)$\\
  		\textbf{Take} action $a_{t+1}^i\sim\pi^i(a_{t+1}^i|s_{t+1};\theta_{t+1}^i)$\;
  		 }
  	\textbf{Update} iteration counter $t\leftarrow t+1$
 \caption{Projection-based consensus actor-critic with linear approximation}
 	\label{alg:1}
\end{algorithm}

\begin{assumption}\label{as:step_size}
The step sizes $\alpha_{x,t}$, $x\in\{v,\lambda,\theta\}$, are positive and satisfy $\sum_t\alpha_{x,t}=\infty$, $\sum_t \alpha^2_{x,t}<\infty$, $\alpha_{\theta,t}=o(\alpha_{v,t}+\alpha_{\lambda,t})$, and $\lim_{t\rightarrow\infty}\alpha_{x,t+1}\alpha_{x,t}^{-1}=1$.
\end{assumption}

\begin{assumption}\label{as:policy_updates}
The update of the actor parameters $\theta_t^i$, $i\in\mathcal{N}^+$, includes a projection operator $\Psi_{\Theta^i}:\mathbb{R}^{m_i}\rightarrow\Theta^i\subset\mathbb{R}^{m_i}$, where $\Theta^i$ is a compact set defined as a hyperrectangle. 
\end{assumption}

\begin{assumption}\label{as:consensus_matrix}
The sequence of stochastic matrices $\{C_t\}_{t\geq 0}\in\mathbb{R}^{N\times N}$ is conditionally independent of all signals given any state-action pair $(s,a)$. The consensus matrix $C_t$ adheres to the communication graph $\mathcal{G}_t$, i.e., we have $[C_t]_{ij}=c_t(i,j)\geq\nu$ for some $\nu>0$ and every $(i,j)\in\mathcal{G}_t$. The mean communication graph $\mathbb{E}(\mathcal{G}_t|s_t,a_t)$ is connected and the mean consensus matrix is column stochastic, i.e., $\mathbf{1}^T\mathbb{E}(C_t|s_t,a_t)=\mathbf{1}^T$.
\end{assumption}
The assumptions above are analogous to \citep{zhang2018}. We provide several comments to motivate their use in the analysis.
\begin{remark}
By Assumption~\ref{as:step_size}, we can analyze convergence of Algorithm~\ref{alg:1} on separate timescales because the updates of the actor are slower than the updates of the critic and team-average reward function. This is a reasonable assumption as it is a common practice to perform multiple updates in the critic and team-average reward function (policy evaluation) before an actor update (policy improvement). The inclusion of the projection operator in the actor updates, as presented in Assumption~\ref{as:policy_updates}, is considered in the convergence analysis of RL algorithms but it is typically omitted in the implementation. Assumption~\ref{as:consensus_matrix} states that the consensus matrix has a well-defined mean value and the updates are balanced over the network on average for all visited state-action pairs. The nonzero consensus weights are assumed to be lower-bounded by a positive constant to ensure that the agents' values contract to a consensus value.
\hfill $\Box$
\end{remark}
The projection-based consensus AC algorithm provides a backbone for the resilient consensus AC algorithms in the following subsections. From here on, we will drop Assumption~\ref{as:consensus_matrix} and focus on the design of a resilient projection-based consensus update, whereby the agents can withstand Byzantine attacks on the critic parameters $v$ and team-average reward function parameters $\lambda$.

\subsection{Resilient Projection-based Consensus Actor-critic with Linear Approximation}
In the previous section, we introduced the projection-based consensus method to minimize the effect of disturbances that enter the consensus updates in the adversary-free setting. While the method does not provide resilience guarantees, it does present a convenient framework that can be further augmented to provide resilience to Byzantine attacks. We focus on this functionality and define a resilient projection-based consensus AC algorithm in this subsection.\par
To design a defense mechanism against Byzantine attacks presented in Definition~\ref{def:byzantine}, we adopt the basic idea of resilient consensus from W-MSR algorithms, a popular class of resilient consensus algorithms that apply a weighted trimmed mean. The acronym W-MSR stands for Weighted Mean-Subsequence-Reduced which describes a strategy for consensus updates whereby each agent \textit{reduces} scalar values received from its neighbors and, \textit{subsequently}, computes a \textit{weighted mean} of the remaining values. By eliminating the most extreme values at every step, the final agreed value among agents is guaranteed to lie within a convex hull of non-faulty agents if the network is sufficiently robust \citep{leblanc2013}.\par
We apply the W-MSR consensus method over the estimated neighbors' estimation errors. Each agent forms lists of sorted values $\{\epsilon_{v,t}^{ij}\}_{j\in\mathcal{N}_{in,t}^i}$ and $\{\epsilon_{\lambda,t}^{ij}\}_{j\in\mathcal{N}_{in,t}^i}$, and removes $H$ largest values and $H$ smallest values from each set, except for values that are smaller and larger than the value of the agent, respectively. Figure~\ref{fig:ex2} demonstrates that the estimation under the resilient projection-based consensus method performs better than under the method of trimmed means in the presence of an adversary at least in the simple problem introduced in Example~\ref{ex:1}.
\begin{figure}[t]
  \centering
\includegraphics[width=0.5\linewidth]{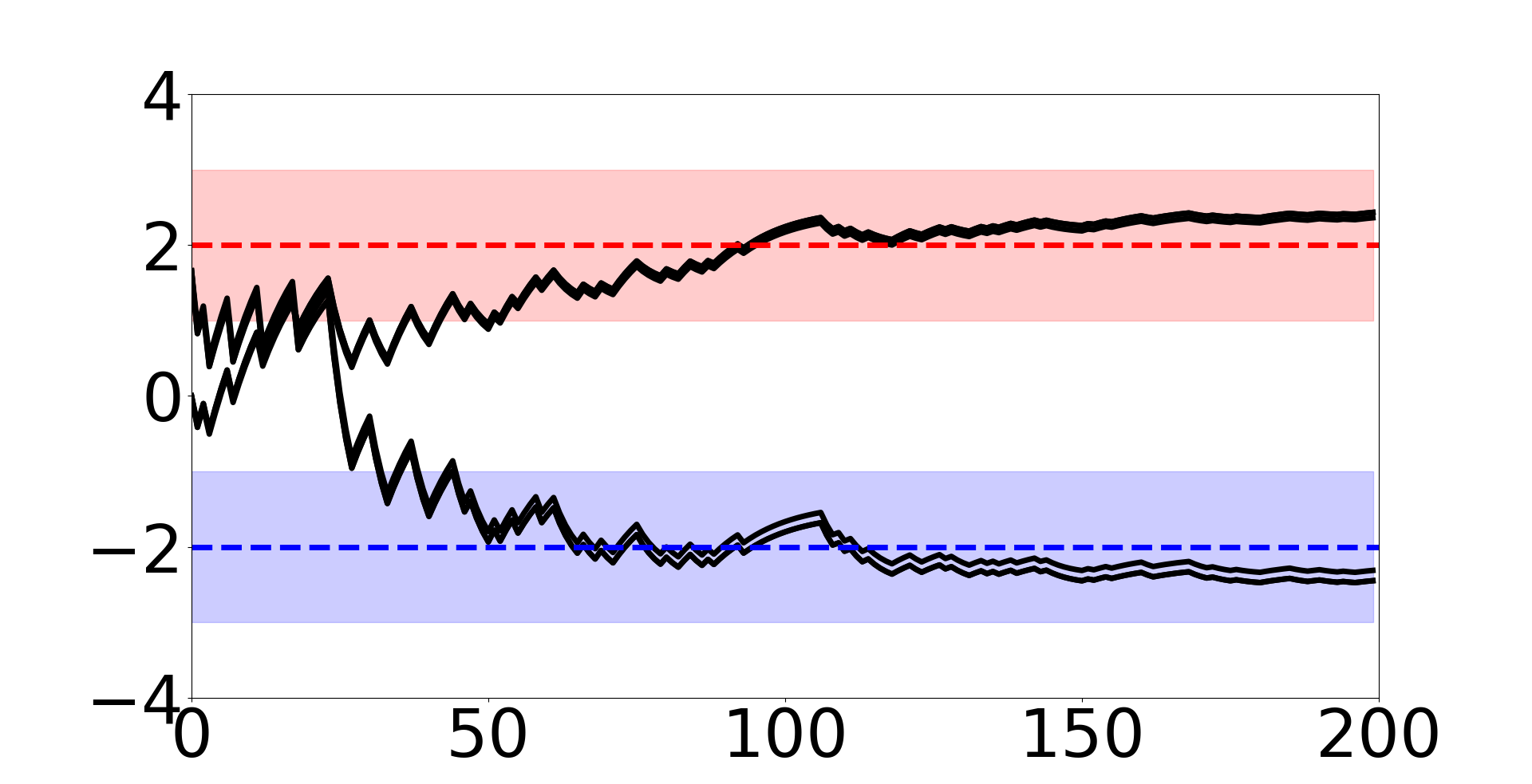}
\caption{Team-average reward function estimation in Example~\ref{ex:1} with the resilient projection-based consensus method. The estimated values $\bar{r}(s;\lambda^i)$ for $s\in\{0,1\}$ are shown in black color. The true team-average reward for $s\in\{0,1\}$ is depicted by the red and the blue dashed line, respectively. The shaded regions correspond to the convex hull of $r^i(s)$ for $i\in\mathcal{N}^+$.}
\label{fig:ex2}
\end{figure}
Note that the resilient projection-based consensus method does not suffer overestimation of the approximated functions because the Byzantine agents can no longer directly manipulate individual parameters in $v_t^j$ and $\lambda_t^j$. This limitation of the influence of Byzantine agents is due to the projection of $\tilde{v}_t^j$ and $\tilde\lambda_t^j$. We note that the resilient projection-based consensus method does not require \textit{network robustness} to scale with the dimension of transmitted data, i.e., we do not have to strengthen graph connectivity and increase the percentage of cooperative agents in the network to ensure resilience. In the following lines, we present the definition of \textit{reachable sets} and \textit{network robustness} and two associated lemmas.

\begin{definition}[$\zeta$-reachable sets \citep{leblanc2013}]
Given a directed graph $\mathcal{G}_t$ and a nonempty subset of nodes $\mathcal{Z}\subset\mathcal{N}$, we say that $\mathcal{Z}$ is an $\zeta$-reachable set if there exists $i\in\mathcal{Z}$ such that $|\mathcal{N}_t^i\backslash\mathcal{Z}|\geq \zeta$, where $\zeta\in\mathbb{Z}_{\geq 0}$.
\end{definition}

\begin{definition}[$\zeta$-robustness \citep{leblanc2013}]
A directed graph $\mathcal{G}_t$ on $|\mathcal{N}|$ nodes ($|\mathcal{N}| \geq 2$) is $\zeta$-robust, with $\zeta\in\mathbb{Z}_{\geq 0}$, if for every pair of nonempty, disjoint subsets of $\mathcal{N}$, at least one of the subsets is $\zeta$-reachable.
\end{definition}

\begin{lemma}[Network robustness after edge removal \citep{leblanc2013}]\label{lem:network_robustness}
Given a $\zeta$-robust directed graph $\mathcal{G}_t$, we let $\mathcal{G}_t^\prime$ denote the directed graph produced by removing up to $k$ received values. It follows that $\mathcal{G}_t^\prime$ is $(\zeta-k)$-robust.
\end{lemma}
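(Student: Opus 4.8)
The plan is to argue directly from the definitions of $\zeta$-reachability and $\zeta$-robustness, interpreting the operation ``removing up to $k$ received values'' as each node deleting at most $k$ of its incoming edges. This is exactly what the W-MSR trimming does: discarding a received value is equivalent to ignoring the corresponding in-edge. Thus $\mathcal{G}_t^\prime$ is the graph obtained from $\mathcal{G}_t$ by replacing each in-neighbor set $\mathcal{N}_t^i$ with a subset $\mathcal{N}_t^{\prime i}\subseteq\mathcal{N}_t^i$ satisfying $|\mathcal{N}_t^i\setminus\mathcal{N}_t^{\prime i}|\leq k$.

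First I would fix an arbitrary pair of nonempty, disjoint subsets $\mathcal{Z}_1,\mathcal{Z}_2\subset\mathcal{N}$ and invoke the $\zeta$-robustness of $\mathcal{G}_t$ to conclude that at least one of them, say $\mathcal{Z}_1$ without loss of generality, is $\zeta$-reachable in $\mathcal{G}_t$. By definition this yields a node $i\in\mathcal{Z}_1$ with $|\mathcal{N}_t^i\setminus\mathcal{Z}_1|\geq\zeta$. The goal is then to show that this \emph{same} node $i$ witnesses $(\zeta-k)$-reachability of $\mathcal{Z}_1$ in $\mathcal{G}_t^\prime$.

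The key step is a worst-case counting bound on how many of the external in-neighbors of $i$ can be lost to edge removal. Since $\mathcal{N}_t^{\prime i}\subseteq\mathcal{N}_t^i$ and at most $k$ in-edges at $i$ are deleted, passing to $\mathcal{G}_t^\prime$ can shrink $\mathcal{N}_t^i\setminus\mathcal{Z}_1$ by at most $k$ elements, so $|\mathcal{N}_t^{\prime i}\setminus\mathcal{Z}_1|\geq|\mathcal{N}_t^i\setminus\mathcal{Z}_1|-k\geq\zeta-k$. Hence $\mathcal{Z}_1$ is $(\zeta-k)$-reachable in $\mathcal{G}_t^\prime$. Since the pair $\mathcal{Z}_1,\mathcal{Z}_2$ was arbitrary, every such pair has at least one $(\zeta-k)$-reachable member in $\mathcal{G}_t^\prime$, which is precisely the statement that $\mathcal{G}_t^\prime$ is $(\zeta-k)$-robust.

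This argument is essentially bookkeeping, so I do not anticipate a genuine technical obstacle. The only point requiring care is the modeling step: confirming that the trimming corresponds to deleting at most $k$ in-edges \emph{per node} rather than edges globally, and that the witness node $i$ from the original graph can be reused verbatim. The counting inequality then follows because, in the worst case, all $k$ deleted edges point to nodes outside $\mathcal{Z}_1$, and this is exactly the loss absorbed by passing from $\zeta$ to $\zeta-k$.
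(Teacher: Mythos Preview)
Your argument is correct and is the standard definitional proof of this fact. Note that the paper does not supply its own proof of this lemma---it is quoted as a cited result from \cite{leblanc2013}---so there is nothing further to compare against; your counting argument (the witness node $i$ from $\zeta$-reachability loses at most $k$ external in-neighbors under per-node trimming) is exactly how the result is established in that reference.
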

By the definition of \textit{network robustness}, we can split the agents into any two groups. For a network that applies trimming (edge removal), there is always at least one agent that receives values from $\zeta-k$ agents in the other group. This leads to the notion of connectivity between agents, which is stipulated in the following lemma.
\begin{lemma}[Connectivity of robust graphs \citep{leblanc2013}]\label{lem:connectivity}
Suppose $\mathcal{G}_t$ is a $\zeta$-robust directed graph, with $0\leq\zeta\leq|\mathcal{N}|/2$. Then, $\mathcal{G}_t$ is at least $\zeta$-connected.
\end{lemma}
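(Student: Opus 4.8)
The plan is to argue by contraposition: I will assume that $\mathcal{G}_t$ is \emph{not} $\zeta$-connected and produce a pair of disjoint, nonempty node sets neither of which is $\zeta$-reachable, thereby violating $\zeta$-robustness. The cases $\zeta=0$ and $|\mathcal{N}|<2$ are trivial, so I take $\zeta\geq1$. By the standard characterization of vertex connectivity (Menger's theorem for digraphs), failure of $\zeta$-connectivity means there is a separating set $\mathcal{K}\subset\mathcal{N}$ with $|\mathcal{K}|\leq\zeta-1$ whose removal destroys strong connectivity: in $\mathcal{G}_t'=\mathcal{G}_t\setminus\mathcal{K}$ there exist nodes $u,v$ with no directed path from $u$ to $v$.

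The key object is a reachability-closed set. Let $\mathcal{Z}\subseteq\mathcal{N}\setminus\mathcal{K}$ be the set of nodes that can reach $v$ in $\mathcal{G}_t'$, so $v\in\mathcal{Z}$ and $u\notin\mathcal{Z}$. If $w\to z$ is an edge with $z\in\mathcal{Z}$, then $w$ also reaches $v$, hence $w\in\mathcal{Z}$; thus $\mathcal{Z}$ receives no edges from $(\mathcal{N}\setminus\mathcal{K})\setminus\mathcal{Z}$. Consequently, every in-neighbor of a node of $\mathcal{Z}$ lying outside $\mathcal{Z}$ must belong to $\mathcal{K}$; that is, for each $i\in\mathcal{Z}$ we have $\mathcal{N}_t^i\setminus\mathcal{Z}\subseteq\mathcal{K}$, so $|\mathcal{N}_t^i\setminus\mathcal{Z}|\leq|\mathcal{K}|\leq\zeta-1<\zeta$. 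Hence $\mathcal{Z}$ is \emph{not} $\zeta$-reachable. Applying $\zeta$-robustness to the disjoint nonempty pair $\mathcal{Z}$ and $\mathcal{N}\setminus\mathcal{Z}$ forces $\mathcal{N}\setminus\mathcal{Z}$ to be $\zeta$-reachable, i.e. some node $w\notin\mathcal{Z}$ has at least $\zeta$ in-neighbors inside $\mathcal{Z}$. Note that the core mechanism is immediate: an in-closed set can be fed only through the separator, so $\zeta$ external in-neighbors would force $|\mathcal{K}|\geq\zeta$, a contradiction.

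The main obstacle is exactly this surviving case. Because $\zeta$-robustness constrains in-neighborhoods while the reachability closure controls the \emph{direction} of edges, robustness need not flag the "blocked" side $\mathcal{Z}$. This is where the hypothesis $\zeta\leq|\mathcal{N}|/2$ enters. First I would record the elementary consequence of $\zeta$-robustness that every node has in-degree at least $\zeta$: applying robustness to $(\{i\},\mathcal{N}\setminus\{i\})$, the latter set cannot be $\zeta$-reachable since a single node supplies at most one external in-neighbor, which is fewer than $\zeta$ once $\zeta\geq2$, so $\{i\}$ is $\zeta$-reachable. Then I would replace $\mathcal{Z}$ by a \emph{minimal} in-closed set, namely a source strongly connected component of the condensation of $\mathcal{G}_t'$, and argue that when it is small ($|\mathcal{Z}|<\zeta$) the alternative "$\mathcal{N}\setminus\mathcal{Z}$ is $\zeta$-reachable" is impossible, as no node can have $\zeta$ in-neighbors inside a set of fewer than $\zeta$ nodes; this immediately yields the contradiction.

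The remaining situation, in which every source component already has at least $\zeta$ nodes, is where $\zeta\leq|\mathcal{N}|/2$ does real work: I would apply the same analysis symmetrically to a sink component (whose complement is in-closed relative to $\mathcal{N}\setminus\mathcal{K}$) and iterate the edge-removal bound of Lemma~\ref{lem:network_robustness}, peeling off flagged nodes until one of the two disjoint sets is forced to violate $\zeta$-reachability, using $|\mathcal{K}|<\zeta\leq|\mathcal{N}|/2$ to guarantee that the two reachability-closed sets cannot both be large. I expect the bookkeeping in this last step — tracking how the separator $\mathcal{K}$ interacts with the source/sink structure of the condensation as nodes are removed — to be the technically delicate part, whereas each individual robustness application reduces to the clean "external in-neighbors lie in $\mathcal{K}$" estimate established above.
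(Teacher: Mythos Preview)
The paper does not supply its own proof of this lemma; it is quoted from \cite{leblanc2013} and used as a black box, so there is no in-paper argument to compare your attempt against.

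Evaluated on its own merits, your contrapositive strategy is sound up to the point where you exhibit an in-closed set (a source strongly connected component $C$ of $\mathcal{G}_t\setminus\mathcal{K}$) that fails to be $\zeta$-reachable, and your treatment of the case $|C|<\zeta$ is clean. The gap is in the remaining case, a unique source component $C$ with $|C|\geq\zeta$. Your proposed fix---passing to a sink component $T$ and using that $(\mathcal{N}\setminus\mathcal{K})\setminus T$ is in-closed---does produce a second non-$\zeta$-reachable set, but that set \emph{contains} $C$ (since $C\neq T$ are distinct SCCs), so the two candidate sets are not disjoint and cannot be fed to the robustness definition as a witnessing pair. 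The subsequent appeal to Lemma~\ref{lem:network_robustness} is also off-target: that lemma concerns removal of incoming edges (``received values''), not vertices, and you do not explain how iterating it manufactures a disjoint pair or where the hypothesis $\zeta\leq|\mathcal{N}|/2$ actually bites. As written, the proposal does not close this case.

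To repair it you would need a construction that yields two \emph{disjoint} non-$\zeta$-reachable sets in the original graph. One route that stays close to your outline: when the condensation of $\mathcal{G}_t\setminus\mathcal{K}$ has two or more source components you are already done, so the only problematic case is a single source $C$; there you should look for a second in-closed set disjoint from $C$ rather than one containing it, or else argue directly (using $|\mathcal{K}|<\zeta\leq|\mathcal{N}|/2$) that a unique-source condensation is incompatible with $\zeta$-robustness. Your sketch gestures at this but does not carry it out.
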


\begin{algorithm}[t]
	\SetAlgoLined
	 \textbf{Initialize} $s_0,\{\alpha_{v,t}\}_{t\geq 0},\{\alpha_{\lambda,t}\}_{t\geq 0}\{\alpha_{\theta,t}\}_{t\geq 0}, t\leftarrow 0, \theta_0^i,\lambda^i_0,\tilde\lambda^i_0,v_0^i,\tilde{v}^i_0$, $H$, $\forall i\in\mathcal{N}^+$\\
	 \textbf{Take} action $a_0\sim\pi(a_0|s_0;\theta_0)$\;
	 \textbf{Repeat until convergence} \\
 	 \For{$i\in\mathcal{N}^+$}{
  		\textbf{Observe} state $s_{t+1}$, action $a_t$, and reward $r_{t+1}^i$\\
  		\textbf{Update actor}\\
  		$\delta_t^i\leftarrow \bar{r}_{t+1}(\lambda_t^i)+\gamma V(s_{t+1};v_t^i)-V(s_t;v_t^i)$\\
  		$\psi_t^i\leftarrow\nabla_{\theta^i}\log\pi^i(a_t^i|s_t^i;\theta_t^i)$\\
  		$\theta_{t+1}^i\leftarrow\theta_t^i+\alpha_{\theta,t}\delta_t^i\psi_t^i$\\
  		\textbf{Update critic and team reward function}\\
  		$\tilde v_{t}^i\leftarrow v_t^i+\alpha_{v,t}\big(r_{t+1}^i+\gamma V(s_{t+1};v_t^i)-V(s_t;v_t^i)\big)\nabla_v V(s_t;v_t^i)$\\
  		$\tilde\lambda_{t}^i\leftarrow \lambda_t^i+\alpha_{\lambda,t}\big(r_{t+1}^i-\bar{r}_{t+1}(\lambda_t^i)\big)\nabla_{\lambda}\bar{r}_{t+1}(\lambda_t^i)$\\
  		\textbf{Send} $\tilde\lambda_t^i,\tilde{v}_t^i$ to $j\in\mathcal{N}_{out,t}^i$\\
  		}
  		\For{$i\in\mathcal{N}^+$}{
  		\textbf{Receive} $\tilde\lambda_t^j,\tilde{v}_t^j$ from $j\in\mathcal{N}_{in,t}^i$\\
  		\textbf{Resilient projection-based consensus step}\\
  		$\epsilon_{v,t}^{ij}\leftarrow\frac{\phi_t^T(\tilde{v}_t^j-v_t^i)}{\alpha_{v,t}\Vert\phi_t\Vert^2}$ for $j\in\mathcal{N}_{in,t}^i$, 
  		$\epsilon_{\lambda,t}^{ij}\leftarrow\frac{f_t^T(\tilde\lambda_t^j-\lambda_t^i)}{\alpha_{\lambda,t}\Vert f_t\Vert^2}$ for $j\in\mathcal{N}_{in,t}^i$\\
  		$\mathcal{N}_{v,t}^i\leftarrow$ remove $H$ smallest values that are smaller than and $H$ largest values that are larger than $\epsilon_{v,t}^{ii}$ from the set $\{\epsilon_{v,t}^{ij}\}_{j\in\mathcal{N}_{in,t}^i}$, return the remaining indices\\
  		  		$\mathcal{N}_{\lambda,t}^i\leftarrow$ remove $H$ smallest values that are smaller than and $H$ largest values that are larger than $\epsilon_{\lambda,t}^{ii}$ from the set $\{\epsilon_{\lambda,t}^{ij}\}_{j\in\mathcal{N}_{in,t}^i}$, return the remaining indices\\
  		$ \epsilon_{v,t}^i\leftarrow\sum_{j\in\mathcal{N}_{v,t}^i} c_{v,t}(i,j)\cdot\epsilon_{v,t}^{ij}$,  		$\epsilon_{\lambda,t}^i\leftarrow\sum_{j\in\mathcal{N}_{\lambda,t}^i} c_{\lambda,t}(i,j)\cdot\epsilon_{\lambda,t}^{ij}$\\
		$v_{t+1}^i\leftarrow v_t^i+\alpha_{v,t}\cdot\epsilon_{v,t}^i\cdot\nabla_vV(s_t;v_t^i)$\\
		$\lambda_{t+1}^i\leftarrow\lambda_t^i+\alpha_{\lambda,t}\cdot\epsilon_{\lambda,t}^i\cdot\nabla_{\lambda}\bar{r}_{t+1}(s_t,a_t;\lambda_t^i)$\\
  		\textbf{Take} action $a_{t+1}^i\sim\pi^i(a_{t+1}^i|s_{t+1};\theta_{t+1}^i)$\;
  		 }
  	\textbf{Update} iteration counter $t\leftarrow t+1$
 \caption{Resilient projection-based consensus actor-critic with linear approximation}
 	\label{alg:2}
\end{algorithm}
\noindent
The statements provided in Lemma~\ref{lem:network_robustness} and Lemma~\ref{lem:connectivity} establish that the network of agents remains connected despite each agent reducing the set of neighbors whose parameters are averaged in its local consensus updates. Moreover, the trimming approach ensures that the weighted means $\epsilon_{v,t}^{i}$ and $\epsilon_{\lambda,t}^{i}$ are bounded by the minimum and maximum values in the set of cooperative neighbors of agent $i$ as long as the following two assumptions hold.
\begin{assumption}\label{as:byzantine_agents}
There are at most $H$ Byzantine agents in the network.
\end{assumption}
\begin{assumption}\label{as:network_robustness}
The network is $(2H+1)$-robust.
\end{assumption}
These assumptions are standard in the consensus literature. We note that the resilient projection-based method is efficient even with a relatively high proportion of Byzantine agents in the network, i.e., it does not run into the curse of dimensionality unlike the exact vector aggregation in \citep{vaidya2014}.  In the following lines, we present assumptions on the consensus updates in the resilient projection-based consensus AC algorithm with linear function approximation and the policies of the Byzantine agents.
\begin{assumption}\label{as:consensus_matrix2}
For $x\in\{v,\lambda\}$, the consensus matrix $C_{x,t}$, is a row stochastic matrix that satisfies $[C_{x,t}]_{ij}=c_{x,t}(i,j)$. For each $j\in\mathcal{N}_{x,t}^i$, where $\mathcal{N}_{x,t}^i$ is the reduced set of nodes transmitting to agent $i$, the consensus weights satisfy $c_{x,t}(i,j)\geq\nu$ for some $\nu>0$.
\end{assumption}
\begin{assumption}\label{as:policy_adv}
The policy of every Byzantine agent converges to a stationary policy, i.e., $\lim_{t\rightarrow\infty}\pi_t^i\rightarrow\pi^i_*$ for $i\in\mathcal{N}^-$. Furthermore, $\pi_{t+1}^i-\pi_t^i=o(\alpha_{v,t}+\alpha_{\lambda,t})$.
\end{assumption}
\begin{remark}
We note that the consensus weights $c_{\lambda,t}(i,j)$ and $c_{v,t}(i,j)$ are distinct since the updates of $\lambda_t^i$ and $v_t^i$ are not the same in general. We will show in the convergence analysis in Section~\ref{sec:convergence} that the consensus updates of the cooperative agents can be expressed exclusively in terms of their values under Assumption~\ref{as:byzantine_agents}-\ref{as:consensus_matrix2}. Assumption~\ref{as:policy_adv} allows us to separate the parameter updates into two timescales, since the changes in the adversarial agents' policies are slower than the critic and team-average reward function updates. Moreover, the assumed stationarity of the Byzantine agents' policies in the limit allows us to prove that the cooperative agents maximize their team-average objective. In practice, the Byzantine agents can arbitrarily change their policies, which induces non-stationarity in the underlying MMDP. The important takeaway is that it does not alter the cooperative nature of the cooperative agents despite a potential non-convergence of policies to a stationary point of the cooperative-team-average objective function or its neighborhood. We do not assume a stationary behavior of the adversarial agents in the empirical analysis presented in Section~\ref{sec:simulation}.
\end{remark}
The pseudo-code for the resilient projection-based consensus AC algorithm with linear function approximation is given in Algorithm~\ref{alg:2}. In the next subsection, we extend the resilient projection-based AC algorithm to the setting with nonlinear function approximation of the actor, critic, and team-average reward.

\subsection{Deep Resilient Projection-based Consensus Actor-critic}
In this subsection, we extend the functionality demonstrated in Algorithm~2 to nonlinear function approximation.  We design the resilient projection-based AC algorithm, where the policy, value function, and team-average reward function are approximated by deep neural networks. Nonlinear function approximation allows a further degree of freedom in the team-average estimation as the agents learn the associated features adaptively. We let $\lambda_t^i=\begin{bmatrix} (\lambda_{hid,t}^i)^T & (\lambda_{out,t}^i)^T  \end{bmatrix}^T$ where $\lambda_{hid,t}^i$ and $\lambda_{out,t}^i$ denote stacked vectors with all hidden layer and output layer parameters of the team-reward function of agent $i$, respectively. On a similar note, we let $v_t^i=\begin{bmatrix} (v_{hid,t}^i)^T & v_{out,t}^i)^T  \end{bmatrix}^T$. The pseudo-code for the deep resilient projection-based AC MARL algorithm is presented in Algorithm~\ref{alg:3}.

\begin{algorithm}[h]
	\SetAlgoLined
	 \textbf{Initialize} $s_0,\{\alpha_{v,t}\}_{t\geq 0},,\{\alpha_{\lambda,t}\}_{t\geq 0},\{\alpha_{\theta,t}\}_{t\geq 0}, t\leftarrow 0, \theta_0^i,\lambda^i_0,\tilde\lambda^i_0,v_0^i,\tilde{v}^i_0$, $H$, $\forall i\in\mathcal{N}^+$\\
	 \textbf{Take} action $a_0\sim\pi(a_0|s_0;\theta_0)$\;
	 \textbf{Repeat until convergence} \\
 	 \For{$i\in\mathcal{N}^+$}{
  		\textbf{Observe} state $s_{t+1}$, action $a_t$, and reward $r_{t+1}^i$\\
  		\textbf{Update actor}\\
  		$\delta_t^i\leftarrow \bar{r}_{t+1}(\lambda_t^i)+\gamma V(s_{t+1};v_t^i)-V(s_t;v_t^i)$\\
  		$\psi_t^i\leftarrow\nabla_{\theta^i}\log\pi^i(a_t^i|s_t^i;\theta_t^i)$\\
  		$\theta_{t+1}^i\leftarrow\theta_t^i+\alpha_{\theta,t}\delta_t^i\psi_t^i$\\
  		\textbf{Update critic and team reward function}\\
  		$\tilde v_{t}^i\leftarrow v_{t}^i+\alpha_{v,t}\big(r_{t+1}^i+\gamma V(s_{t+1};v_t^i)-V(s_t;v_t^i)\big)\nabla_v V(s_t;v_t^i)$\\
  		$\tilde\lambda_{t}^i\leftarrow \lambda_{t}^i+\alpha_{\lambda,t}\big(r_{t+1}^i-\bar{r}_{t+1}(\lambda_t^i)\big)\nabla_\lambda\bar{r}_{t+1}(\lambda_t^i)$\\
  		\textbf{Send} $\tilde\lambda_{t}^i,\tilde{v}_{t}^i$ to $j\in\mathcal{N}_{out,t}^i$\\
  		}
  		\For{$i\in\mathcal{N}^+$}{
  		\textbf{Receive} $\tilde\lambda_{t}^j,\tilde{v}_{t}^j$ from $j\in\mathcal{N}_{in,t}^i$\\
  		\textbf{Resilient consensus step in the hidden layers (trimmed mean)}\\
		  		$\mathcal{N}_{v,t}^i\leftarrow$ remove $H$ smallest values and $H$ largest values from $\{\tilde{v}_{hid,t}^j\}_{j\in\mathcal{N}_{in,t}^i}$, return the indices of accepted agents\\
		  		$\mathcal{N}_{\lambda,t}^i\leftarrow$ remove $H$ smallest values and $H$ largest values from $\{\lambda_{hid,t}^j\}_{j\in\mathcal{N}_{in,t}^i}$, return the indices of accepted agents\\
		  		$v_{hid,t+1}^i\leftarrow\sum_{j\in\mathcal{N}_{v,t}^i} c_{v,t}^\prime(i,j)\cdot \tilde{v}_{hid}^j$, $\lambda_{hid,t+1}^i\leftarrow\sum_{j\in\mathcal{N}_{\lambda,t}^i} c_{\lambda,t}^\prime(i,j)\cdot\tilde\lambda_{hid,t}^j$\\
  		\textbf{Resilient projection-based consensus step in the output layer}\\
  		$\epsilon_{v,t}^{ij}\leftarrow\frac{V(s_t;\tilde{v}_t^j)-V(s_t;v_t^i)}{\alpha_{v,t}\Vert\nabla_{v_{out}}V(s_t;v_t^i)\Vert^2}$, $\epsilon_{\lambda,t}^{ij}\leftarrow\frac{\bar{r}(s_t,a_t;\tilde\lambda_t^j)-\bar{r}(s_t,a_t;\lambda_t^i)}{\alpha_{\lambda,t}\Vert \nabla_{\lambda_{out}}\bar{r}(s_t,a_t;\lambda_t^i)\Vert^2}$ for $j\in\mathcal{N}_{in,t}^i$\\
  		$\mathcal{N}_{v,t}^i\leftarrow$ remove $H$ smallest values that are smaller than and $H$ largest values that are larger than $\epsilon_{v,t}^{ii}$ from the set $\{\epsilon_{v,t}^{ij}\}_{j\in\mathcal{N}_{in,t}^i}$, return the remaining indices\\
  		  		$\mathcal{N}_{\lambda,t}^i\leftarrow$ remove $H$ smallest values that are smaller than and $H$ largest values that are larger than $\epsilon_{\lambda,t}^{ii}$ from the set $\{\epsilon_{\lambda,t}^{ij}\}_{j\in\mathcal{N}_{in,t}^i}$, return the remaining indices\\
  		$ \epsilon_{v,t}^i\leftarrow\sum_{j\in\mathcal{N}_{v,t}^i} c_{v,t}(i,j)\cdot\epsilon_{v,t}^{ij}$, $\epsilon_{\lambda,t}^i\leftarrow\sum_{j\in\mathcal{N}_{\lambda,t}^i} c_{\lambda,t}(i,j)\cdot\epsilon_{\lambda,t}^{ij}$\\
    	$v_{out,t+1}^i\leftarrow v_{out,t}^i+\alpha_{v,t}\cdot\epsilon_{v,t}^i\cdot\nabla_{v_{out}}V(s_t;v_{out,t}^i,v_{hid,t+1}^i)$\\	$\lambda_{t+1}^i\leftarrow\lambda_t^i+\alpha_{\lambda,t}\cdot\epsilon_{\lambda,t}^i\cdot\nabla_{\lambda_{out}}\bar{r}_{t+1}(s_t,a_t;\lambda_{out,t}^i,\lambda_{hid,t+1}^i)$\\
  		\textbf{Take} action $a_{t+1}^i\sim\pi^i(a_{t+1}^i|s_{t+1};\theta_{t+1}^i)$\\
  		 }
  	\textbf{Update} iteration counter $t\leftarrow t+1$

 \caption{Deep resilient projection-based consensus AC}
 	\label{alg:3}
\end{algorithm}

\section{Convergence Results}\label{sec:convergence}

In this section, we provide a convergence analysis for Algorithm~\ref{alg:1} and Algorithm~\ref{alg:2}. In the analysis, we treat the critic and team-average reward function updates, and the actor updates separately. We show that the critic and team-average reward function converge on the faster timescale and the actor converges on the slower timescale. The convergence analysis is inspired by the study of stochastic approximation in \citep{kushner2003book} and \citep{borkar2009book} and the seminal work on consensus AC algorithms with function approximation in \citep{zhang2018}, where the asymptotic behavior of stochastic updates can be conveniently expressed in terms of ordinary differential equations (ODEs) or differential inclusions. We note that our analysis is more concise than in \citep{zhang2018} as we associate the proof of boundedness of the algorithm iterates with existing theorems in \citep{borkar2009book}. Furthermore, the convergence results of Algorithm~1 are essentially the same as in \citep{zhang2018} despite a significant difference in the consensus updates due to the projection step. The analysis of Algorithm~\ref{alg:2} involves different assumptions on the network topology and consensus updates, and hence the convergence results are weaker to those obtained for Algorithm~\ref{alg:1}.\par
Before we proceed to prove the convergence under Algorithm~\ref{alg:1} and \ref{alg:2}, we introduce definitions that are used in both proofs. We let $v_t=\begin{bmatrix} (v_t^1)^T & \dots & (v_t^N)^T\end{bmatrix}^T$ and $\lambda_t=\begin{bmatrix} (\lambda_t^1)^T & \dots & (\lambda_t^N)^T\end{bmatrix}^T$. Furthermore, we let $D_\pi^{s}=diag([d_\pi(s),s\in\mathcal{S}])\in\mathbb{R}^{|\mathcal{S}|\times|\mathcal{S}|}$ and $D_\pi^{s,a}=diag([d_\pi^\prime(s,a),s\in\mathcal{S},a\in\mathcal{A}])\in\mathbb{R}^{(|\mathcal{S}|\cdot|\mathcal{A}|)\times(|\mathcal{S}|\cdot|\mathcal{A}|)}$ denote matrices with a stationary distribution of states and state-action pairs, respectively.
\begin{definition}[Team-average]
The averaging operator $\left<\cdot\right>:\mathbb{R}^{NK}\rightarrow\mathbb{R}^K$ is defined such that $\left<x\right>=\frac{1}{N}(\mathbf{1}^T\otimes I)x=\frac{1}{N}\sum_{i\in\mathcal{N}}x^i$, where $\otimes$ denotes the Kronecker product.
\end{definition}
\begin{definition}[Projection into consensus subspace]
	The operator $\mathcal{J}=(\frac{1}{N}\mathbf{11}^T)\otimes I$ is defined such that $\mathcal{J}x=\mathbf{1}\otimes \left<x\right>$.
\end{definition}
\begin{definition}[Projection into disagreement subspace]
	The operator $\mathcal{J}_{\perp}=I-\mathcal{J}$ is defined such that $x_\perp=\mathcal{J}_{\perp} x=x-\mathbf{1}\otimes \left<x\right>$.
\end{definition}
\begin{definition}[Projection into gradient subspace]
   The projection matrices are given as $\Gamma_{v,t}=\frac{\phi_t\phi_t^T}{\Vert\phi_t\Vert^2}$ and $\Gamma_{\lambda,t}=\frac{f_tf_t^T}{\Vert f_t\Vert^2}$.
\end{definition}
\begin{definition}[Projection into orthogonal subspace]
The orthogonal projection matrices are given as $\hat\Gamma_{v,t}=I-\Gamma_{v,t}$ and $\hat\Gamma_{\lambda,t}=I-\Gamma_{\lambda,t}$.
\end{definition}

\subsection{Algorithm~\ref{alg:1}} 
In the analysis of Algorithm~\ref{alg:1}, we are going to prove convergence to unique asymptotically stable equilibria of the critic and team-average reward estimate under fixed policy evaluation and convergence of the actor to the local maximizer of the approximated team-average objective function.\par
We let $A_{v,t}^\prime=\phi_t(\gamma\phi_{t+1}-\phi_t)^T$, $b_{v,t}^i=\phi_tr_{t+1}^i$, $A_{\lambda,t}^\prime=-f_tf_t^T$, and $b_{\lambda,t}^i=f_tr_{t+1}^i$. Furthermore, we define $A_{x,t}=I\otimes A_{x,t}^\prime$ and $b_{x,t}=\begin{bmatrix} (b_{x,t}^1)^T & \dots & (b_{x,t}^N)^T\end{bmatrix}^T$ for $x\in\{v,\lambda\}$. The critic and team-average reward function updates under Algorithm~\ref{alg:1} are compactly written as
\begin{align}
v_{t+1}&=v_t+(C_t\otimes\Gamma_{v,t})\big(v_t+\alpha_{v,t}(A_{v,t}v_t+b_{v,t})\big)-(I\otimes\Gamma_{v,t})v_t\label{alg1_v}\\
\lambda_{t+1}&=\lambda_t+(C_t\otimes\Gamma_{\lambda,t})\big(\lambda_t+\alpha_{\lambda,t}(A_{\lambda,t}\lambda_t+b_{\lambda,t})\big)-(I\otimes\Gamma_{\lambda,t})\lambda_t.\label{alg1_lam}
\end{align}
Our first goal in the analysis is to describe how the consensus updates affect the asymptotic convergence of the parameters $v_t$ and $\lambda_t$. We want to establish that the agents reach consensus on the parameter values in the limit. In Lemma~\ref{lem:spectral_radius}, we analyze the spectral radius of the mean consensus update in the disagreement subspace, which facilitates contraction in the disagreement subspace stated in Lemma~\ref{lem:v_consensus} and \ref{lem:lam_consensus}. We let $\mathcal{F}_t^{x}=\sigma(x_0,Y_{t-1},\xi_\tau,\tau\leq t)$ denote a filtration of a random variable $x\in\{v,\lambda\}$, where $Y_t$ are the incremental changes in parameters $v$ and $\lambda$ due to \eqref{alg1_v} or \eqref{alg1_lam}, and $\xi_\tau=(r_\tau,s_\tau,a_\tau,C_{\tau-1})$ is a collection of random signals. The filtration captures the evolution of the random variable based on its initial value $x_0$ and updates $Y_t$ that occur along the trajectory of the Markov chain $\xi_\tau$.

\begin{lemma}\label{lem:spectral_radius}
Under Assumption~\ref{as:consensus_matrix}, the spectral radius $\rho_t\big(\mathbb{E}(C_t^T(I-\mathbf{11}^T/N)C_t|\mathcal{F}_t^x)\big)<1$, where $x\in\{v,\lambda\}$.
\end{lemma}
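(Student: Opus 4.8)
The plan is to recognize the matrix in question as a conditional second moment of an orthogonal projection and to reduce the spectral-radius bound to a strict contraction estimate on the disagreement subspace. Write $P = I - \mathbf{11}^T/N$, the orthogonal projector onto $\mathbf 1^\perp$, so that $P = P^T = P^2 \succeq 0$ and $M := \mathbb E(C_t^T P C_t \mid \mathcal F_t^x)$ satisfies $z^T M z = \mathbb E(\|PC_t z\|^2 \mid \mathcal F_t^x) \ge 0$ for every $z$. Hence $M$ is symmetric positive semidefinite and $\rho_t(M) = \lambda_{\max}(M) = \sup_{\|z\|=1}\mathbb E(\|PC_t z\|^2\mid\mathcal F_t^x)$. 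Because each $C_t$ is row stochastic, $C_t\mathbf 1 = \mathbf 1$, so $PC_t\mathbf 1 = P\mathbf 1 = 0$ and therefore $M\mathbf 1 = 0$; since $\mathbf 1$ is then an eigenvector of $M$ with eigenvalue $0$ and $M$ is block diagonal with respect to $\operatorname{span}(\mathbf 1)\oplus\mathbf 1^\perp$, the largest eigenvalue is attained on the unit sphere of $\mathbf 1^\perp$. It thus suffices to prove $\mathbb E(\|PC_t z\|^2\mid\mathcal F_t^x) < 1$ for every unit vector $z\perp\mathbf 1$, and then to upgrade this pointwise bound to a uniform one by compactness of that sphere together with continuity of the quadratic form $z\mapsto z^T M z$.

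Next I would reduce the conditioning. Since $\mathcal F_t^x$ contains $s_t,a_t$ but not $C_t$, Assumption~\ref{as:consensus_matrix} (conditional independence of $C_t$ from all signals given the state-action pair) gives $\mathbb E(\,\cdot\,\mid\mathcal F_t^x) = \mathbb E(\,\cdot\mid s_t,a_t)$ for any function of $C_t$. Under this conditional law the mean matrix $\bar C = \mathbb E(C_t\mid s_t,a_t)$ is column stochastic, so every column of $C_t$ sums to one in conditional expectation, and the mean graph is connected with edge weights bounded below by $\nu$.

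For the core estimate I would fix a unit $z\perp\mathbf 1$ and bound $\|PC_t z\|^2 \le \|C_t z\|^2$ (orthogonal projection is nonexpansive), then apply Jensen's inequality row by row: each row of $C_t$ is a probability vector, so $(\sum_j c_t(i,j)z_j)^2 \le \sum_j c_t(i,j)z_j^2$, and summing over $i$ gives $\|C_t z\|^2 \le \sum_j z_j^2\sum_i c_t(i,j)$. Taking the conditional expectation and using column stochasticity of $\bar C$ yields $\mathbb E(\|C_t z\|^2\mid s_t,a_t)\le \sum_j z_j^2 = \|z\|^2 = 1$, hence $\mathbb E(\|PC_t z\|^2\mid\mathcal F_t^x)\le 1$. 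The remaining, and principal, task is to make this inequality strict. Equality would force, for almost every realization and every node $i$, that $z_j$ be constant over the support of row $i$ (strict convexity of $t\mapsto t^2$ in the Jensen step); since each cooperative agent retains its own value ($c_t(i,i)\ge\nu$) and the mean graph is connected, chaining this equality condition along positive-probability edges would force $z$ to be constant, i.e.\ $z\in\operatorname{span}(\mathbf 1)$, contradicting $z\perp\mathbf 1$, $z\ne 0$. Therefore $\mathbb E(\|PC_t z\|^2\mid\mathcal F_t^x) < 1$ for each such $z$, and the compactness argument closes the proof.

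The main obstacle is precisely this strictness: the nonexpansiveness and Jensen bounds only deliver $\rho_t(M)\le 1$, and indeed individual near-permutation consensus matrices can preserve the disagreement norm, so strictness cannot come from any single realization. It must be extracted from the connectivity of the mean graph together with the positive self-weights, which is why I would phrase the equality analysis probabilistically (almost-sure equality along a connected set of positive-probability edges) rather than deterministically. A secondary point requiring care is keeping the two conditioning levels straight, replacing $\mathbb E(\cdot\mid\mathcal F_t^x)$ by $\mathbb E(\cdot\mid s_t,a_t)$ via the conditional-independence assumption, so that the column stochasticity of the mean matrix may legitimately be invoked inside the expectation.
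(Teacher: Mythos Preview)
Your proposal is correct and follows essentially the same route as the paper: interpret the matrix as the conditional second moment $\mathbb E(\|PC_t z\|^2\mid\mathcal F_t^x)$, observe that $\mathbf 1$ lies in its kernel by row stochasticity, and then work on $\mathbf 1^\perp$ using stochasticity for the $\le 1$ bound and connectivity of the mean graph for strictness. The paper's proof is terser and somewhat looser on two points that you handle more carefully: it asserts $\|(I-\mathbf{11}^T/N)C_t x\|^2=\|C_t x\|^2\le\|x\|^2$ for $x\perp\mathbf 1$ pointwise (neither equality nor the inequality holds for a general row-stochastic $C_t$), and it derives strictness by appealing to the eigenvalues of the \emph{mean} matrix $\mathbb E(C_t\mid\mathcal F_t^x)$ rather than of $\mathbb E(C_t^TPC_t\mid\mathcal F_t^x)$. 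Your Jensen-plus-mean-column-stochasticity argument is what actually justifies $\mathbb E(\|C_t z\|^2\mid\mathcal F_t^x)\le\|z\|^2$, and your equality analysis (constancy of $z$ over row supports, chained along positive-probability edges of the connected mean graph with self-loops) together with compactness of the unit sphere in $\mathbf 1^\perp$ is the right way to close the strict gap that the paper only asserts.
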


\begin{lemma}\label{lem:v_consensus}
Suppose there are no adversarial agents in the network. Under Assumption~\ref{as:linear_approx}-\ref{as:step_size} and \ref{as:consensus_matrix}, the agents reach consensus on the critic parameters with probability one, i.e., $\lim_{t\rightarrow\infty}v_{\perp,t}=0$. Furthermore, the term $\mathbb{E}(\Vert(I\otimes\Gamma_{v,t})\alpha_{v,t}^{-1}v_{\perp,t}\Vert^2|\mathcal{F}_t^v)$ is uniformly bounded.
\end{lemma}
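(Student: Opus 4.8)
The plan is to analyze the disagreement vector $w_t := v_{\perp,t} = \mathcal{J}_\perp v_t$ directly. First I would rewrite the compact update \eqref{alg1_v} as $v_{t+1} = P_t v_t + \alpha_{v,t}(C_t\otimes\Gamma_{v,t})(A_{v,t}v_t+b_{v,t})$, where $P_t := I\otimes(I-\Gamma_{v,t}) + C_t\otimes\Gamma_{v,t}$ is the effective consensus operator. Since $C_t$ is row stochastic ($C_t\mathbf{1}=\mathbf{1}$), $P_t$ fixes the consensus subspace, $P_t(\mathbf{1}\otimes u)=\mathbf{1}\otimes u$, so that $\mathcal{J}_\perp P_t\mathcal{J}=0$ and the disagreement obeys the closed recursion
\begin{align*}
w_{t+1}=\mathcal{J}_\perp P_t w_t + \alpha_{v,t}\,\mathcal{J}_\perp(C_t\otimes\Gamma_{v,t})(A_{v,t}v_t+b_{v,t}).
\end{align*}

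The key observation is that, blockwise, $\mathcal{J}_\perp P_t w_t$ splits into a part orthogonal to $\phi_t$, namely $[I\otimes(I-\Gamma_{v,t})]w_t$ which the update leaves untouched, and a part along $\phi_t$, namely $\mathcal{J}_\perp[C_t\otimes\Gamma_{v,t}]w_t$, and these are mutually orthogonal. Writing $[I\otimes\Gamma_{v,t}]w_t = c\otimes\phi_t$ with $\mathbf{1}^Tc=0$ (because $\left<w_t\right>=0$), a short computation gives $\Vert\mathcal{J}_\perp P_t w_t\Vert^2 = \Vert w_t\Vert^2 - \Vert\phi_t\Vert^2\, c^T[I-C_t^T(I-\mathbf{11}^T/N)C_t]c$. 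Conditioning on $\mathcal{F}_t^v$ and using that $C_t$ is conditionally independent given $(s_t,a_t)$ together with Lemma~\ref{lem:spectral_radius} (which makes $\mathbb{E}(C_t^T(I-\mathbf{11}^T/N)C_t\mid\mathcal{F}_t^v)$ a symmetric positive semidefinite matrix of spectral radius $\rho<1$, uniform over the finite state-action space), the subtracted quadratic form is at least $(1-\rho)\Vert[I\otimes\Gamma_{v,t}]w_t\Vert^2$. The cross term couples only the $\phi_t$-aligned components and is therefore $O(\alpha_{v,t}\Vert[I\otimes\Gamma_{v,t}]w_t\Vert)$; absorbing it by Young's inequality and using boundedness of the iterates (so that $A_{v,t}v_t+b_{v,t}$ has bounded conditional second moment, via the Borkar--Meyn stability argument the authors invoke) yields
\begin{align*}
\mathbb{E}\big(\Vert w_{t+1}\Vert^2\mid\mathcal{F}_t^v\big)\le \Vert w_t\Vert^2 - \tfrac{1-\rho}{2}\Vert[I\otimes\Gamma_{v,t}]w_t\Vert^2 + K\alpha_{v,t}^2 .
\end{align*}
Since $\sum_t\alpha_{v,t}^2<\infty$ by Assumption~\ref{as:step_size}, the Robbins--Siegmund theorem gives that $\Vert w_t\Vert^2$ converges a.s. and $\sum_t\Vert[I\otimes\Gamma_{v,t}]w_t\Vert^2<\infty$ a.s.

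The hard part is that this per-step decrease only controls the component of the disagreement along the current feature $\phi_t$; the orthogonal component is frozen for that step, so a single step never contracts the full norm. To conclude $w_t\to0$ I would exploit persistent excitation: irreducibility and aperiodicity (Assumption~\ref{as:policy_coop}) together with the full column rank of $\Phi$ (Assumption~\ref{as:linear_approx}) make $\mathbb{E}_{d_\pi}[\phi\phi^T/\Vert\phi\Vert^2]\succ0$, so over any time window the visited features span $\mathbb{R}^L$. Combined with $\Vert w_{t+1}-w_t\Vert\to0$ (which follows from the recursion since both the mixing increment and the $\alpha_{v,t}$-input are $O(\Vert[I\otimes\Gamma_{v,t}]w_t\Vert)+O(\alpha_{v,t})\to0$), this forces $\sum_t\Vert[I\otimes\Gamma_{v,t}]w_t\Vert^2=\infty$ on any event where $\Vert w_t\Vert$ stays bounded away from $0$, contradicting summability; hence the a.s. limit of $\Vert w_t\Vert^2$ is $0$. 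For the furthermore I would sharpen this into a windowed contraction: over a block long enough for the ergodic average of $\Gamma_{v,\cdot}$ to be uniformly positive definite, the composed operator contracts the \emph{entire} disagreement subspace by a factor $<1$, which balanced against the $O(\alpha_{v,t})$ forcing yields $\Vert w_t\Vert = O(\alpha_{v,t})$ a.s.; the claimed bound then follows from $\Vert[I\otimes\Gamma_{v,t}]w_t\Vert\le\Vert w_t\Vert$ and the slowly varying step sizes $\alpha_{v,t+1}/\alpha_{v,t}\to1$ of Assumption~\ref{as:step_size}. The main obstacle throughout is precisely this rank-one, direction-by-direction nature of the projection-based consensus, which prevents a one-step contraction of $\Vert w_t\Vert$ and forces the multi-step persistent-excitation argument.
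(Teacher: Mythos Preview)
Your approach is broadly sound and lands on the same orthogonal decomposition the paper uses, but the order of the argument and the tool for the ``furthermore'' claim differ. The paper proves the two conclusions in the reverse order: it first establishes the uniform bound on $\mathbb{E}(\Vert(I\otimes\Gamma_{v,t})\alpha_{v,t}^{-1}v_{\perp,t}\Vert^2\mid\mathcal{F}_t^v)$ by a direct one-step recursion for the $\phi_t$-aligned, step-size-rescaled disagreement, obtaining $\eta_{t+1}\le\rho_t\,\nu_t\,(\alpha_{v,t}/\alpha_{v,t+1})^2\big(\eta_t+2\sqrt{K_2\eta_t}+K_2\big)$, which for $t$ large enough contracts as $\eta_{t+1}\le(1-\delta/2)\eta_t+K_3$; only then does it use $\sum_t\alpha_{v,t}^2<\infty$ to conclude $(I\otimes\Gamma_{v,t})v_{\perp,t}\to0$ and, since every feature direction is visited under the stationary distribution, $v_{\perp,t}\to0$. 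Your route via Robbins--Siegmund on the full $\Vert w_t\Vert^2$ plus a persistent-excitation argument is more explicit about the rank-one difficulty you correctly flag, but it pushes the work into the ``furthermore'' part: your windowed-contraction sketch is considerably more delicate to make rigorous than the paper's one-step bound, and an a.s.\ estimate $\Vert w_t\Vert=O(\alpha_{v,t})$ with a random constant does not immediately give the uniform bound on the conditional expectation.

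One concrete gap: you invoke boundedness of the iterates $v_t$ (via Borkar--Meyn) to control the forcing $A_{v,t}v_t+b_{v,t}$, but in this paper that stability result (Theorem~\ref{thm:v_team_convergence}) is proved \emph{using} the present lemma, so the appeal is circular. The fix is already implicit in your own recursion: since $C_t\mathbf{1}=\mathbf{1}$, the consensus part $\mathcal{J}_\perp(C_t\otimes\Gamma_{v,t})A_{v,t}(\mathbf{1}\otimes\langle v_t\rangle)$ vanishes, so the forcing in the disagreement equation involves only $A_{v,t}w_t+b_{v,t}$, which is bounded by $K(1+\Vert w_t\Vert)$ from Assumptions~\ref{as:linear_approx} and~\ref{as:reward_bound} alone; Robbins--Siegmund then still applies with the multiplicative factor $(1+K'\alpha_{v,t}^2)$ on $\Vert w_t\Vert^2$. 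The paper's proof makes exactly this reduction (its recursion \eqref{v_perp_update} already carries $v_{\perp,t}$, not $v_t$) and its forcing is purely $b_{v,t}$, so no circularity arises.
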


\begin{lemma}\label{lem:lam_consensus}
Suppose there are no adversarial agents in the network. Under Assumption~\ref{as:linear_approx}-\ref{as:step_size} and \ref{as:consensus_matrix}, the agents reach consensus on the team-average reward function parameters with probability one, i.e., $\lim_{t\rightarrow\infty}\lambda_{\perp,t}=0$. Furthermore, the term $\mathbb{E}(\Vert(I\otimes\Gamma_{\lambda,t})\alpha_{\lambda,t}^{-1}\lambda_{\perp,t}\Vert^2|\mathcal{F}_t^\lambda)$ is uniformly bounded.
\end{lemma}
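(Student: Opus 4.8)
The plan is to follow the same route as Lemma~\ref{lem:v_consensus}, substituting the team-average reward quantities $f_t$, $\Gamma_{\lambda,t}$, $A_{\lambda,t}^\prime=-f_tf_t^T$ and $b_{\lambda,t}^i=f_tr_{t+1}^i$ for their critic counterparts; the reward recursion is in fact slightly more benign because $A_{\lambda,t}^\prime$ is symmetric negative semidefinite, so there is no bootstrapping coupling through $\phi_{t+1}$. First I would project the compact update \eqref{alg1_lam} onto the disagreement subspace. Writing $P_{\lambda,t}=I\otimes\Gamma_{\lambda,t}$ and using $(C_t\otimes\Gamma_{\lambda,t})=(C_t\otimes I)P_{\lambda,t}$, the one-step increment is
\begin{align*}
\lambda_{t+1}-\lambda_t=\big((C_t-I)\otimes I\big)P_{\lambda,t}\lambda_t+\alpha_{\lambda,t}(C_t\otimes I)P_{\lambda,t}\big(A_{\lambda,t}\lambda_t+b_{\lambda,t}\big),
\end{align*}
so that applying $\mathcal{J}_\perp=I-\mathcal{J}$ cleanly separates an order-one consensus term from an $O(\alpha_{\lambda,t})$ stochastic-approximation term. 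The genuine source of disagreement is the heterogeneous forcing $b_{\lambda,t}^i=f_tr_{t+1}^i$, which injects a fresh $O(\alpha_{\lambda,t})$ discrepancy each step, while the consensus term damps it.

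The key structural fact is that under Assumption~\ref{as:consensus_matrix} the mean consensus matrix is column stochastic, so $\mathbb{E}(\mathcal{J}(C_t\otimes I)\mid\mathcal{F}_t^\lambda)=\mathcal{J}$; the team average $\left<\lambda_t\right>$ is therefore preserved in conditional expectation and the consensus term acts, in the mean, entirely within the disagreement subspace. I would then take $\Vert\lambda_{\perp,t}\Vert^2$ as a Lyapunov function and bound $\mathbb{E}(\Vert\lambda_{\perp,t+1}\Vert^2\mid\mathcal{F}_t^\lambda)$. The quadratic form generated by the consensus term is controlled by Lemma~\ref{lem:spectral_radius}, which guarantees $\rho_t\big(\mathbb{E}(C_t^T(I-\mathbf{11}^T/N)C_t\mid\mathcal{F}_t^\lambda)\big)<1$ and hence a strict contraction factor on the disagreement component, while boundedness of $f_t$ and $r_{t+1}^i$ (Assumptions~\ref{as:linear_approx} and~\ref{as:reward_bound}) keeps the cross term and the $O(\alpha_{\lambda,t}^2)$ remainder under control.

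To promote this mean-square contraction to almost-sure convergence $\lambda_{\perp,t}\to0$, I would first secure boundedness of the iterates $\lambda_t$ — easier here than for the critic since $A_{\lambda,t}^\prime=-f_tf_t^T$ makes the averaged reward recursion a stable least-squares flow, so the Borkar--Meyn-type stability results cited in the appendix apply — and then invoke a supermartingale convergence theorem on the nonnegative process $\Vert\lambda_{\perp,t}\Vert^2$, using $\sum_t\alpha_{\lambda,t}^2<\infty$ from Assumption~\ref{as:step_size} to sum the perturbations. Irreducibility and aperiodicity of the Markov chain (Assumption~\ref{as:policy_coop}) together with $F$ having full column rank (Assumption~\ref{as:linear_approx}) ensure that $\mathbb{E}\,\Gamma_{\lambda,t}$ is positive definite under the stationary distribution, so that every coordinate direction is contracted on average rather than only the instantaneous feature direction $f_t$ corrected at step $t$; this is what prevents disagreement from being permanently stored in directions orthogonal to the current $f_t$.

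The second claim — uniform boundedness of $\mathbb{E}(\Vert(I\otimes\Gamma_{\lambda,t})\alpha_{\lambda,t}^{-1}\lambda_{\perp,t}\Vert^2\mid\mathcal{F}_t^\lambda)$ — requires the sharper statement that the disagreement decays at the rate of the step size, $\Vert\lambda_{\perp,t}\Vert=O(\alpha_{\lambda,t})$ in mean square. I would obtain this by unrolling the contraction recursion: the disagreement is driven by the $O(\alpha_{\lambda,t})$ heterogeneous forcing and damped geometrically, so in the quasi-stationary regime it equilibrates at order $\alpha_{\lambda,t}$, and the ratio condition $\lim_t\alpha_{\lambda,t+1}\alpha_{\lambda,t}^{-1}=1$ guarantees that rescaling by $\alpha_{\lambda,t}^{-1}$ does not destroy boundedness across the slowly varying step sizes. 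I expect this final step to be the main obstacle: one must simultaneously track the order-one consensus contraction and the $O(\alpha_{\lambda,t})$ injection while accounting for the fact that $\Gamma_{\lambda,t}$ corrects disagreement only along the current feature direction, so the $O(\alpha)$ rate genuinely rests on the \emph{averaged} contraction furnished by ergodicity and the full-rank feature assumption rather than on per-step damping alone.
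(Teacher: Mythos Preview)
Your overall intuition is correct, but your concrete route diverges from the paper's in two places, and one of them is a genuine gap. The paper's proof (literally ``analogous to Lemma~\ref{lem:v_consensus}'') does \emph{not} take $\Vert\lambda_{\perp,t}\Vert^2$ as a Lyapunov function and does not appeal to boundedness of $\lambda_t$ or to time-averaged contraction via ergodicity of $\mathbb{E}\,\Gamma_{\lambda,t}$. Instead, it splits $\lambda_{\perp,t+1}$ into the two orthogonal pieces $(I\otimes\hat\Gamma_{\lambda,t})\lambda_{\perp,t+1}$ and $(I\otimes\Gamma_{\lambda,t})\lambda_{\perp,t+1}$, and observes that the orthogonal piece is \emph{exactly preserved} in one step, $(I\otimes\hat\Gamma_{\lambda,t})\lambda_{\perp,t+1}=(I\otimes\hat\Gamma_{\lambda,t})\lambda_{\perp,t}$, while the feature-direction piece contracts via Lemma~\ref{lem:spectral_radius}. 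This allows a direct one-step recursion on the \emph{scaled} quantity $\eta_t=\mathbb{E}\big(\Vert(I\otimes\Gamma_{\lambda,t})\alpha_{\lambda,t}^{-1}\lambda_{\perp,t}\Vert^2\big|\mathcal{F}_t^\lambda\big)$: using $\lim_t\alpha_{\lambda,t+1}/\alpha_{\lambda,t}=1$ and boundedness of $b_{\lambda,t}$ one gets $\eta_{t+1}\le(1-\delta/2)\eta_t+K$ for $t$ large, hence $\sup_t\eta_t<\infty$. The second claim is thus proved \emph{first}, and the first claim follows because $\sum_t\mathbb{E}\big(\Vert(I\otimes\Gamma_{\lambda,t})\lambda_{\perp,t}\Vert^2\big|\mathcal{F}_t^\lambda\big)\le K\sum_t\alpha_{\lambda,t}^2<\infty$, giving $(I\otimes\Gamma_{\lambda,t})\lambda_{\perp,t}\to0$ a.s., and since all feature directions are visited under the stationary distribution this forces $\lambda_{\perp,t}\to0$.

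The gap in your plan is the step ``first secure boundedness of the iterates $\lambda_t$'': in the paper's logical order, boundedness of $\lambda_t$ is Theorem~\ref{thm:lam_team_convergence}, and its proof \emph{uses} Lemma~\ref{lem:lam_consensus} (to control $\alpha_{\lambda,t}^{-1}\lambda_{\perp,t}$ in the martingale bound), so invoking it here is circular. Fortunately it is also unnecessary: once you pass to the disagreement subspace, the update depends only on $\lambda_{\perp,t}$ and on $b_{\lambda,t}$, both of which are controlled without knowing anything about $\left<\lambda_t\right>$. Your ``main obstacle'' --- handling the fact that $\Gamma_{\lambda,t}$ corrects only along $f_t$ --- is precisely what the paper's orthogonal split dissolves: no averaged-over-time contraction argument is needed for the boundedness of $\eta_t$, because the orthogonal component neither grows nor needs to shrink within a single step.
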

\noindent An intermediate result of Lemma~\ref{lem:v_consensus} and \ref{lem:lam_consensus} states that under a sufficiently small step size, the disagreement vector scaled by the step size is contractive and subject to a bounded ``input" disturbance that is due to the heterogeneous rewards observed by individual agents. Therefore, if the trajectories of  $\alpha_{v,t}^{-1}v_{\perp,t}$ and $\alpha_{\lambda,t}^{-1}\lambda_{\perp,t}$ happen to escape a compact set for $t>t_0$, the trajectories exponentially converge back to the set, which implies the boundedness in the disagreement space. This property is extremely useful in the analysis of the team-average parameter trajectories, since the disagreement vectors $v_{\perp,t}$ and $\lambda_{\perp,t}$ represent a ``bias" that enters the team-average stochastic updates. We are ready to state important theorems regarding convergence of the team-average parameter trajectories.

\begin{theorem}\label{thm:v_team_convergence}
Suppose there are no adversarial agents in the network. Under Assumption~\ref{as:linear_approx}-\ref{as:step_size}~and~\ref{as:consensus_matrix}, the critic parameters satisfy $\sup_t\Vert v_t\Vert<\infty$ with probability one. Furthermore, they asymptotically converge with probability one, i.e., $\lim_t v_t^i=v_\pi$ for $i\in\mathcal{N}$. The limit $v_\pi$ is a unique solution to $\Phi^TD_\pi^s\big(\frac{1}{N}\sum_{i\in\mathcal{N}}R_\pi^i+\gamma P_\theta\Phi v_\pi-\Phi v_\pi\big)=0$.
\end{theorem}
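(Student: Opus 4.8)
The plan is to treat the critic recursion~\eqref{alg1_v} as a single-timescale stochastic approximation for the team-average parameter $\left<v_t\right>$, with the policy $\theta$ (and hence $P_\pi$ and $d_\pi$) held quasi-static by the two-timescale separation $\alpha_{\theta,t}=o(\alpha_{v,t}+\alpha_{\lambda,t})$ in Assumption~\ref{as:step_size}. First I would apply the averaging operator $\left<\cdot\right>$ to~\eqref{alg1_v}. Two structural facts make this tractable. The projection identities $\Gamma_{v,t}A_{v,t}^\prime=A_{v,t}^\prime$ and $\Gamma_{v,t}b_{v,t}^i=b_{v,t}^i$ hold because $\phi_t$ lies in the range of $\Gamma_{v,t}$, so the scalar projection leaves the drift terms unchanged; and the column-stochasticity of $\mathbb{E}(C_t|s_t,a_t)$ (Assumption~\ref{as:consensus_matrix}) makes the order-one consensus terms cancel in conditional mean. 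Combining these, the team-average recursion takes the Robbins--Monro form
\[
\left<v_{t+1}\right>=\left<v_t\right>+\alpha_{v,t}\big(A_{v,t}^\prime\left<v_t\right>+\left<b_{v,t}\right>\big)+\alpha_{v,t}\big(\delta M_{t+1}+\beta_t\big),
\]
where $\delta M_{t+1}$ is a martingale difference and $\beta_t$ collects the bias induced by the disagreement vector $v_{\perp,t}$.

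The crucial step is controlling $\beta_t$. Because each $C_t$ is row-stochastic, the order-one consensus fluctuation annihilates any consensus vector and therefore acts only on $v_{\perp,t}$; invoking the strengthened conclusion of Lemma~\ref{lem:v_consensus}, namely that $\mathbb{E}(\Vert(I\otimes\Gamma_{v,t})\alpha_{v,t}^{-1}v_{\perp,t}\Vert^2\mid\mathcal{F}_t^v)$ is uniformly bounded, this contribution is $O(\alpha_{v,t})$ in conditional mean square. Hence $\beta_t\to 0$ and the extra noise it generates has summable scaled variance. I expect this coupling---showing that the disagreement dynamics feed into the team-average recursion only as an asymptotically negligible perturbation---to be the main obstacle, and it is precisely where Lemma~\ref{lem:v_consensus} is indispensable.

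Next I would identify the limiting ODE $\dot v=\bar A_v v+\bar b_v$ by taking stationary expectations, which exist by the ergodicity guaranteed in Assumption~\ref{as:policy_coop}:
\[
\bar A_v=\mathbb{E}_{d_\pi}[A_{v,t}^\prime]=\Phi^T D_\pi^s(\gamma P_\pi-I)\Phi,\qquad \bar b_v=\mathbb{E}_{d_\pi}[\left<b_{v,t}\right>]=\Phi^T D_\pi^s\tfrac{1}{N}\sum_{i\in\mathcal{N}}R_\pi^i.
\]
Uniqueness of the equilibrium follows from the standard TD argument: $I-\gamma P_\pi$ is positive definite in the $d_\pi$-weighted inner product, and since $\Phi$ has full column rank (Assumption~\ref{as:linear_approx}) and $D_\pi^s\succ 0$, the matrix $\bar A_v$ is negative definite and hence Hurwitz. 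Thus $v_\pi=-\bar A_v^{-1}\bar b_v$ is the unique solution of $\Phi^T D_\pi^s\big(\tfrac{1}{N}\sum_{i\in\mathcal{N}}R_\pi^i+\gamma P_\pi\Phi v_\pi-\Phi v_\pi\big)=0$, identifying $P_\pi$ with $P_\theta$.

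Finally I would establish the two asymptotic claims. For boundedness I would apply Borkar's scaling-based stability criterion: the scaled field $h_\infty(v)=\lim_{c\to\infty}(\bar A_v(cv)+\bar b_v)/c=\bar A_v v$ has the origin as a globally asymptotically stable equilibrium because $\bar A_v$ is Hurwitz, yielding $\sup_t\Vert\left<v_t\right>\Vert<\infty$ almost surely; together with $v_{\perp,t}\to 0$ this gives $\sup_t\Vert v_t\Vert<\infty$. For convergence, given boundedness, the square-summability $\sum_t\alpha_{v,t}^2<\infty$ (Assumption~\ref{as:step_size}) with uniformly bounded features and rewards (Assumptions~\ref{as:linear_approx} and~\ref{as:reward_bound}) makes the martingale noise converge, while $\beta_t\to 0$ renders it an asymptotically negligible perturbation. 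The ODE method then forces $\left<v_t\right>\to v_\pi$, and Lemma~\ref{lem:v_consensus} promotes this to $v_t^i\to v_\pi$ for every $i\in\mathcal{N}$, completing the proof.
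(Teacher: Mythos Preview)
Your proposal is essentially correct and follows the same route as the paper: average the recursion~\eqref{alg1_v}, decompose the increment of $\left<v_t\right>$ into drift, martingale difference, and a bias coming from $v_{\perp,t}$, invoke Lemma~\ref{lem:v_consensus} to control the disagreement, identify the limiting linear ODE, apply Borkar's scaling test for boundedness, and then the ODE method for convergence. Your Hurwitz argument (contraction of $\gamma P_\pi$ in the $d_\pi$-weighted norm together with full column rank of $\Phi$) is a standard alternative to the paper's eigenvalue computation and yields the same conclusion.

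The one place where your reasoning is looser than the paper's is the bias term. You argue that the order-one consensus fluctuation acts only on $v_{\perp,t}$ (by row-stochasticity) and is therefore $O(\alpha_{v,t})$ in conditional mean square by Lemma~\ref{lem:v_consensus}, ``hence $\beta_t\to 0$''. But once you divide by $\alpha_{v,t}$ to put everything in the Robbins--Monro form, that contribution is merely $O(1)$, so boundedness alone does not give $\beta_t\to 0$. The paper closes this gap by observing that the bias is in fact \emph{exactly zero}: using the conditional independence and column-stochasticity of $\mathbb{E}(C_t\mid\mathcal{F}_t^v)$ (Assumption~\ref{as:consensus_matrix}), one gets
\[
\mathbb{E}\big[\left<(C_t\otimes\Gamma_{v,t})(\alpha_{v,t}^{-1}v_{\perp,t}+A_{v,t}v_{\perp,t})\right>\big|\mathcal{F}_t^v\big]
=\tfrac{1}{N}(\mathbf{1}^T\otimes\Gamma_{v,t})(\alpha_{v,t}^{-1}v_{\perp,t}+A_{v,t}v_{\perp,t})=0,
\]
the last equality because $\left<v_{\perp,t}\right>=0$. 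With this observation the entire bounded disagreement contribution is a martingale difference and goes into $\delta M_t$, leaving $\beta_t=0$. Once you add this step, your argument and the paper's coincide.
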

\noindent{\bf Proof Sketch.\;}
We express the team-average critic parameter updates in terms of the mean update conditioned on a filtration of random variables $\mathcal{F}_t^v$  and ``noise" due to sampling. We verify conditions in \ref{ap:unc} and leverage ergodicity of the Markov chain $\xi_t$ to establish convergence properties of the team-average critic parameter $\left< v_t\right>$.

\begin{theorem}\label{thm:lam_team_convergence}
Suppose there are no adversarial agents in the network. Under Assumption~\ref{as:linear_approx}-\ref{as:step_size}~and~\ref{as:consensus_matrix}, the team-average reward function parameters satisfy $\sup_t\Vert \lambda_t\Vert<\infty$ with probability one. Furthermore, they converge with probability one, i.e., $\lim_t\lambda_t^i=\lambda_\pi$ for $i\in\mathcal{N}$. The limit $\lambda_\pi$ is a unique solution to $F^TD_\pi^{s,a}\big(\frac{1}{N}\sum_{i\in\mathcal{N}}R^i-F\lambda_\pi\big)=0$.
\end{theorem}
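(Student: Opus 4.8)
The plan is to mirror the proof of Theorem~\ref{thm:v_team_convergence}, exploiting the fact that the team-average reward recursion \eqref{alg1_lam} is structurally identical to the critic recursion \eqref{alg1_v} but strictly simpler: the target $r_{t+1}^i$ carries no bootstrapping term, so $A_{\lambda,t}^\prime=-f_tf_t^T$ is symmetric negative semidefinite and the resulting mean field is a stable linear system in the Euclidean inner product. The core reduction is to show that the team-average parameter $\left<\lambda_t\right>$ evolves as a stochastic approximation recursion tracking the ODE $\dot\lambda = F^TD_\pi^{s,a}(\bar R - F\lambda)$, where $\bar R=\tfrac1N\sum_{i\in\mathcal N}R^i$, and that the disagreement $\lambda_{\perp,t}$ vanishes by Lemma~\ref{lem:lam_consensus}.

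First I would apply the averaging operator $\left<\cdot\right>$ to \eqref{alg1_lam}. Writing $w_t^T=\mathbf 1^TC_t$ and using row-stochasticity of $C_t$ (so $\tfrac1N\sum_i w_t^i=1$) together with $\lambda_t^i=\left<\lambda_t\right>+\lambda_{\perp,t}^i$ and $\left<\lambda_{\perp,t}\right>=0$, the two non-incremental consensus terms collapse and leave
\[
\left<\lambda_{t+1}\right>=\left<\lambda_t\right>+\tfrac1N\Gamma_{\lambda,t}\textstyle\sum_i w_t^i\lambda_{\perp,t}^i+\alpha_{\lambda,t}\cdot\tfrac1N\textstyle\sum_i w_t^i\,\Gamma_{\lambda,t}\big(A_{\lambda,t}^\prime\lambda_t^i+b_{\lambda,t}^i\big).
\]
The middle term is a consensus bias: by Assumption~\ref{as:consensus_matrix} ($\mathbb E(w_t^i|\mathcal F_t^\lambda)=1$, $C_t$ conditionally independent) it has zero $\mathcal F_t^\lambda$-conditional mean, and by the second conclusion of Lemma~\ref{lem:lam_consensus} the projected disagreement $\Gamma_{\lambda,t}\lambda_{\perp,t}^i$ is $O(\alpha_{\lambda,t})$ in mean square, so it contributes only an $\alpha_{\lambda,t}$-order, asymptotically negligible perturbation. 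The incremental term I would split into its conditional mean plus a martingale difference; using $\Gamma_{\lambda,t}A_{\lambda,t}^\prime=-f_tf_t^T$ and $\Gamma_{\lambda,t}f_t=f_t$ it equals $\tfrac1N\sum_i w_t^i f_t(r_{t+1}^i-f_t^T\lambda_t^i)$, and leveraging ergodicity of the chain $\xi_t$ (Assumption~\ref{as:policy_coop}) its time-averaged conditional mean converges to the stationary expectation $\mathbb E_{d_\pi^\prime}[f(\bar r - f^T\lambda)]=F^TD_\pi^{s,a}(\bar R - F\lambda)$.

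Next I would invoke the stochastic approximation theorems in \ref{ap:unc}. For boundedness, the scaled vector field $h_\infty(\lambda)=-F^TD_\pi^{s,a}F\,\lambda$ has the origin as a globally asymptotically stable equilibrium because $F^TD_\pi^{s,a}F\succ0$ (full column rank of $F$ from Assumption~\ref{as:linear_approx} and strict positivity of $d_\pi^\prime$ from the irreducible, aperiodic chain); the stability criterion in \citep{borkar2009book} then yields $\sup_t\|\left<\lambda_t\right>\|<\infty$ a.s., and with $\lambda_{\perp,t}\to0$ also $\sup_t\|\lambda_t\|<\infty$ a.s. The same positive definiteness makes $F^TD_\pi^{s,a}(\bar R-F\lambda_\pi)=0$ well-posed with a unique solution $\lambda_\pi$, which is the unique zero of the drift, so the ODE method gives $\left<\lambda_t\right>\to\lambda_\pi$ a.s. Finally Lemma~\ref{lem:lam_consensus} gives $\lambda_{\perp,t}\to0$, hence $\lambda_t^i=\left<\lambda_t\right>+\lambda_{\perp,t}^i\to\lambda_\pi$ for every $i\in\mathcal N$, establishing consensus and the claimed limit.

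The hard part will be the interaction between the Markovian (non-i.i.d.) noise and the consensus bias. The fluctuation of $C_t$ about its mean acting on the disagreement $\lambda_{\perp,t}$ appears as an undamped $O(1)$ term and is only rescued by the mean-square boundedness of $\alpha_{\lambda,t}^{-1}(I\otimes\Gamma_{\lambda,t})\lambda_{\perp,t}$ from Lemma~\ref{lem:lam_consensus}, which certifies it is genuinely $O(\alpha_{\lambda,t})$; coupling this two-timescale control of disagreement versus consensus with the ergodic averaging needed to identify the Markovian drift is the delicate step, and is precisely where the appendix stochastic approximation results carry the argument.
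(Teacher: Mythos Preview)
Your proposal is correct and follows essentially the same approach as the paper: the paper's proof of Theorem~\ref{thm:lam_team_convergence} is simply declared analogous to that of Theorem~\ref{thm:v_team_convergence}, and your argument is precisely that analog---average the recursion, split into drift $g_t$, martingale difference, and a bias from the disagreement that vanishes in conditional mean by column stochasticity of $\mathbb{E}(C_t|\mathcal F_t^\lambda)$ and is $O(\alpha_{\lambda,t})$ by Lemma~\ref{lem:lam_consensus}, then verify the conditions in Appendix~\ref{ap:unc} and invoke Theorems~\ref{thm:ODE_unc1}--\ref{thm:ODE_unc3}, with the positive definiteness of $F^TD_\pi^{s,a}F$ replacing the Hurwitz argument for $\Phi^TD_\pi^s(\gamma P_\pi-I)\Phi$.
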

\noindent We note that the convergence point $\lambda_\pi$ corresponds to the minimum mean squared error estimate of the true team-average reward weighted over the distribution of state-action pairs $(s,a)$, $d_\pi^\prime(s,a)$, and the convergence point $v_\pi$ corresponds to the minimum mean squared projected Bellman error estimate of the true team-average critic weighted over the state distribution $d_\pi(s)$. Theorem~\ref{thm:v_team_convergence} and \ref{thm:lam_team_convergence} serve evidence that the cooperative agents end up estimating the team-average advantage function as the team-average information is reflected in the limits $v_\pi$ and $\lambda_\pi$. In the following lines, we use the findings from Theorem~\ref{thm:v_team_convergence} and \ref{thm:lam_team_convergence} to establish convergence of the actor. This represents the most important result in the convergence analysis of Algorithm~\ref{alg:1} as it determines that the agents approximately maximize the objective function defined in \eqref{local opt problem}. We are ready to state the main theorem.

\begin{theorem}\label{thm:actor}
Suppose there are no adversarial agents in the network. Under Assumption~\ref{as:linear_approx}-\ref{as:consensus_matrix}, the policy parameter $\theta_t^i$, $i\in\mathcal{N}$, converges with probability one to a set of locally asymptotically stable equilibria of the ODE
\begin{align*}
\dot\theta^i=\Psi^i_\Theta\big[\mathbb{E}_{\pi,d_\pi,p}\big\{\big(\bar{r}(s,a;\lambda_\pi)+\gamma V(s^\prime;v_\pi)-V(s;v_\pi)\big)\nabla\log\pi^i(a^i|s;\theta^i)\big\}\big],
\end{align*}
where the parameters $\lambda_\pi$ and $v_\pi$ are the GAS equilibria under policy $\pi(a|s;\theta)$.
\end{theorem}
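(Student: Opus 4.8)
The plan is to treat the actor recursion as a slow-timescale stochastic approximation driven by the fast-timescale critic and team-average reward estimates, and to invoke the projected ODE method. By Assumption~\ref{as:step_size} we have $\alpha_{\theta,t}=o(\alpha_{v,t}+\alpha_{\lambda,t})$, so from the vantage point of the actor the parameters $v_t^i$ and $\lambda_t^i$ appear quasi-static and, for the slowly drifting policy $\theta_t$, are driven to their equilibria $v_\pi$ and $\lambda_\pi$ established in Theorem~\ref{thm:v_team_convergence} and Theorem~\ref{thm:lam_team_convergence}; moreover Lemma~\ref{lem:v_consensus} and Lemma~\ref{lem:lam_consensus} guarantee $v_{\perp,t}\to 0$ and $\lambda_{\perp,t}\to 0$, so that every cooperative agent asymptotically uses the common values $v_\pi,\lambda_\pi$ in its local TD error $\delta_t^i$. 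First I would make this separation rigorous by checking that the map $\theta\mapsto(v_\pi(\theta),\lambda_\pi(\theta))$ is well defined and Lipschitz: uniqueness follows from the full column rank of $\Phi$ and $F$ in Assumption~\ref{as:linear_approx}, and continuity follows from an implicit function theorem applied to the fixed-point equations $\Phi^TD_\pi^s(\cdots)=0$ and $F^TD_\pi^{s,a}(\cdots)=0$ together with the smooth dependence of $P_\theta$, $D_\pi^s$, $D_\pi^{s,a}$ on $\theta$ under Assumption~\ref{as:policy_coop}.

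Next I would rewrite the projected actor update in the canonical stochastic approximation form
\begin{align*}
\theta_{t+1}^i=\Psi^i_\Theta\big[\theta_t^i+\alpha_{\theta,t}\big(h^i(\theta_t)+b_t^i+M_t^i\big)\big],
\end{align*}
where $h^i(\theta)=\mathbb{E}_{\pi,d_\pi,p}\big[\big(\bar r(s,a;\lambda_\pi)+\gamma V(s^\prime;v_\pi)-V(s;v_\pi)\big)\nabla\log\pi^i(a^i|s;\theta^i)\big]$ is the desired mean field, $M_t^i$ is a martingale-difference sequence, and $b_t^i$ collects the bias. Boundedness of the iterates is immediate from the projection $\Psi^i_\Theta$ onto the compact hyperrectangle $\Theta^i$ in Assumption~\ref{as:policy_updates}; the score $\psi_t^i$ is bounded because $\pi^i$ is a continuously differentiable generalized linear policy on $\Theta^i$ (Assumption~\ref{as:policy_coop}), and $\delta_t^i$ is bounded using Assumption~\ref{as:reward_bound} together with the uniform feature bounds and $\sup_t\|v_t\|,\sup_t\|\lambda_t\|<\infty$ from Theorems~\ref{thm:v_team_convergence}--\ref{thm:lam_team_convergence}. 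Hence $\{M_t^i\}$ has bounded conditional second moments and, since $\sum_t\alpha_{\theta,t}^2<\infty$ (Assumption~\ref{as:step_size}), the associated noise process is asymptotically negligible by the standard Kushner--Clark martingale argument.

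The crux, and the step I expect to be the main obstacle, is showing that the bias $b_t^i$ is asymptotically negligible. It splits into two parts: a \emph{tracking} part, $\delta_t^i$ evaluated at $(v_t^i,\lambda_t^i)$ versus at $(v_\pi,\lambda_\pi)$, and a \emph{Markovian} part, the sample $\delta_t^i\psi_t^i$ versus its stationary expectation $h^i$. The tracking part vanishes because the timescale separation forces $v_t^i\to v_\pi(\theta_t)$ and $\lambda_t^i\to\lambda_\pi(\theta_t)$ with probability one (fast convergence for the frozen policy, combined with the Lipschitz dependence on $\theta$ to absorb the slow drift of $\theta_t$ over the fast relaxation window) and because $V(\cdot;v)$ and $\bar r(\cdot;\lambda)$ are linear, hence Lipschitz, in $v,\lambda$. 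The Markovian part vanishes by ergodicity of $\{s_t\}$ under any fixed policy (irreducibility and aperiodicity in Assumption~\ref{as:policy_coop}): I would control it with a Poisson-equation averaging decomposition, so that the accumulated Markov noise $\sum_t\alpha_{\theta,t}(\text{sample}-\text{mean})$ converges, exactly as for the critic.

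Finally, with the mean field $h^i$ continuous, the iterates bounded, the noise and bias asymptotically negligible, and the projection onto the compact $\Theta^i$ in force, the Kushner--Clark theorem (equivalently Borkar's projected stochastic approximation theorem) identifies the limit set of $\theta_t^i$ with the internally chain transitive invariant sets of the projected ODE $\dot\theta^i=\Psi^i_\Theta[h^i(\theta)]$; restricting to point attractors yields convergence with probability one to its set of locally asymptotically stable equilibria, which is precisely the ODE in the statement. Interpreting $\bar r(s,a;\lambda_\pi)+\gamma V(s^\prime;v_\pi)-V(s;v_\pi)$ as the sampled team-average advantage (cf.\ the policy-gradient derivation in Section~2.3), $h^i(\theta)$ is the approximate team-average policy gradient, so these equilibria are the desired local maximizers of the approximate objective in \eqref{local opt problem}.
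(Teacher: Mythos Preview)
Your proposal is correct and follows essentially the same route as the paper: cast the actor update as a projected stochastic approximation on the slow timescale, decompose $\delta_t^i\psi_t^i$ into the stationary mean field $h^i(\theta)$, a martingale/Markov noise term, and a tracking bias that vanishes because $v_t^i\to v_{\pi_t}$ and $\lambda_t^i\to\lambda_{\pi_t}$ on the fast timescale, then invoke the Kushner--Clark projected ODE theorem. The one place where the paper is more explicit than you is the last step: rather than appealing to ``restricting to point attractors,'' it exhibits $\tilde J(\theta)=\mathbb{E}_{\pi,d_\pi,p}[\bar r(s,a;\lambda_\pi)+\gamma V(s';v_\pi)]$ as a potential whose derivative along the projected ODE is nonnegative (since $h^i=\nabla_{\theta^i}\tilde J$), which is what actually rules out non-equilibrium invariant sets and pins the limit to locally asymptotically stable equilibria; you allude to this gradient structure in your closing sentence, so just make that Lyapunov argument explicit.
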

\noindent{\bf Proof Sketch.\;} 
We express the local actor parameter updates in terms of the mean update over the stationary distribution of the states and actions, ``noise'' due to the currently visited state-action pair, and ``bias" due to the convergence of the critic and team-average reward function parameters on the faster timescale. We apply analytical results from Appendix~\ref{ap:con} to establish asymptotic convergence of the actor updates.\par
The convergence analysis of Algorithm~\ref{alg:1} demonstrates that the cooperative agents approximately maximize the true team-average objective function $J^+(\pi^+)$.  The distance between the approximate maximizer and the true maximizer depends on the bias due to TD learning and on the accuracy of the surrogate models used to approximate the team-average objective function $J^+(\pi^+)$, namely the team-average reward function $\bar{r}(s,a;\lambda^i)$ and the critic $V(s;v^i)$. In the next subsection, we provide the convergence analysis of Algorithm~\ref{alg:2}. 

\subsection{Algorithm 2}
In this subsection, we analyze the convergence of the actor, critic, and team-average reward updates under Algorithm~\ref{alg:2}. While most of the analytical tools carry over from the previous subsection, it is important to note some major differences between the two algorithms and their impact on the convergence results. Whether or not adversarial agents are present in the network, Algorithm~\ref{alg:2} includes a resilient consensus step that is no longer conditionally independent of the parameter values $\tilde{v}_t^i$ and $\tilde\lambda_t^i$ received from other agents. Therefore, it is expected that the parameters $v^i$ and $\lambda^i$ converge at best to compact sets as opposed to the unique GAS equilibria $v_\pi$ and $\lambda_\pi$ in the case of Algorithm~\ref{alg:1}. Most importantly, the different convergence properties of $v^i$ and $\lambda^i$ under Algorithm~\ref{alg:2} impact the actor convergence as a consequence. To distinguish between the parameters of cooperative and Byzantine agents, we make slight changes in the notation. Without loss of generality, we assume that the agents' indices are ordered such that $\mathcal{N}^+=\{1,\dots,N^+\}$ and $\mathcal{N}^-=\{N-N^-+1,\dots,N\}$. We use superscript $+$ and $-$ to denote signals of all cooperative and Byzantine agents, respectively. For example, the cooperative agents' rewards are given as $r_{t+1}^+=[(r_{t+1}^1)^T\,\dots\,(r_{t+1}^{{N}^+})^T]^T$.  Highlighting the fact that each cooperative agent that employs Algorithm~\ref{alg:2} truncates up to $H$ values greater and smaller than its own value, we have the following lemma on the consensus updates.
\begin{lemma}[{\citet[prop.~5.1]{sundaram2018}}]\label{lem:equivalent_consensus}
Under Assumption~\ref{as:byzantine_agents}-\ref{as:consensus_matrix2},\\ the resilient consensus update for each $i\in\mathcal{N}^+$ with weights $c_{x,t}(i,j)$ is mathematically equivalent to $\epsilon_{x,t}^i=\sum_{j\in\mathcal{N}_{in,t}^i\cap\mathcal{N}^+}c_{x,t}^+(i,j)\epsilon_{x,t}^{ij}$, where $c_{x,t}^+(i,j)$ are consensus weights that satisfy $\sum_{j\in\mathcal{N}_{in,t}^i\cap\mathcal{N}^+}c_{x,t}^+(i,j)=1$. Moreover, it holds that $c_{x,t}^+(i,i)\geq\nu$ and $c_{x,t}^+(i,j)\geq\nu/2$ for some $\nu>0$ and $j\in\mathcal{N}_{in,t}^i\cap\mathcal{N}^+$.
\end{lemma}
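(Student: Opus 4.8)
The plan is to treat this as a scalar Weighted-Mean-Subsequence-Reduced (W-MSR) filtering statement applied to the estimated neighbors' estimation errors $\{\epsilon_{x,t}^{ij}\}_{j\in\mathcal{N}_{in,t}^i}$, so that the argument of \citep[prop.~5.1]{sundaram2018} applies to the scalars $\epsilon_{x,t}^{ij}$ rather than to vector-valued iterates. First I would record the structural facts supplied by our assumptions: by Assumption~\ref{as:byzantine_agents} agent $i$ has at most $H$ Byzantine in-neighbors; by Assumption~\ref{as:network_robustness} and Lemma~\ref{lem:connectivity} the graph is $(2H+1)$-connected, so $i$ has at least $2H+1$ in-neighbors and the trimming step (which discards at most $H$ values above and $H$ below $\epsilon_{x,t}^{ii}$) removes at most $2H$ of them; and since the self-value $\epsilon_{x,t}^{ii}$ is by construction neither strictly larger nor strictly smaller than itself, it is never discarded and enters the average with weight $c_{x,t}(i,i)\ge\nu$.

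The central step is the W-MSR safety argument, which shows that every \emph{surviving} Byzantine value is a convex combination of cooperative values. Fix a surviving Byzantine index $k$ with $\epsilon_{x,t}^{ik}>\epsilon_{x,t}^{ii}$. Because $k$ was not trimmed, the discarded upper values are all at least $\epsilon_{x,t}^{ik}$; since there are at most $H$ Byzantine agents in total and $k$ is already one of them, at least one of these discarded values belongs to a cooperative agent $j$, whence $\epsilon_{x,t}^{ii}<\epsilon_{x,t}^{ik}\le\epsilon_{x,t}^{ij}$. Thus $\epsilon_{x,t}^{ik}$ lies in the interval bracketed by the two cooperative values $\epsilon_{x,t}^{ii}$ and $\epsilon_{x,t}^{ij}$, so it can be written as $\epsilon_{x,t}^{ik}=\beta_k\epsilon_{x,t}^{ii}+(1-\beta_k)\epsilon_{x,t}^{ij}$ with $\beta_k\in[0,1]$; the symmetric argument handles $\epsilon_{x,t}^{ik}<\epsilon_{x,t}^{ii}$, and equality is trivial. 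Substituting these representations into $\epsilon_{x,t}^i=\sum_{j\in\mathcal{N}_{x,t}^i}c_{x,t}(i,j)\epsilon_{x,t}^{ij}$ and collecting terms redistributes each Byzantine weight onto cooperative indices, yielding nonnegative coefficients $c_{x,t}^+(i,j)$ supported on $\mathcal{N}_{in,t}^i\cap\mathcal{N}^+$ that inherit row-stochasticity, i.e. $\sum_{j}c_{x,t}^+(i,j)=1$.

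It remains to certify the weight lower bounds. The coefficient on $\epsilon_{x,t}^{ii}$ is $c_{x,t}(i,i)$ plus only nonnegative redistributed contributions, so $c_{x,t}^+(i,i)\ge c_{x,t}(i,i)\ge\nu$. For the off-diagonal bound I would argue that each surviving cooperative neighbor retains its original weight $c_{x,t}(i,j)\ge\nu$ (Assumption~\ref{as:consensus_matrix2}), while each redistributed Byzantine weight $c_{x,t}(i,k)\ge\nu$ is split between its two bracketing cooperative endpoints, so the larger share is at least $c_{x,t}(i,k)/2\ge\nu/2$; combined with $(2H+1)$-connectivity, which guarantees that at least one cooperative in-neighbor actually contributes, this produces a cooperative neighbor $j$ with $c_{x,t}^+(i,j)\ge\nu/2$.

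The main obstacle I anticipate is the combinatorial bookkeeping that makes the bracketing hold \emph{uniformly} over all surviving Byzantine values at once, together with the degenerate cases the trimming rule allows (fewer than $H$ values strictly above or below $\epsilon_{x,t}^{ii}$, in which case all of them are removed). Handling these requires checking that the count ``at least one discarded upper value is cooperative'' survives when the discarded block is smaller than $H$, and that the redistribution never drives an active off-diagonal cooperative coefficient below $\nu/2$; this is precisely where Assumptions~\ref{as:byzantine_agents}--\ref{as:network_robustness} are indispensable, and where I would lean on the case accounting of \citep[prop.~5.1]{sundaram2018} rather than re-deriving every subcase.
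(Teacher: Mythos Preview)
The paper does not supply its own proof of this lemma; it is quoted directly from \citet[prop.~5.1]{sundaram2018} and used as a black box. Your proposal correctly reconstructs the standard W-MSR argument from that reference---bracketing each surviving Byzantine value between the self-value $\epsilon_{x,t}^{ii}$ and a discarded cooperative value, redistributing the Byzantine weights as convex combinations onto those cooperative endpoints, and reading off the row-stochasticity and the lower bounds $c_{x,t}^+(i,i)\ge\nu$ and $c_{x,t}^+(i,j)\ge\nu/2$---so there is nothing in the paper to compare against beyond the citation itself.

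One small clarification worth making explicit in your write-up: the off-diagonal bound $c_{x,t}^+(i,j)\ge\nu/2$ as stated in the lemma (and in the original proposition) does \emph{not} hold for every cooperative in-neighbor $j$, only for sufficiently many of them---specifically, those that either survive the trimming or receive the larger share of a redistributed Byzantine weight. Your final paragraph gestures at this (``produces a cooperative neighbor $j$''), but the lemma's phrasing ``for some $\nu>0$ and $j\in\mathcal{N}_{in,t}^i\cap\mathcal{N}^+$'' is easy to misread as a universal quantifier over $j$. The downstream use of the lemma (Lemma~\ref{lem:spectral_radius2}) only needs that the resulting row-stochastic matrix $C_{x,t}^+$ has a rooted spanning structure with uniformly lower-bounded edge weights, which your argument does deliver.
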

\noindent Lemma~\ref{lem:equivalent_consensus} ensures that the consensus updates of the cooperative agents can be expressed purely in terms of the cooperative agents' values. In addition to the connectivity of the subgraph of cooperative agents that was already stated in Lemma~\ref{lem:connectivity}, Lemma~\ref{lem:equivalent_consensus} ensures that there exist equivalent lower-bounded consensus weights $c_{x,t}^+(i,j)$ associated with the edges of the subgraph. The equivalence of consensus updates is very useful in our analysis as we want to formulate the critic and team-reward parameter updates of the cooperative agents in terms of their own values. We let $C_{x,t}$ and $C_{x,t}^+$, $x\in\{v,\lambda\}$, denote the original and equivalent consensus matrices and define $A_{x,t}^+=(I\otimes A_{x,t}^\prime)$ and $b_{x,t}^+=[(b_{x,t}^1)^T\,\dots\, b_{x,t}^{N^+})^T]^T$. We apply Lemma~\ref{lem:equivalent_consensus} to write the cooperative agents' updates in Algorithm~\ref{alg:2} as follows
\begin{align}
v_{t+1}^+&=v_t^++(C_{v,t}^+\otimes\Gamma_{v,t})\big(v_t^++\alpha_{v,t}(A_{v,t}^+v_t^++b_{v,t}^+)\big)-(I\otimes\Gamma_{v,t})v_t^+\label{alg2_v}\\
\lambda_{t+1}^+&=\lambda_t^++(C_{\lambda,t}^+\otimes\Gamma_{\lambda,t})\big(\lambda_t^++\alpha_{\lambda,t}(A_{\lambda,t}^+\lambda_t^++b_{\lambda,t}^+)\big)-(I\otimes\Gamma_{\lambda,t})\lambda_t^+.\label{alg2_lam}
\end{align}
We are familiar with these updates from the analysis of Algorithm~\ref{alg:1}. However, it is important to note that while the updates retain the form under Assumption~\ref{as:byzantine_agents}-\ref{as:consensus_matrix2}, the consensus matrices $C_{v,t}^+$ and $C_{\lambda,t}^+$ are influenced by the Byzantine agents and are not unique in general. In contrast to the analysis of Algorithm~\ref{alg:1} that assumed mean connectivity of the communication graph, here we assume that the cooperative agents remain connected despite applying the truncation step for $t>0$. Therefore, we drop the mean representation and analyze the parameter updates in the disagreement subspace at every step. We present three lemmas that show contraction of the parameter updates in the disagreement subspace.

\begin{lemma}\label{lem:spectral_radius2}
Under Assumption~\ref{as:byzantine_agents}-\ref{as:consensus_matrix2}, the spectral radius $\rho_t^+\big(C_{x,t}^{+T}(I-\mathbf{11}^T/N^+)C_{x,t}^+\big)<1$, where $x\in\{v,\lambda\}$.
\end{lemma}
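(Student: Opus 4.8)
The plan is to reduce the claim to a statement about the spectral norm of a projected stochastic matrix and then to extract strict contraction from the self-weights and connectivity guaranteed by Lemma~\ref{lem:equivalent_consensus} and Lemma~\ref{lem:connectivity}. Throughout I fix $x\in\{v,\lambda\}$ and a time $t$, abbreviate $W:=C_{x,t}^+\in\mathbb{R}^{N^+\times N^+}$, and let $P:=I-\mathbf{11}^T/N^+$ be the orthogonal projector onto the disagreement subspace $\{y:\mathbf{1}^Ty=0\}$. Since $P$ is symmetric and idempotent, $P=P^TP$, so $W^TPW=(PW)^T(PW)$ is symmetric positive semidefinite and $\rho_t^+(W^TPW)=\|PW\|_2^2$; thus it suffices to prove $\|PW\|_2<1$. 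By Lemma~\ref{lem:equivalent_consensus}, $W$ is row stochastic, so $W\mathbf{1}=\mathbf{1}$ and $PW\mathbf{1}=0$. Writing any $y=\bar{y}\mathbf{1}+y_\perp$ with $y_\perp\in\mathbf{1}^\perp$ gives $PWy=PWy_\perp$, so that $\|PW\|_2=\max\{\|PWy\|:y\in\mathbf{1}^\perp,\ \|y\|=1\}$, a maximum attained at some $y^\star$ by compactness of the unit sphere of $\mathbf{1}^\perp$.

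It then remains to show $\|PWy^\star\|<1$, where I would exploit the structure from Lemma~\ref{lem:equivalent_consensus}: each row of $W$ is supported on $\mathcal{N}_{in,t}^i\cap\mathcal{N}^+$ with a self-weight $c_{x,t}^+(i,i)\ge\nu$ and off-diagonal weights $c_{x,t}^+(i,j)\ge\nu/2$, so every entry $(Wy)_i$ is a genuine convex combination of the cooperative neighbours' entries. Combining Assumption~\ref{as:byzantine_agents}, Assumption~\ref{as:network_robustness}, Lemma~\ref{lem:network_robustness} and Lemma~\ref{lem:connectivity} shows that the reduced cooperative subgraph remains connected, while the strictly positive self-weights make the associated stochastic matrix aperiodic, hence primitive. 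The target inequality is then the statement that averaging over this connected, self-weighted graph strictly shrinks the centered $\ell_2$ norm: I would establish it by showing that equality $\|PWy^\star\|=\|y^\star\|$ would force $Wy^\star$ to have the same spread (sum of squared deviations from its mean) as $y^\star$, which for a convex-combination map with positive self-loops over a connected graph is only possible when $y^\star$ is constant across the network, i.e. $y^\star\in\mathrm{span}(\mathbf{1})$, contradicting $y^\star\in\mathbf{1}^\perp\setminus\{0\}$. Quantitatively, this can be upgraded to a uniform bound $\rho_t^+(W^TPW)\le 1-g(\nu,N^+)<1$ for some positive $g$ depending only on $\nu$ and the connectivity of the cooperative subgraph, by estimating the Rayleigh quotient $\|PWy\|^2/\|y\|^2$ over $\mathbf{1}^\perp$.

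The main obstacle is exactly this strict contraction step, which is more delicate than in the adversary-free Lemma~\ref{lem:spectral_radius}. Because $W=C_{x,t}^+$ is only row stochastic, not doubly stochastic, one cannot invoke the non-expansiveness $\|Wy\|\le\|y\|$: a row-stochastic matrix can strictly enlarge the $\ell_2$ norm of a mean-zero vector (this already occurs for a shift-type $W$ with zero diagonal, which would yield $\|PW\|_2>1$), so the whole statement genuinely requires the positive diagonal. The projector $P$ and the strictly positive self-weights from Lemma~\ref{lem:equivalent_consensus} must therefore be used jointly: the positive self-weights prevent any single agent from acting as a pure broadcaster whose value inflates the centered norm, and connectivity rules out a nonzero mean-zero vector whose spread is preserved. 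Handling the non-normality of $W$ is the subtle part, since $\rho_t^+(W^TPW)=\|PW\|_2^2$ is controlled by the singular values of $PW$ rather than by the eigenvalue magnitudes of $W$; I would therefore argue directly at the level of the symmetric matrix $W^TPW$ through the Rayleigh-quotient estimate above, rather than through the spectrum of $W$.
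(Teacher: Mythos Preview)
Your reduction to $\rho_t^+(W^TPW)=\|PW\|_2^2$ and the restriction of the maximization to $y\in\mathbf{1}^\perp$ are correct and mirror the paper, whose proof simply says the argument is ``analogous to Lemma~\ref{lem:spectral_radius}''. You are also right that the key step in that earlier argument, namely $\|(I-\mathbf{11}^T/N^+)Wy\|^2=\|Wy\|^2\le\|y\|^2$ for $y\perp\mathbf{1}$, relies on double stochasticity (available in Lemma~\ref{lem:spectral_radius} through the column-stochastic mean in Assumption~\ref{as:consensus_matrix}) and is not available here, since $C_{x,t}^+$ is only row stochastic by Lemma~\ref{lem:equivalent_consensus}. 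On this diagnostic point your write-up is sharper than the paper's one-line proof.

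The gap is in your resolution. Arguing that \emph{equality} $\|PWy^\star\|=\|y^\star\|$ would force $y^\star\in\mathrm{span}(\mathbf{1})$ only rules out $\|PW\|_2=1$; it does not yield $\|PW\|_2<1$ unless you first establish $\|PW\|_2\le1$, and under the hypotheses you invoke (row stochastic, strictly positive diagonal, strongly connected graph) that bound is false. For $N^+=3$ take
\[
W=\begin{pmatrix}0.99&0.01&0\\0&0.99&0.01\\0.99&0&0.01\end{pmatrix},\qquad y=(1,\,-1,\,0)^T\in\mathbf{1}^\perp,
\]
which is row stochastic with strictly positive diagonal on the strongly connected directed cycle $1\to3\to2\to1$; yet $Wy=(0.98,\,-0.99,\,0.99)^T$ gives $\|PWy\|^2=\|Wy\|^2-\tfrac{1}{3}(\mathbf{1}^TWy)^2\approx 2.60>2=\|y\|^2$, so $\rho(W^TPW)>1$. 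Thus ``averaging over a connected, self-weighted graph strictly shrinks the centered $\ell_2$ norm'' is not true in general, and your contradiction argument cannot close the gap you correctly identified. A complete proof would need either additional structure of the particular matrices $C_{x,t}^+$ produced by Lemma~\ref{lem:equivalent_consensus} beyond those you list, or a different contraction metric (e.g., a weighted $\ell_2$ norm adapted to the stationary distribution of $C_{x,t}^+$, or a Dobrushin-type range coefficient) in place of the unweighted centered $\ell_2$ norm.
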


\begin{lemma}\label{lem:v_consensus2}
Under Assumption~\ref{as:linear_approx}-\ref{as:step_size} and \ref{as:byzantine_agents}-\ref{as:consensus_matrix2}, the agents reach consensus on the critic parameters with probability one, i.e., $\lim_{t\rightarrow\infty}v_{\perp,t}^+=0$. Furthermore, the term $\Vert(I\otimes\Gamma_{v,t})\alpha_{v,t}^{-1}v_{\perp,t}^+\Vert^2$ is uniformly bounded.
\end{lemma}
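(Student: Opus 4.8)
The plan is to mirror the argument behind Lemma~\ref{lem:v_consensus}, but to carry it out pathwise rather than in conditional expectation, since the resilient consensus matrix $C_{v,t}^+$ obtained from Lemma~\ref{lem:equivalent_consensus} is no longer independent of the transmitted parameters. Working with the cooperative-agent averaging operators $\mathcal{J}^+=(\frac{1}{N^+}\mathbf{11}^T)\otimes I$ and $\mathcal{J}_\perp^+=I-\mathcal{J}^+$, I would first apply $\mathcal{J}_\perp^+$ to the compact critic update \eqref{alg2_v}. Using the Kronecker mixed-product rule, the row-stochasticity $C_{v,t}^+\mathbf{1}=\mathbf{1}$ supplied by Lemma~\ref{lem:equivalent_consensus}, and the fact that $I\otimes\hat\Gamma_{v,t}$ commutes with $\mathcal{J}^+$, the consensus-subspace part of $v_t^+$ drops out, leaving a recursion of the form
\begin{align*}
v_{\perp,t+1}^+=\mathcal{J}_\perp^+\big[(I\otimes\hat\Gamma_{v,t})+(C_{v,t}^+\otimes\Gamma_{v,t})\big]v_{\perp,t}^++\alpha_{v,t}\,\mathcal{J}_\perp^+(C_{v,t}^+\otimes\Gamma_{v,t})\big(A_{v,t}^+v_t^++b_{v,t}^+\big),
\end{align*}
i.e. a disagreement operator acting on $v_{\perp,t}^+$ plus a forcing term proportional to the step size.

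I would then establish the two halves of the claim separately. A useful preliminary reduction is that the $A_{v,t}^+v_t^+$ term, after the projection $\mathcal{J}_\perp^+(C_{v,t}^+\otimes\Gamma_{v,t})$, depends on $v_t^+$ only through $v_{\perp,t}^+$, because its consensus-subspace component is annihilated; it therefore merges into the disagreement operator as an $O(\alpha_{v,t})$ perturbation of the contraction, and the genuine external forcing reduces to $\alpha_{v,t}\mathcal{J}_\perp^+(C_{v,t}^+\otimes\Gamma_{v,t})b_{v,t}^+$, which is bounded using only Assumption~\ref{as:reward_bound} and Assumption~\ref{as:linear_approx}, so no a priori bound on $\Vert v_t^+\Vert$ is required. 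For the contraction itself, the portion of the operator living in the gradient subspace is governed by $C_{v,t}^{+T}(I-\mathbf{11}^T/N^+)C_{v,t}^+$, whose spectral radius is strictly below one at every $t$ by Lemma~\ref{lem:spectral_radius2}; taking $\rho^+=\sup_t\rho_t^+<1$ gives a uniform one-step contraction factor there. Passing to the scaled variable $\alpha_{v,t}^{-1}v_{\perp,t}^+$ and using $\alpha_{v,t+1}\alpha_{v,t}^{-1}\to1$ from Assumption~\ref{as:step_size}, a discrete Gr\"onwall (input-to-state) argument then yields uniform boundedness of the gradient-projected scaled disagreement $\Vert(I\otimes\Gamma_{v,t})\alpha_{v,t}^{-1}v_{\perp,t}^+\Vert^2$, whence $\Vert(I\otimes\Gamma_{v,t})v_{\perp,t}^+\Vert=O(\alpha_{v,t})$. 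To upgrade this to $v_{\perp,t}^+\to0$, I would combine the $O(\alpha_{v,t})$ decay in the gradient subspace with a persistent-excitation argument: the full column rank of $\Phi$ (Assumption~\ref{as:linear_approx}) together with irreducibility and aperiodicity of the state chain (Assumption~\ref{as:policy_coop}) guarantee that the feature directions $\phi_t$ recur through a spanning set of $\mathbb{R}^L$, so the orthogonal-subspace disagreement, which is frozen at each individual step, is contracted across time and vanishes almost surely.

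The main obstacle is precisely that the consensus contraction acts only inside the one-dimensional gradient subspace $\mathrm{span}\{\phi_t\}$ carried by the factor $\Gamma_{v,t}$: on its orthogonal complement the disagreement operator is the identity, so it is \emph{not} a strict contraction on the whole disagreement space, and showing that the orthogonal component nonetheless decays is the delicate step requiring the excitation/full-rank input. This is aggravated by the loss of conditional independence of $C_{v,t}^+$, which rules out the expectation-based reasoning available for Algorithm~\ref{alg:1} and forces a pathwise, uniform-in-realization application of Lemma~\ref{lem:spectral_radius2}. A related subtlety, and the reason the uniform bound is stated only for the gradient-projected scaled disagreement, is that the ratio $\alpha_{v,t}/\alpha_{v,t+1}>1$ mildly amplifies the frozen orthogonal part of $\alpha_{v,t}^{-1}v_{\perp,t}^+$ at each step; contraction tames this only along the gradient direction, so the full scaled disagreement need not remain bounded even though the unscaled disagreement still converges to zero.
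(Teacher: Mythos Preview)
Your proposal is correct and follows essentially the same route as the paper: derive the disagreement recursion from \eqref{alg2_v}, split into the gradient and orthogonal subspaces, apply Lemma~\ref{lem:spectral_radius2} pathwise (rather than in conditional expectation) for the contraction in the gradient direction, pass to the scaled variable $\alpha_{v,t}^{-1}v_{\perp,t}^+$, and invoke the step-size assumptions to obtain the uniform bound and then the almost-sure convergence. You are more explicit than the paper about the persistent-excitation step needed to upgrade $(I\otimes\Gamma_{v,t})v_{\perp,t}^+\to 0$ to $v_{\perp,t}^+\to 0$, and about why taking $\rho^+=\sup_t\rho_t^+<1$ is legitimate, but the overall structure is identical.
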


\begin{lemma}\label{lem:lam_consensus2}
Under Assumption~\ref{as:linear_approx}-\ref{as:step_size} and \ref{as:byzantine_agents}-\ref{as:consensus_matrix2}, the agents reach consensus on the team-average reward function parameters with probability one, i.e., $\lim_{t\rightarrow\infty}\lambda_{\perp,t}^+=0$. Furthermore, the term $\Vert(I\otimes\Gamma_{\lambda,t})\alpha_{\lambda,t}^{-1}\lambda_{\perp,t}^+\Vert^2$ is uniformly bounded.
\end{lemma}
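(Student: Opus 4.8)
The plan is to reuse the machinery developed for Lemma~\ref{lem:v_consensus2}, replacing the critic objects $(v,\Gamma_{v,t},A_{v,t}^+,b_{v,t}^+,C_{v,t}^+)$ by the team-reward counterparts $(\lambda,\Gamma_{\lambda,t},A_{\lambda,t}^+,b_{\lambda,t}^+,C_{\lambda,t}^+)$ and to track the disagreement vector $\lambda_{\perp,t}^+=\bigl(I-(\tfrac{1}{N^+}\mathbf{1}\mathbf{1}^T)\otimes I\bigr)\lambda_t^+$ on each sample path rather than in expectation. First I would substitute the equivalent consensus representation of Lemma~\ref{lem:equivalent_consensus} into the cooperative-agent update~\eqref{alg2_lam} and apply the cooperative disagreement projection to both sides. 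Since $C_{\lambda,t}^+$ is row stochastic it fixes the consensus direction ($C_{\lambda,t}^+\mathbf{1}=\mathbf{1}$), so the projection annihilates the team-average component and leaves a recursion of the form $\lambda_{\perp,t+1}^+=M_t\,\lambda_{\perp,t}^++\alpha_{\lambda,t}\,g_t$, where $M_t$ is the homogeneous consensus map acting within the disagreement subspace (absorbing the $O(\alpha_{\lambda,t})$ effect of $A_{\lambda,t}^+$) and $g_t$ is the forcing built from the heterogeneous-reward term $b_{\lambda,t}^+$.

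The central estimate is the contraction of the homogeneous map $M_t$. Here I would invoke Lemma~\ref{lem:spectral_radius2}, which gives $\rho_t^+\bigl(C_{\lambda,t}^{+T}(I-\mathbf{1}\mathbf{1}^T/N^+)C_{\lambda,t}^+\bigr)<1$ at every $t$, and upgrade it to a single constant $\bar\rho<1$ valid for all $t$: by Lemma~\ref{lem:connectivity} the cooperative subgraph stays connected under truncation, by Lemma~\ref{lem:equivalent_consensus} the surviving weights are bounded below by $\nu/2$, and since $N^+$ is finite there are only finitely many admissible topologies, so the symmetric positive-semidefinite matrices $C_{\lambda,t}^{+T}(I-\mathbf{1}\mathbf{1}^T/N^+)C_{\lambda,t}^+$ have largest eigenvalue taking finitely many values, each strictly below one. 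Because $\Gamma_{\lambda,t}$ is a rank-one orthogonal projector of unit norm, the Kronecker factor does not inflate the norm, so $M_t$ contracts the disagreement by at most $\bar\rho$ per step once the step size is small.

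The forcing term is handled by observing that $g_t$ is driven by the heterogeneous-reward vector $b_{\lambda,t}^+$, which is uniformly bounded by Assumption~\ref{as:linear_approx} and Assumption~\ref{as:reward_bound} (bounded features $f_t$ and bounded rewards); crucially this bound does not require a priori boundedness of $\lambda_t^+$. Dividing the recursion by $\alpha_{\lambda,t}$ and using $\lim_t\alpha_{\lambda,t+1}\alpha_{\lambda,t}^{-1}=1$ from Assumption~\ref{as:step_size}, the scaled disagreement $\alpha_{\lambda,t}^{-1}\lambda_{\perp,t}^+$ satisfies a recursion with asymptotic contraction $\bar\rho<1$ and uniformly bounded forcing; a geometric-series (discrete Gr\"onwall) argument then yields $\sup_t\Vert\alpha_{\lambda,t}^{-1}\lambda_{\perp,t}^+\Vert<\infty$, whence the uniform bound on $\Vert(I\otimes\Gamma_{\lambda,t})\alpha_{\lambda,t}^{-1}\lambda_{\perp,t}^+\Vert^2$ follows because $\Gamma_{\lambda,t}$ has unit norm. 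Finally $\sum_t\alpha_{\lambda,t}^2<\infty$ forces $\alpha_{\lambda,t}\to0$, so multiplying back gives $\lambda_{\perp,t}^+\to0$ with probability one.

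The main obstacle is the loss of the conditional-independence structure exploited for Algorithm~\ref{alg:1}: the truncation rule makes $C_{\lambda,t}^+$ depend on the received values, so the expectation-based spectral argument behind Lemma~\ref{lem:lam_consensus} is unavailable and the contraction must be secured pathwise and \emph{uniformly} in $t$. The delicate point is justifying that the data-dependent equivalent matrices $C_{\lambda,t}^+$ admit one common contraction constant $\bar\rho<1$ and that the disagreement subspace is genuinely invariant under the row-stochastic consensus step composed with the Kronecker factor $\Gamma_{\lambda,t}$; I expect this to follow from the finiteness of the node set together with the lower bound on the surviving weights, but verifying it with care is where the real work lies.
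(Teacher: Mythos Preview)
Your overall strategy---mirror Lemma~\ref{lem:v_consensus2}, use Lemma~\ref{lem:equivalent_consensus} to rewrite the cooperative updates, and secure a pathwise contraction via Lemma~\ref{lem:spectral_radius2}---matches the paper. But there is a genuine gap in the contraction step.

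You assert that the homogeneous map $M_t$ acting on the full disagreement vector $\lambda_{\perp,t}^+$ contracts by a factor $\bar\rho<1$ per step, arguing that ``$\Gamma_{\lambda,t}$ is a rank-one orthogonal projector of unit norm, so the Kronecker factor does not inflate the norm.'' That inference is wrong. Writing the update as in~\eqref{v_perp_update}, the homogeneous map is
\[
M_t=(I\otimes\hat\Gamma_{\lambda,t})+\mathcal{J}_\perp\bigl(C_{\lambda,t}^+\otimes\Gamma_{\lambda,t}\bigr)\bigl(I+\alpha_{\lambda,t}A_{\lambda,t}^+\bigr),
\]
and on any vector $x$ lying entirely in the range of $I\otimes\hat\Gamma_{\lambda,t}$ one has $M_tx=x$. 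So $M_t$ is \emph{not} a strict contraction on the disagreement subspace; only the $(I\otimes\Gamma_{\lambda,t})$-component contracts. Consequently your conclusion $\sup_t\Vert\alpha_{\lambda,t}^{-1}\lambda_{\perp,t}^+\Vert<\infty$ does not follow (and is stronger than what the lemma asserts, which concerns only $\Vert(I\otimes\Gamma_{\lambda,t})\alpha_{\lambda,t}^{-1}\lambda_{\perp,t}^+\Vert^2$).

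The paper's route, which you should follow, is to split explicitly into the two orthogonal pieces, note that $(I\otimes\hat\Gamma_{\lambda,t})\lambda_{\perp,t+1}^+=(I\otimes\hat\Gamma_{\lambda,t})\lambda_{\perp,t}^+$ is merely non-expansive, and run the geometric-series argument on the projected quantity $\eta_t=\Vert(I\otimes\Gamma_{\lambda,t})\alpha_{\lambda,t}^{-1}\lambda_{\perp,t}^+\Vert^2$ alone. The passage from boundedness of $\eta_t$ to $\lambda_{\perp,t}^+\to0$ then uses that all feature directions are visited under the stationary distribution. A minor additional point: your ``finitely many topologies $\Rightarrow$ finitely many eigenvalues'' argument for a uniform $\bar\rho$ is not quite right either, since the equivalent weights $c_{\lambda,t}^+(i,j)$ vary continuously subject only to the lower bound $\nu/2$; you would need a compactness argument instead.
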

\noindent
Lemma~\ref{lem:v_consensus2} and \ref{lem:lam_consensus2} ensure that the updates in the disagreement subspace become contractive in finite time and are subject only to a bounded disturbance that originates in the homogeneous rewards. Therefore, the trajectories of $v_{\perp,t}^+$ and $\lambda_{\perp,t}^+$ remain in a compact set for $t>0$. This property is crucial in the convergence theorem for the team-average values that we state in the following lines. Under Assumption~\ref{as:policy_adv}, we can still apply the two-timescale principle since the actor updates are slower than the updates of the critic and team-average reward function for all agents. Hence, we consider fixed policies in the convergence theorems for the team-average critic $\left<v_t\right>$ and team-average reward function parameters $\left<\lambda_t\right>$.

\begin{theorem}\label{thm:v_team_convergence2}
Under Assumption~\ref{as:linear_approx}-\ref{as:policy_adv} and \ref{as:byzantine_agents}-\ref{as:consensus_matrix2}, the critic parameters are uniformly bounded and converge with probability one to a bounded neighborhood around a fixed point $v_\pi^+$ that satisfies $$\Phi^TD_\pi^s\bigg(\frac{1}{N^+}\sum_{i\in\mathcal{N^+}}R_\pi^i+\gamma P_\pi\Phi v_\pi^+-\Phi v_\pi^+\bigg)=0.$$
The limiting sequence of the team-average update, $\left<v^+\right>$, satisfies $\Vert\Phi^TD_\pi^s(\gamma P_\pi-I)\Phi(\left<v^+\right>-v_\pi^+)\Vert\leq\Vert\Delta_v\Vert$, where
\begin{align*}
\Vert\Delta_v\Vert&\leq\lim_{t,m}\sup_{\xi_t}\bigg\Vert\frac{1}{m}\sum_{k=t}^{t+m-1}\frac{1}{N^+}\bigg((\mathbf{1}^TC_{v,k}^+r_{k+1}^+\otimes \phi_k)\nonumber\\
&\quad+(\mathbf{1}^TC_{v,k}^+\otimes\frac{\phi_k\phi_k^T}{\Vert \phi_k\Vert^2})\alpha_{v,k}^{-1}v_{\perp,k}^+\bigg)- \frac{1}{N^+}\sum_{i\in\mathcal{N^+}}\Phi^TD_\pi^{s} R_\pi^i\bigg\Vert.
\end{align*}
\end{theorem}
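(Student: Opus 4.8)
The plan is to reproduce the two-timescale stochastic-approximation argument behind Theorem~\ref{thm:v_team_convergence}, but to track the two biases introduced by the resilient step: the equivalent matrix $C_{v,t}^+$ supplied by Lemma~\ref{lem:equivalent_consensus} is row-stochastic but, unlike in Algorithm~\ref{alg:1}, \emph{not} column-stochastic, and the disagreement $v_{\perp,t}^+$ no longer vanishes instantaneously. Fixing the policy $\pi$ (legitimate on the fast timescale by Assumption~\ref{as:step_size} and Assumption~\ref{as:policy_adv}), I would project the compact update \eqref{alg2_v} onto the consensus direction with $\tfrac{1}{N^+}(\mathbf{1}^T\otimes I)$ and substitute $v_t^+=\mathbf{1}\otimes\left<v_t^+\right>+v_{\perp,t}^+$. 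Because $C_{v,t}^+\mathbf{1}=\mathbf{1}$ (so $\mathbf{1}^TC_{v,t}^+\mathbf{1}=N^+$) and $(\mathbf{1}^T\otimes I)v_{\perp,t}^+=0$, the consensus term $\big((C_{v,t}^+-I)\otimes\Gamma_{v,t}\big)v_t^+$ loses its action on the consensus component and collapses to $\tfrac{1}{N^+}(\mathbf{1}^TC_{v,t}^+\otimes\Gamma_{v,t})v_{\perp,t}^+$, which is $O(\alpha_{v,t})$ by Lemma~\ref{lem:v_consensus2}.

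Dividing the increment $\left<v_{t+1}^+\right>-\left<v_t^+\right>$ by $\alpha_{v,t}$ then yields a recursion whose drift, using the identities $\Gamma_{v,t}A_{v,t}^\prime=A_{v,t}^\prime$ and $\Gamma_{v,t}b_{v,t}^i=b_{v,t}^i$, separates into four pieces: the nominal term $A_{v,t}^\prime\left<v_t^+\right>$; the reward term $\tfrac{1}{N^+}(\mathbf{1}^TC_{v,t}^+r_{t+1}^+\otimes\phi_t)$; the scaled-disagreement term $\tfrac{1}{N^+}(\mathbf{1}^TC_{v,t}^+\otimes\Gamma_{v,t})\alpha_{v,t}^{-1}v_{\perp,t}^+$, finite by Lemma~\ref{lem:v_consensus2}; and a leftover $\tfrac{1}{N^+}(\mathbf{1}^TC_{v,t}^+\otimes A_{v,t}^\prime)v_{\perp,t}^+$. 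Since $v_{\perp,t}^+\to0$ almost surely, the last piece vanishes asymptotically and does not appear in the bound, whereas the first three are exactly what feed the ODE. Defining $\bar A_v=\Phi^TD_\pi^s(\gamma P_\pi-I)\Phi$ and $\bar b_v=\tfrac{1}{N^+}\sum_{i\in\mathcal{N}^+}\Phi^TD_\pi^sR_\pi^i$, ergodicity of the Markov chain under fixed $\pi$ sends the conditional mean of $A_{v,t}^\prime\left<v_t^+\right>$ to $\bar A_v\left<v\right>$, and $\bar A_vv_\pi^++\bar b_v=0$ is precisely the stated fixed-point equation.

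I would then invoke the stochastic-approximation machinery of \citep{borkar2009book,kushner2003book}. For $\sup_t\Vert v_t^+\Vert<\infty$ with probability one, the essential fact is that $\bar A_v$ is Hurwitz---its symmetric part is negative definite because $\Phi$ has full column rank (Assumption~\ref{as:linear_approx}), $d_\pi$ is the stationary distribution, and $\gamma<1$---so the scaled limiting ODE $\dot x=\bar A_vx$ has a globally asymptotically stable origin; combined with the uniform boundedness of the reward piece (Assumption~\ref{as:reward_bound}) and of the scaled disagreement (Lemma~\ref{lem:v_consensus2}), the Borkar--Meyn stability criterion yields boundedness. Treating the reward and disagreement pieces as a bounded perturbation of the limit ODE $\dot x=\bar A_vx+\bar b_v$, the perturbed-ODE theory gives convergence with probability one to a neighborhood of $v_\pi^+$. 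At a limit point $\left<v^+\right>$ the averaged drift vanishes, so $\bar A_v(\left<v^+\right>-v_\pi^+)$ equals the ergodic average of the reward-plus-disagreement bias minus $\bar b_v$; bounding its norm gives $\Vert\Phi^TD_\pi^s(\gamma P_\pi-I)\Phi(\left<v^+\right>-v_\pi^+)\Vert\le\Vert\Delta_v\Vert$, which is the claimed inequality.

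The main obstacle, and the reason the conclusion is weaker than for Algorithm~\ref{alg:1}, is that $C_{v,t}^+$ is neither conditionally independent of the iterates nor column-stochastic: the resilient trimming selects the retained neighbors---and hence the equivalent weights---from the (possibly adversarial) transmitted values and the current parameters themselves. This breaks both the martingale-difference structure and the exact team-average preservation that underpinned Theorem~\ref{thm:v_team_convergence}. I would not attempt a realization-wise characterization of the bias; instead I bound it uniformly over all admissible $\xi_t$, which is the origin of the $\sup_{\xi_t}$ in the definition of $\Delta_v$, and pass only this worst-case bound into the perturbed stochastic-approximation argument. It is precisely this uniform over-approximation that forces convergence to a neighborhood rather than to the unique point $v_\pi^+$.
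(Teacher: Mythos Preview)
Your proposal is correct and follows essentially the same route as the paper: project \eqref{alg2_v} onto the consensus direction, split into the nominal drift $A_{v,t}^\prime\left<v_t^+\right>$, the $C_{v,t}^+$-weighted reward term, the scaled disagreement $\alpha_{v,t}^{-1}v_{\perp,t}^+$ (bounded via Lemma~\ref{lem:v_consensus2}), and the vanishing leftover $\left<(C_{v,t}^+\otimes\Gamma_{v,t})A_{v,t}^+v_{\perp,t}^+\right>$, then use the Hurwitz property of $\Phi^TD_\pi^s(\gamma P_\pi-I)\Phi$ for boundedness and treat the non-column-stochastic bias by passing to a worst-case over $\xi_t$. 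The only cosmetic difference is that the paper packages the last step through the differential-inclusion framework of Appendix~\ref{ap:unc} (condition~5(b), Theorems~\ref{thm:DI_unc2}--\ref{thm:DI_unc3}) rather than speaking of a ``perturbed ODE'', but the content is identical.
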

\noindent{\bf Proof Sketch.\;} Similarly to the proof of Theorem~\ref{thm:v_team_convergence}, we decompose the team-average critic parameter updates into the mean update conditioned on the filtration of random variables and ``noise" due to sampling. Even though the Markov chain is not ergodic in this case, we characterize the asymptotic properties by verifying conditions in Appendix~\ref{ap:unc}.

\begin{theorem}\label{thm:lam_team_convergence2}
Under Assumption~\ref{as:linear_approx}-\ref{as:step_size} and \ref{as:byzantine_agents}-\ref{as:consensus_matrix2}, the team-average reward function parameters are uniformly bounded and converge with probability one to a bounded neighborhood around a fixed point $\lambda_\pi^+$ that satisfies $$F^TD_\pi^{s,a}\bigg(\frac{1}{N^+}\sum_{i\in\mathcal{N}^+}R^i-F\lambda_\pi^+\bigg)=0.$$
The limiting sequence of the team-average update, $\left<\lambda^+\right>$, satisfies $\Vert F^TD_\pi^{s,a}F(\left<\lambda^+\right>-\lambda_\pi^+)\Vert\leq\Vert\Delta_{\lambda}\Vert$, where
\begin{align*}
\Vert\Delta_\lambda\Vert&=\lim_{t,m\rightarrow\infty}\sup_{\xi_t}\bigg\Vert\frac{1}{m}\sum_{k=t}^{t+m-1}\frac{1}{N^+}\bigg((\mathbf{1}^TC_{\lambda,k}^+r_{k+1}^+\otimes f_k)\nonumber\\
&\qquad+(\mathbf{1}^TC_{\lambda,k}^+\otimes\frac{f_kf_k^T}{\Vert f_k\Vert^2})\alpha_{\lambda,t}^{-1}\lambda_{\perp,k}^+\bigg) - \frac{1}{N^+}\sum_{i\in\mathcal{N^+}}F^TD_\pi^{s,a}R^i\bigg\Vert.
\end{align*}
\end{theorem}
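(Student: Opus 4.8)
The plan is to reduce the coupled update \eqref{alg2_lam} to a scalar-driven stochastic approximation recursion for the team-average parameter $\langle\lambda_t^+\rangle$ and then invoke the non-ergodic stochastic approximation machinery of Appendix~\ref{ap:unc}, mirroring the argument for Theorem~\ref{thm:lam_team_convergence} and the critic-with-adversaries result of Theorem~\ref{thm:v_team_convergence2}. First I would apply the averaging operator $\langle\cdot\rangle=\frac{1}{N^+}(\mathbf{1}^T\otimes I)$ to \eqref{alg2_lam}, using the two identities $\Gamma_{\lambda,t}f_t=f_t$ and $\Gamma_{\lambda,t}A_{\lambda,t}^\prime=-f_tf_t^T=A_{\lambda,t}^\prime$ (immediate from $\Gamma_{\lambda,t}=f_tf_t^T/\Vert f_t\Vert^2$), which show that the scalar projection leaves the stochastic drift unchanged. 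The only genuine subtlety is then the row vector $\mathbf{1}^TC_{\lambda,t}^+$ of column sums of the merely row-stochastic equivalent consensus matrix supplied by Lemma~\ref{lem:equivalent_consensus}.

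Next I would decompose each agent parameter as $\lambda_t^{+,j}=\langle\lambda_t^+\rangle+\lambda_{\perp,t}^{+,j}$. Since $C_{\lambda,t}^+$ is row stochastic we have $\mathbf{1}^TC_{\lambda,t}^+\mathbf{1}=N^+$ and $\langle\lambda_{\perp,t}^+\rangle=0$, so the order-one contributions proportional to $\langle\lambda_t^+\rangle$ from the consensus term and the subtracted-projection term cancel exactly, leaving the residual $(\mathbf{1}^TC_{\lambda,t}^+\otimes\Gamma_{\lambda,t})\lambda_{\perp,t}^+$. Writing this as $\alpha_{\lambda,t}(\mathbf{1}^TC_{\lambda,t}^+\otimes\Gamma_{\lambda,t})\alpha_{\lambda,t}^{-1}\lambda_{\perp,t}^+$ and invoking the uniform boundedness of $\alpha_{\lambda,t}^{-1}\lambda_{\perp,t}^+$ from Lemma~\ref{lem:lam_consensus2}, the residual is of order $\alpha_{\lambda,t}$ and folds into the step. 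Collecting terms yields
\[
\langle\lambda_{t+1}^+\rangle=\langle\lambda_t^+\rangle+\frac{\alpha_{\lambda,t}}{N^+}\Big(-N^+f_tf_t^T\langle\lambda_t^+\rangle+(\mathbf{1}^TC_{\lambda,t}^+r_{t+1}^+\otimes f_t)+(\mathbf{1}^TC_{\lambda,t}^+\otimes\Gamma_{\lambda,t})\alpha_{\lambda,t}^{-1}\lambda_{\perp,t}^+\Big)+o(\alpha_{\lambda,t}),
\]
whose asymptotic drift assembles precisely the quantities appearing inside $\Delta_\lambda$.

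I would then establish boundedness and convergence. For $\sup_t\Vert\lambda_t^+\Vert<\infty$ I combine the uniform bound on the scaled disagreement (Lemma~\ref{lem:lam_consensus2}) with a stability argument for $\langle\lambda_t^+\rangle$: the leading linear operator averages to $-F^TD_\pi^{s,a}F$, which is symmetric negative definite because $F$ has full column rank (Assumption~\ref{as:linear_approx}) and $D_\pi^{s,a}$ is a positive diagonal matrix, while the reward and feature inputs are uniformly bounded (Assumptions~\ref{as:linear_approx} and \ref{as:reward_bound}); the Borkar-type criteria of Appendix~\ref{ap:unc} then apply. For convergence I verify the conditions of the non-ergodic stochastic approximation result in Appendix~\ref{ap:unc}, so that $\langle\lambda_t^+\rangle$ converges with probability one to a bounded neighborhood of the solution $\lambda_\pi^+$ of $F^TD_\pi^{s,a}(\frac{1}{N^+}\sum_{i}R^i-F\lambda_\pi^+)=0$; since $\lambda_{\perp,t}^+\to0$, every agent's $\lambda_t^i$ enters the same neighborhood. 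The explicit estimate follows from the vanishing-time-averaged-drift (stationarity) condition on the recursion above: the limiting average of $-F^TD_\pi^{s,a}F\langle\lambda^+\rangle$ plus the time-averaged reward-and-disagreement drift must be zero, and subtracting the defining equation of $\lambda_\pi^+$ and taking norms gives $\Vert F^TD_\pi^{s,a}F(\langle\lambda^+\rangle-\lambda_\pi^+)\Vert\le\Vert\Delta_\lambda\Vert$ with $\Delta_\lambda$ the $\limsup$ of the averaged discrepancy.

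The main obstacle is the loss of ergodicity. In Algorithm~\ref{alg:1} the consensus matrix was conditionally independent of the iterates (Assumption~\ref{as:consensus_matrix}), which pinned down a single mean field and a unique fixed point; here the equivalent matrix $C_{\lambda,t}^+$ is produced by the trimming step and therefore depends on the received Byzantine values and on the iterates themselves, so no well-defined stationary drift exists and the limit set can only be controlled through the worst-case time-averaged discrepancy $\Delta_\lambda$, which is exactly why the conclusion is convergence to a neighborhood rather than to a point. The delicate steps are thus (i) certifying that the residual disagreement term is genuinely $O(\alpha_{\lambda,t})$ uniformly over adversarial strategies, and (ii) proving boundedness without a clean mean field; both rest on Lemma~\ref{lem:lam_consensus2} and the negative definiteness of $-F^TD_\pi^{s,a}F$. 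I note that this reward case is slightly easier than the critic case of Theorem~\ref{thm:v_team_convergence2}, since $A_{\lambda,t}^\prime$ is symmetric and averages to a symmetric negative definite matrix whereas the critic's $A_{v,t}^\prime=\phi_t(\gamma\phi_{t+1}-\phi_t)^T$ is only Hurwitz, and it does not even require the adversary-stationarity Assumption~\ref{as:policy_adv}.
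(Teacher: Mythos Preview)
Your proposal is correct and follows essentially the same route as the paper: apply $\langle\cdot\rangle$ to \eqref{alg2_lam}, split $\lambda_t^+=\mathbf{1}\otimes\langle\lambda_t^+\rangle+\lambda_{\perp,t}^+$, fold the disagreement residual into the step via Lemma~\ref{lem:lam_consensus2}, and invoke the non-ergodic stochastic approximation results of Appendix~\ref{ap:unc} exactly as in the proof of Theorem~\ref{thm:v_team_convergence2}. The paper's only cosmetic difference is that it packages the recursion explicitly as $g_t+\delta M_t+\beta_t$ (with $\beta_t=\langle(C_{\lambda,t}^+\otimes\Gamma_{\lambda,t})A_{\lambda,t}^+\lambda_{\perp,t}^+\rangle$ playing the role of your $o(\alpha_{\lambda,t})$ term) so that the five conditions of Appendix~\ref{ap:unc} can be checked off line by line; your observations about the symmetric negative definiteness of $-F^TD_\pi^{s,a}F$ and the non-necessity of Assumption~\ref{as:policy_adv} are accurate and consistent with the paper's hypotheses.
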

\noindent
The convergence results of the critic and team-average reward function can be interpreted as follows. The team-average values converge to a neighborhood of the desired optima. There are several factors that determine the size of the neighborhood. One, the equivalent consensus matrices $C_{v,t}^+$ and $C_{\lambda,t}^+$ are not column stochastic; hence, some local rewards and discrepancies between agents parameters are weighted more heavily than others. Two, the neighborhood grows with the discrepancy in agents' rewards. In the special case when the cooperative agents receive the same reward, i.e., $r_{t+1}^i=r_{t+1}^j$ for $i,j\in\mathcal{N}^+$ and $t\geq 0$, the neighborhood shrinks to a singleton because the first term in the summation cancels out with the negative term, and the second term in the summation goes to zero almost surely because the disagreement vector contracts exponentially after some finite time $t_0$ (this can be verified in Lemma~\ref{lem:v_consensus2}). This is a very interesting result since the Byzantine agents cannot hurt the distributed optimization process by attacking the communication channels in this case. In the following lines, we introduce the main theorem regarding the actor convergence.

\begin{theorem}\label{thm:actor2}
Under Assumption~\ref{as:linear_approx}-\ref{as:policy_updates} and \ref{as:byzantine_agents}-\ref{as:policy_adv}, the policy parameter $\theta_t^i$, $i\in\mathcal{N}^+$, converges with probability one to a neighborhood of a locally asymptotically stable equilibrium of the ODE
$$\dot\theta^i=\Psi^i_\Theta\big[\mathbb{E}_{\pi,d_\pi,p}\big\{\big(\bar{r}(s,a;\lambda_\pi^+)+\gamma V(s^\prime;v_\pi^+)-V(s;v_\pi^+)\big)\nabla\log\pi^i(a^i|s;\theta^i)\big\}\big].$$
\end{theorem}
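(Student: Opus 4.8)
The plan is to follow the two-timescale stochastic approximation argument already used for Theorem~\ref{thm:actor}, but to carry along the residual bias that the resilient consensus step introduces and that, unlike in Algorithm~\ref{alg:1}, does \emph{not} vanish. By Assumption~\ref{as:step_size} we have $\alpha_{\theta,t}=o(\alpha_{v,t}+\alpha_{\lambda,t})$, and by Assumption~\ref{as:policy_adv} the Byzantine policies change at rate $o(\alpha_{v,t}+\alpha_{\lambda,t})$ and converge to a stationary $\pi^i_*$. Hence the actor evolves on the slow timescale and, from its viewpoint, the critic and team-average reward parameters are quasi-static. Invoking Theorems~\ref{thm:v_team_convergence2} and \ref{thm:lam_team_convergence2} together with Lemmas~\ref{lem:v_consensus2} and \ref{lem:lam_consensus2}, the cooperative agents reach consensus ($v_{\perp,t}^+,\lambda_{\perp,t}^+\to0$ with probability one) and each $v_t^i$, $\lambda_t^i$ converges with probability one to a bounded neighborhood of the fixed points $v_\pi^+$, $\lambda_\pi^+$ whose radius is controlled by $\Vert\Delta_v\Vert$ and $\Vert\Delta_\lambda\Vert$.

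First I would write the projected actor recursion $\theta_{t+1}^i=\Psi_{\Theta^i}\big[\theta_t^i+\alpha_{\theta,t}\,\delta_t^i\psi_t^i\big]$ (the projection being supplied by Assumption~\ref{as:policy_updates}) and decompose the increment $\delta_t^i\psi_t^i$ into three pieces: (i) the mean field $h^i(\theta)=\mathbb{E}_{\pi,d_\pi,p}\big[(\bar r(s,a;\lambda_\pi^+)+\gamma V(s';v_\pi^+)-V(s;v_\pi^+))\nabla\log\pi^i(a^i|s;\theta^i)\big]$, which is the vector field of the target ODE; (ii) a martingale-difference noise term from sampling the currently visited state-action pair, handled by the square-summability of $\alpha_{\theta,t}$ in Assumption~\ref{as:step_size} and the boundedness of $\psi_t^i$ and the rewards (Assumptions~\ref{as:policy_coop}, \ref{as:reward_bound}); and (iii) a bias term $\mathbb{E}_{\pi,d_\pi,p}\big[(\bar r(s,a;\lambda_t^i)-\bar r(s,a;\lambda_\pi^+)+\gamma(V(s';v_t^i)-V(s';v_\pi^+))-(V(s;v_t^i)-V(s;v_\pi^+)))\psi_t^i\big]$ due to the critic and reward parameters not equalling their fixed points. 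By Assumption~\ref{as:linear_approx}, linearity gives $\bar r(s,a;\lambda_t^i)-\bar r(s,a;\lambda_\pi^+)=f_t^T(\lambda_t^i-\lambda_\pi^+)$ and $V(s;v_t^i)-V(s;v_\pi^+)=\phi_t^T(v_t^i-v_\pi^+)$, so with uniformly bounded features the bias is bounded by $C(\Vert v_t^i-v_\pi^+\Vert+\Vert\lambda_t^i-\lambda_\pi^+\Vert)$, asymptotically proportional to $\Vert\Delta_v\Vert+\Vert\Delta_\lambda\Vert$. This is exactly where Algorithm~\ref{alg:2} departs from Algorithm~\ref{alg:1}: in Theorem~\ref{thm:actor} this bias vanishes, whereas here it persists.

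I would then cast the recursion as a projected stochastic approximation driven by $h^i$ plus a bounded persistent perturbation and invoke the analytical machinery of Appendix~\ref{ap:con} (a Kushner--Clark/Hirsch-type result for projected ODEs). The unperturbed projected ODE $\dot\theta^i=\Psi^i_\Theta[h^i(\theta)]$ is assumed to possess a locally asymptotically stable equilibrium; a converse-Lyapunov/total-stability argument then shows that the bounded persistent perturbation shifts the limit set from the exact equilibrium to a neighborhood whose radius grows with the bias bound, yielding convergence of $\theta_t^i$ with probability one to a neighborhood of a locally asymptotically stable equilibrium of the stated ODE.

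The hard part will be the perturbed-ODE step. Because the resilient consensus matrices $C_{v,t}^+$, $C_{\lambda,t}^+$ are not column stochastic and depend on the unknown Byzantine behavior, the underlying chain is not ergodic, so the noise term cannot be averaged with a clean stationary-distribution argument; instead one must rely on the $\limsup$ characterization used in Theorems~\ref{thm:v_team_convergence2} and \ref{thm:lam_team_convergence2} to show the sampling error is asymptotically negligible on the slow timescale. Simultaneously, quantifying how the never-vanishing bias (iii) maps into an explicit neighborhood of the ODE equilibrium requires a stability-under-persistent-disturbances result, which is the main technical burden and the reason the conclusion is weaker than in Theorem~\ref{thm:actor}.
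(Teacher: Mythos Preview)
Your proposal is correct and follows essentially the same two-timescale strategy as the paper: freeze the policy on the fast timescale, invoke Theorems~\ref{thm:v_team_convergence2}--\ref{thm:lam_team_convergence2} and Lemmas~\ref{lem:v_consensus2}--\ref{lem:lam_consensus2} to place $(v_t^i,\lambda_t^i)$ in a bounded neighborhood of $(v_\pi^+,\lambda_\pi^+)$, then analyze the projected actor recursion with a mean-field/noise/bias split and conclude that the persistent bias yields convergence only to a neighborhood of a stationary point of $\tilde J^+$.

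The one substantive packaging difference is how the non-vanishing bias is handled formally. You keep piece~(iii) as a separate bounded persistent perturbation and plan to close with a converse-Lyapunov/total-stability argument on the perturbed projected ODE. The paper instead absorbs the discrepancy $\mathbb{E}_{\pi_t,d_{\pi_t},p}\big[(\delta_t^i-\delta_{t,\pi_t})\psi_t^i\mid\mathcal F_t^\theta\big]$ into the drift $g_t$, sets $\beta_t\equiv 0$, and invokes condition~5(b) of Appendix~\ref{ap:con} so that the limiting object is the \emph{differential inclusion} $\dot\theta^i\in\Psi_{\Theta^i}[G^i(\theta^i)]$ with $G^i(\theta^i)=\nabla_{\theta^i}\tilde J^+(\theta^+,\pi^-)+\varepsilon^i$; the neighborhood statement then follows from $\dot{\tilde J}^+=\sum_{i\in\mathcal N^+}\nabla_{\theta^i}\tilde J^{+T}(\nabla_{\theta^i}\tilde J^++\varepsilon^i+z^i)$ and a Cauchy--Schwarz estimate on the region where $\sum_i\Vert z^i+\varepsilon^i\Vert^2\le\sum_i\Vert\nabla_{\theta^i}\tilde J^+\Vert^2$. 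The paper's route has the advantage that it plugs directly into the stated appendix hypotheses (note that assumption~3 there requires $\beta_n\to0$, so your separated-bias formulation would not fit those hypotheses verbatim and you would indeed need the additional total-stability lemma you flag). Your route has the advantage of making the radius of the limiting neighborhood more explicitly tied to $\Vert\Delta_v\Vert+\Vert\Delta_\lambda\Vert$ via the linear bound you write down.
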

\noindent{ \bf Proof Sketch. \;} We split the actor updates into the average updates and ``noise" that is due to the currently visited state of the Markov chain $(r_{t}^+,s_t,a_t,C_{v,t-1}^+,C_{\lambda,t-1}^+)$. Since the Markov chain is not ergodic, we have to account for all its possible realizations to describe the convergence properties of the actor updates.

Due to the ever-present disturbance in the approximation of the critic and team-average reward function, the actor converges at best to the neighborhood of a local maximum of the approximated cooperative team's objective function $J^+(\theta^+,\pi^-)=\mathbb{E}_{\pi,d_\pi,p}\big[\bar{r}(s,a;\lambda_\pi^+)+\gamma V(s^\prime;v_\pi^+)\big]$ when the Byzantine agents' policies $\pi^-$ are stationary. Otherwise, the cooperative agents search for a locally optimal policy in a non-stationary environment induced by the Byzantine agents. Although we do not present any strategy for detection of adversarial agents in this paper, the agents can potentially identify an adversary through monitoring each other's policies. Therefore, the assumed stationarity should not be seen as an obstacle in the analysis. In the next section, we show great empirical performance of Algorithm~\ref{alg:2} in the cooperative navigation task despite the Byzantine agents actively changing their policies.

\section{Simulation Results}\label{sec:simulation}
In this section, we provide an empirical analysis of the deep resilient projection-based consensus AC MARL algorithm (Algorithm~\ref{alg:3}). We consider a multi-agent grid world environment, where the agents learn to solve the cooperative navigation task. The environment is of size $6\times 6$ and includes five agents that communicate their parameter values on a directed graph. We refer the reader to Figure~\ref{fig:cooperative_navigation} for illustration. The goal of each agent is to find the shortest path to a fixed random desired position and avoid collisions with other agents. We let $s^i$ denote the 2D positional coordinates of agent~$i$ and $d^i$ the 2D positional coordinates of its desired position. Each agent chooses from a set of five actions that correspond to the cardinal direction of its next state transition and staying put in the same state. The dynamics are deterministic and the agents are coupled through the reward function $r_{t+1}^i(s_t,a_t)=-\Vert s_{t+1}^i-d^i\Vert_1-\delta(min_{j\in\mathcal{N}}\Vert s_{t+1}^i-s_{t+1}^j\Vert_1)$, where $\delta(x)$ is the Dirac delta function. The first term penalizes the distance from the desired position and the second term facilitates a penalty for collision. Each agent in the network uses neural networks with two hidden layers, 30 hidden units in each layer, and leaky ReLU activation function to approximate the actor, critic, and team-average reward approximation.\par
We train the network for 10000 episodes that last 20 steps. The team-average reward function and the critic are evaluated under a fixed policy $\pi(a|s)$ after every 100 episodes. The agents perform batch stochastic updates and resilient projection-based consensus updates over 
20 epochs, which are followed by a single actor update. We select the discount factor $\gamma=0.9$ and the learning rates $\alpha_{\theta,t}=0.002$ and $\alpha_{v,t}=\alpha_{\lambda,t}=0.01$. In the simulations, we consider four scenarios with different adversarial attacks and compare the performance of the deep resilient projection-based consensus AC algorithm under the trimming hyperparameter $H=0$ and $H=1$. The results presented in Figure~\ref{fig:sim_results} correspond to an average outcome of five simulations of each scenario that are initialized with different random seeds and the same simulation parameters.\par

\begin{figure}[t]
\begin{subfigure}{.49\textwidth}
\centering
\includegraphics[width=1\linewidth]{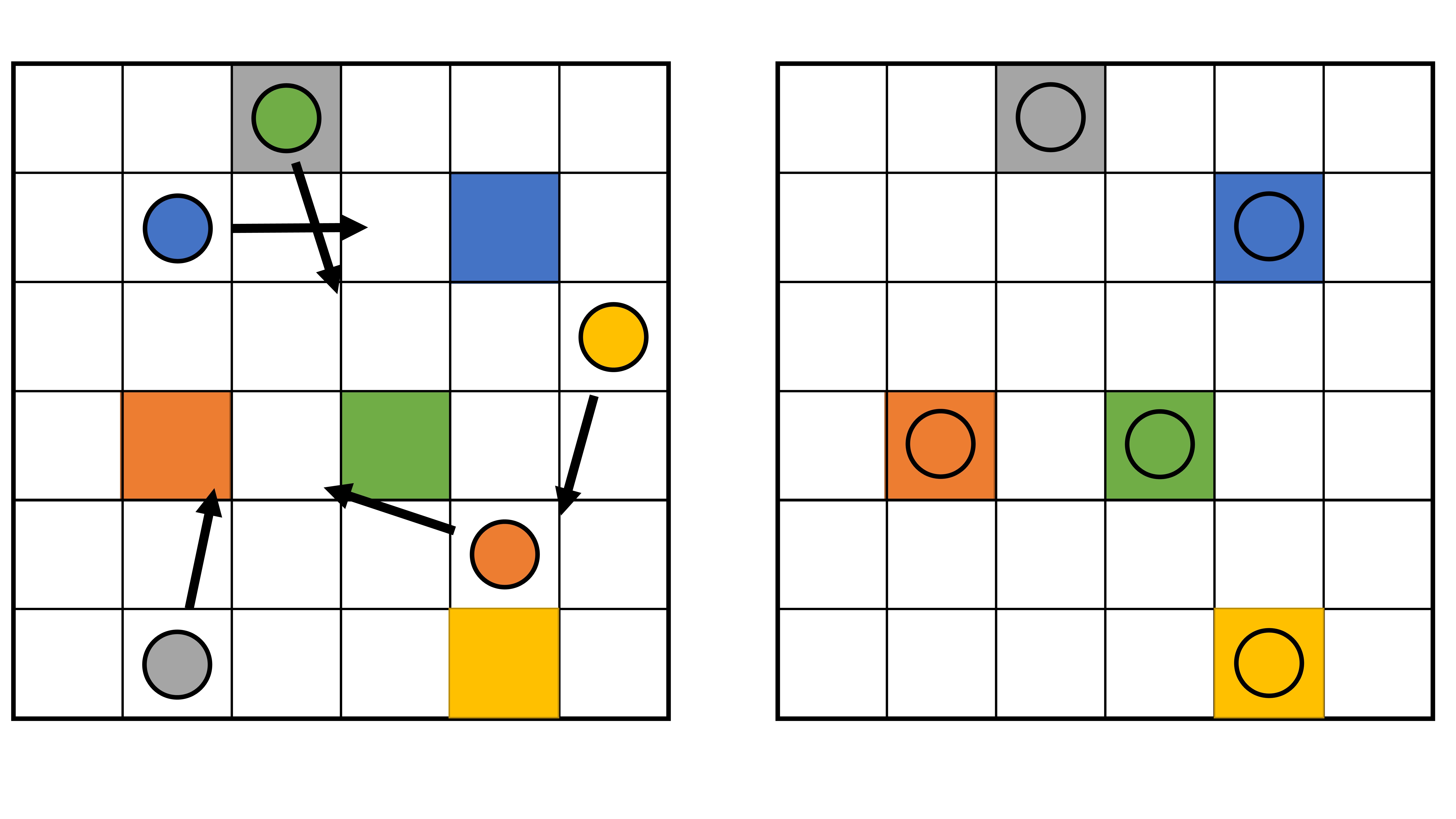}
\end{subfigure}
\begin{subfigure}{.49\textwidth}
\centering
\includegraphics[width=0.65\linewidth]{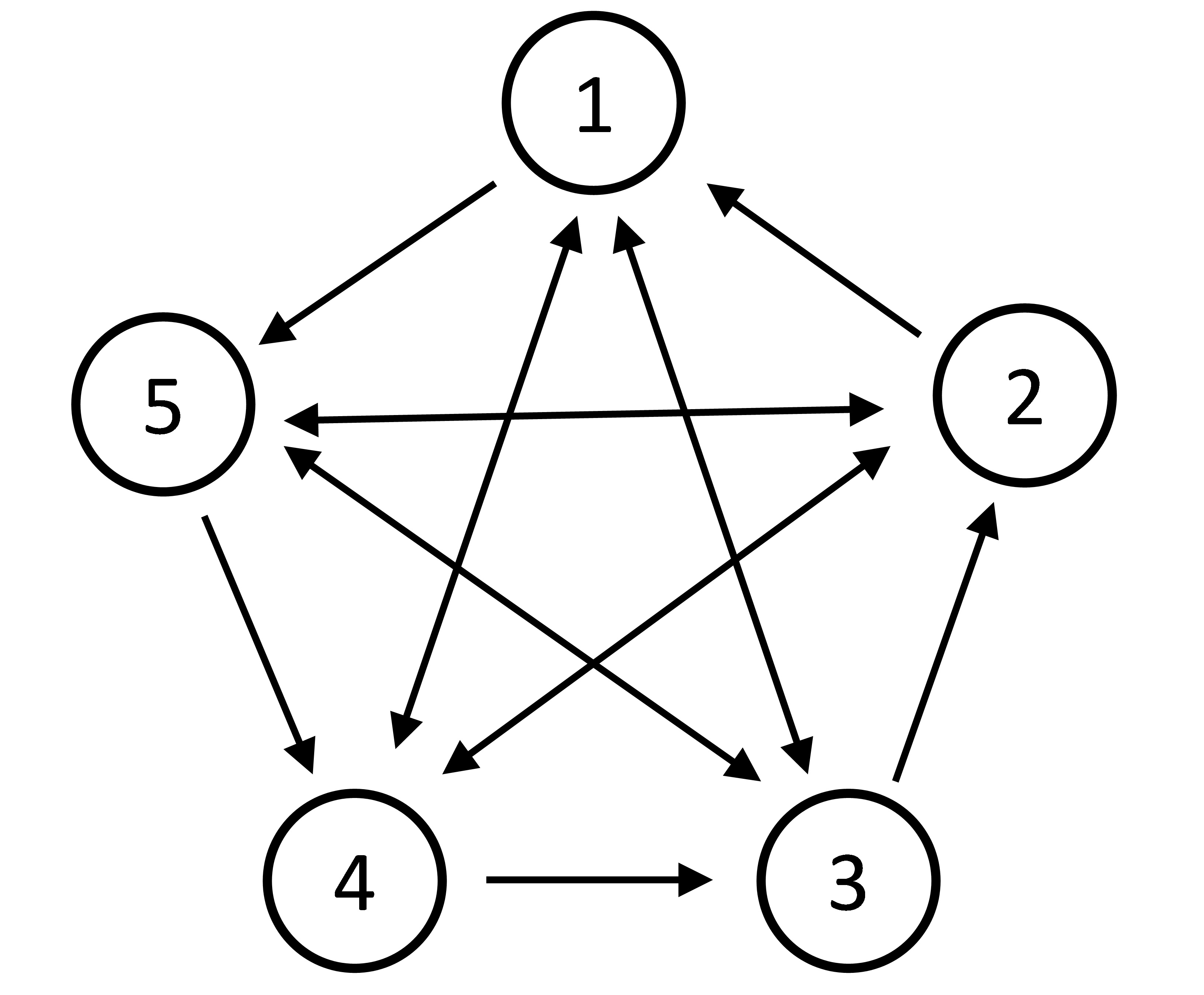}
\end{subfigure}
\caption{Cooperative navigation task in the grid world environment. Agents  navigate to their desired position (left) and communicate with one another on a directed graph (right).}
\label{fig:cooperative_navigation}
\end{figure}
\begin{figure}[t]
\centering
\begin{subfigure}{.45\textwidth}
\centering
\includegraphics[width=1\linewidth]{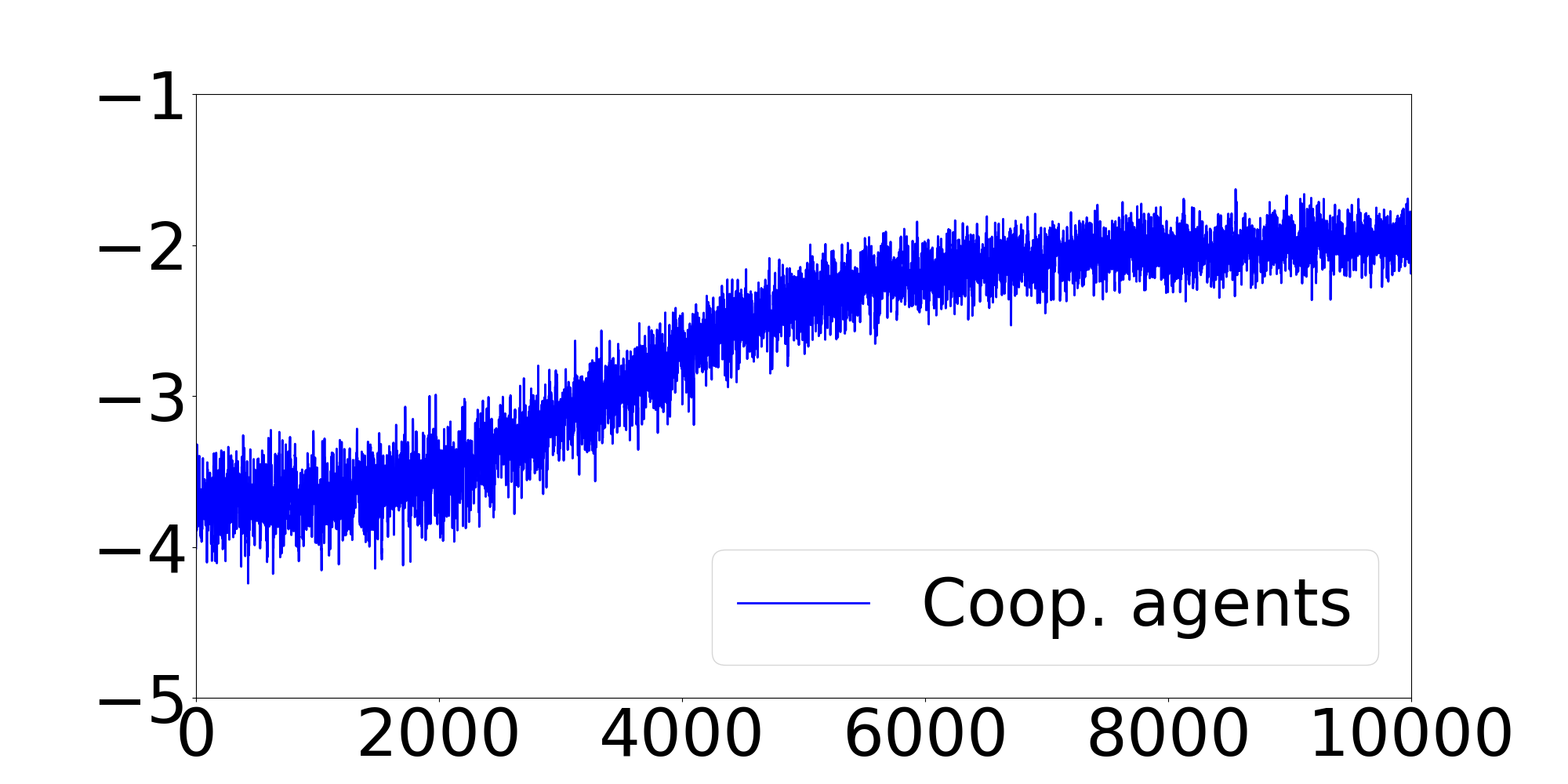}
  \caption{All-cooperative, $H=0$.}
\end{subfigure}
\begin{subfigure}{.45\textwidth}
\centering
\includegraphics[width=1\linewidth]{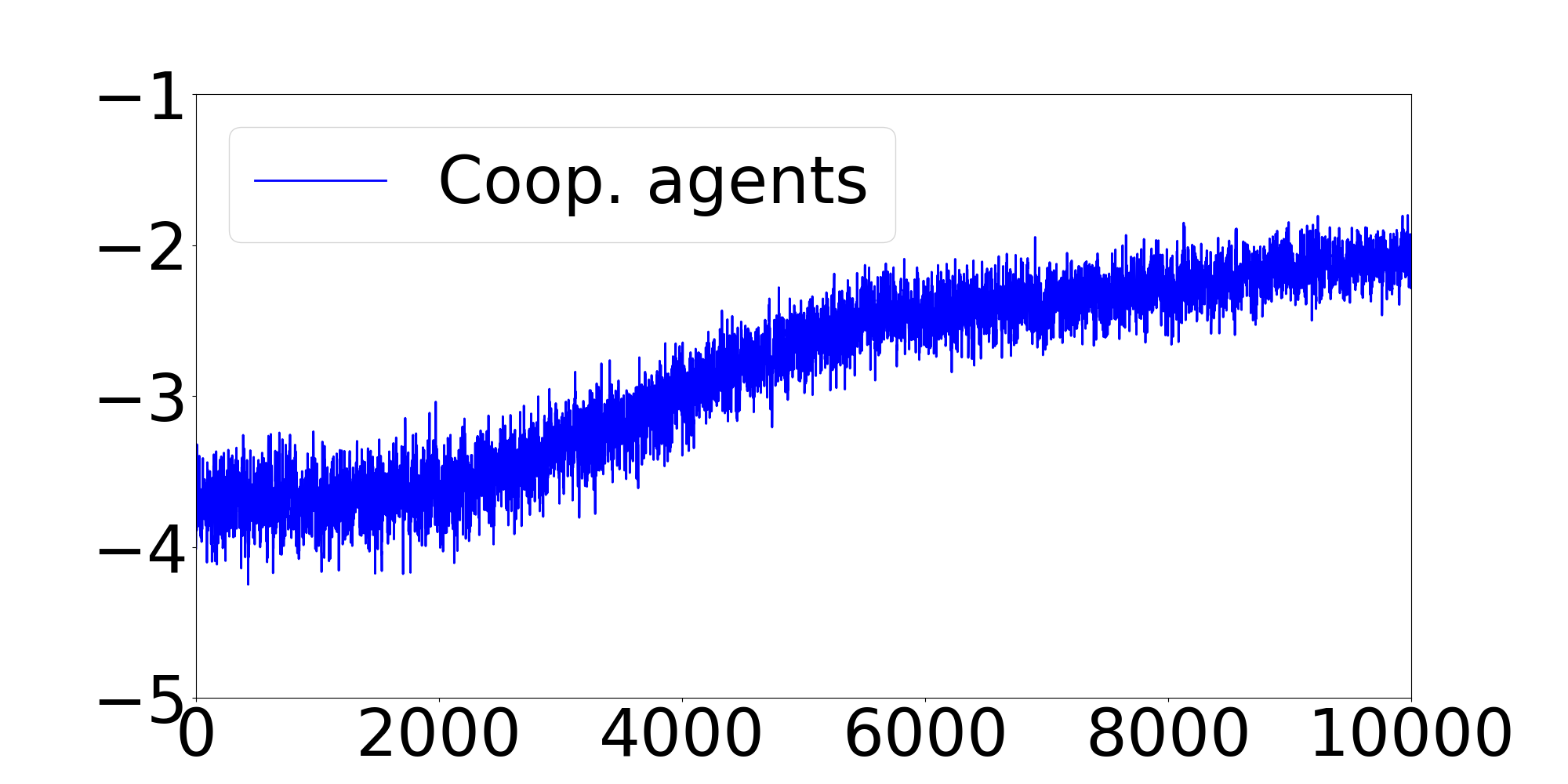}
\caption{All-cooperative, $H=1$.}
\end{subfigure}
\begin{subfigure}{.45\textwidth}
\centering
\includegraphics[width=1\linewidth]{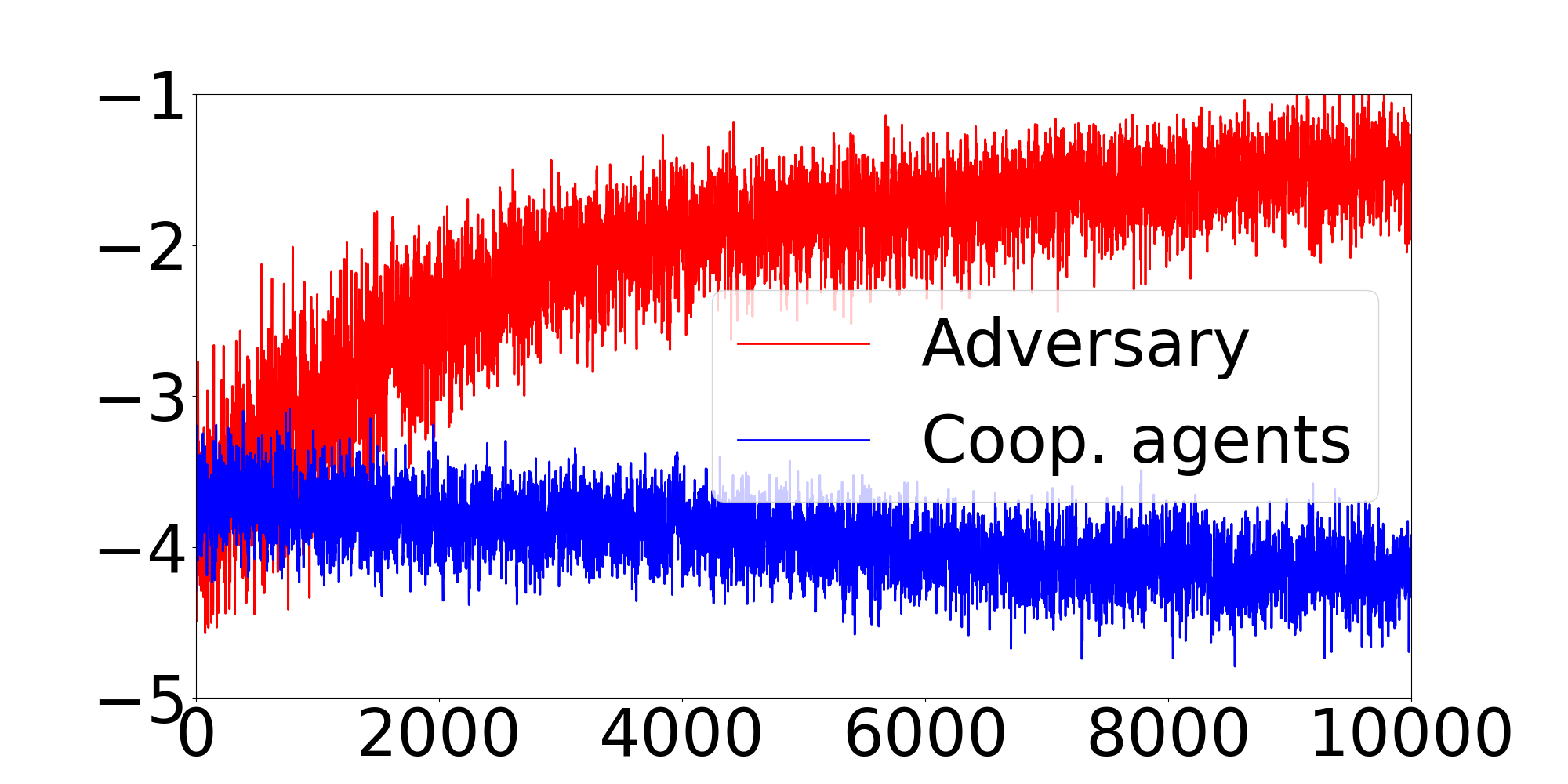}
  \caption{Greedy, $H=0$.}
\end{subfigure}
\begin{subfigure}{.45\textwidth}
\centering
\includegraphics[width=1\linewidth]{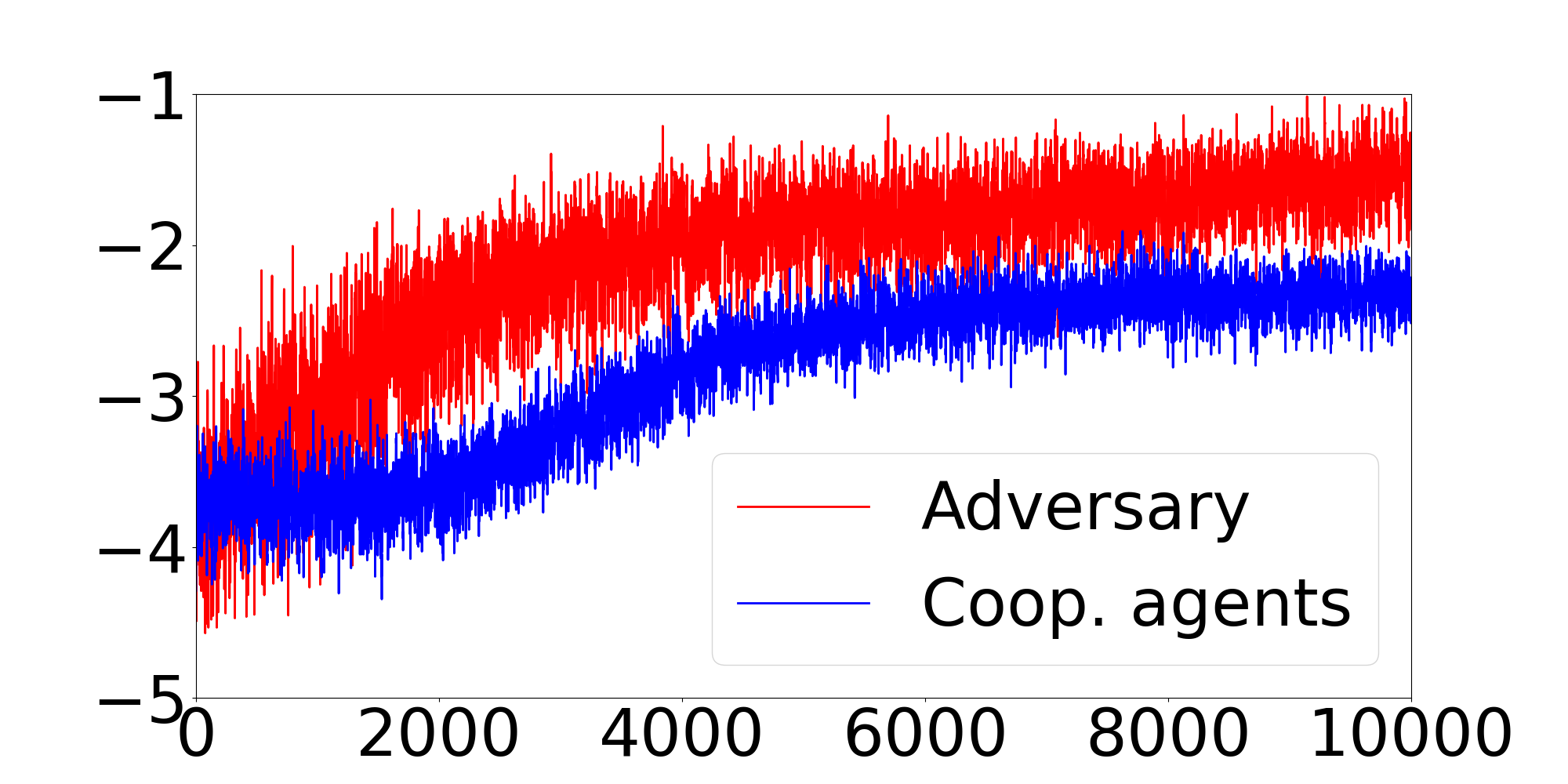}
  \caption{Greedy, $H=1$.}
\end{subfigure}

\begin{subfigure}{.45\textwidth}
\centering
\includegraphics[width=1\linewidth]{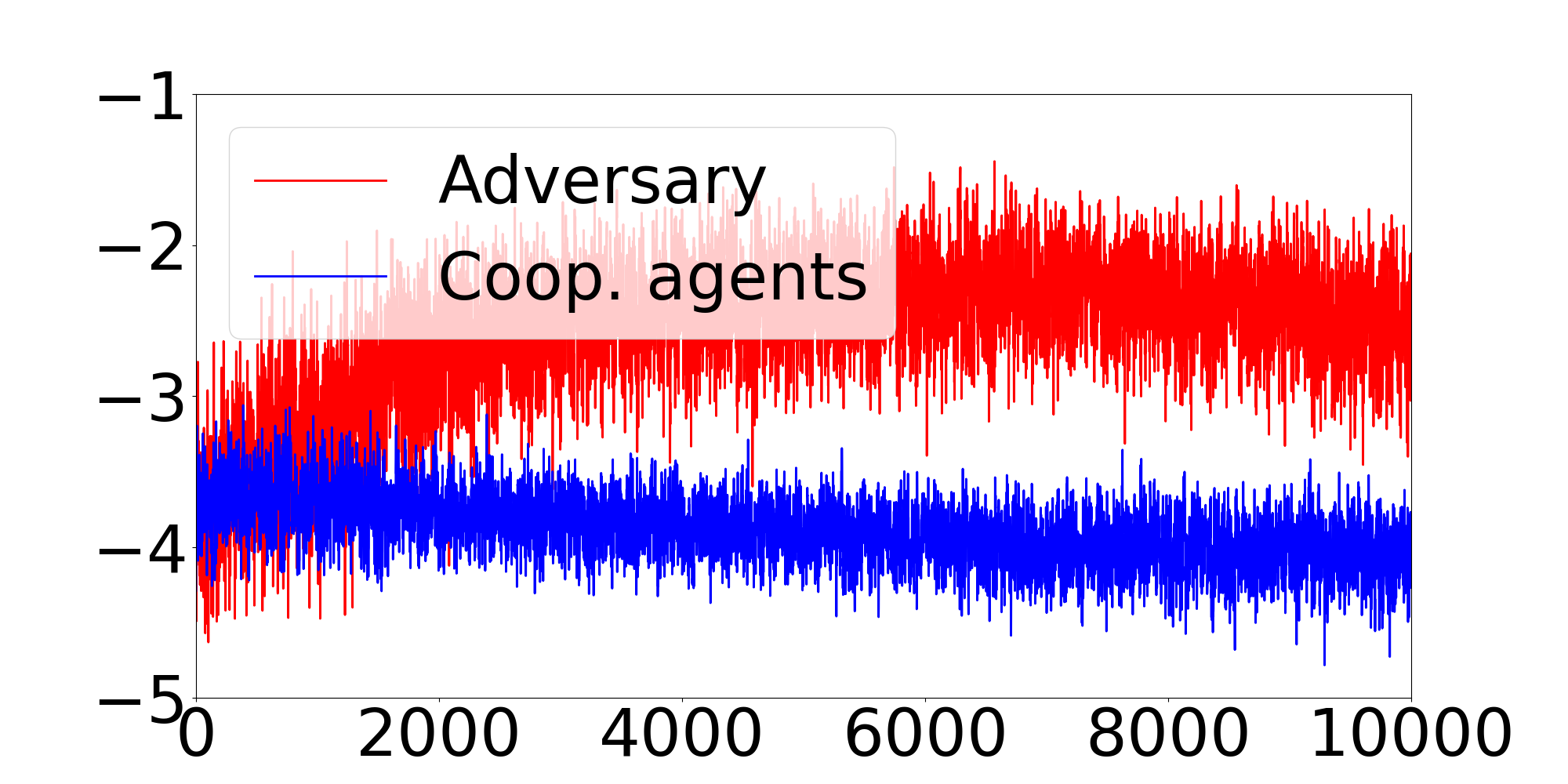}
  \caption{Faulty, $H=0$.}
\end{subfigure}
\begin{subfigure}{.45\textwidth}
\centering
\includegraphics[width=1\linewidth]{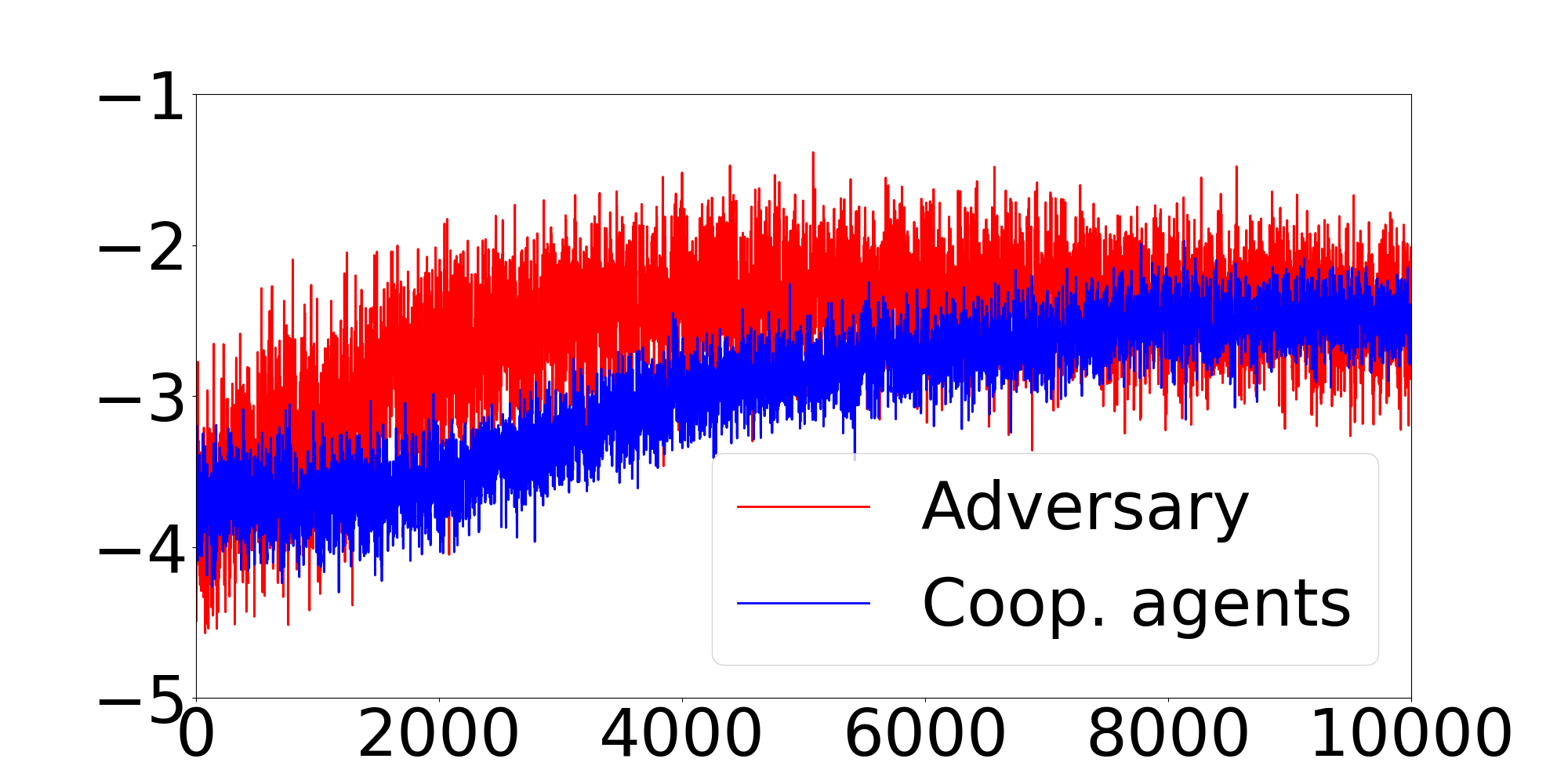}
  \caption{Faulty, $H=1$.}
\end{subfigure}

\begin{subfigure}{.45\textwidth}
\centering
\includegraphics[width=1\linewidth]{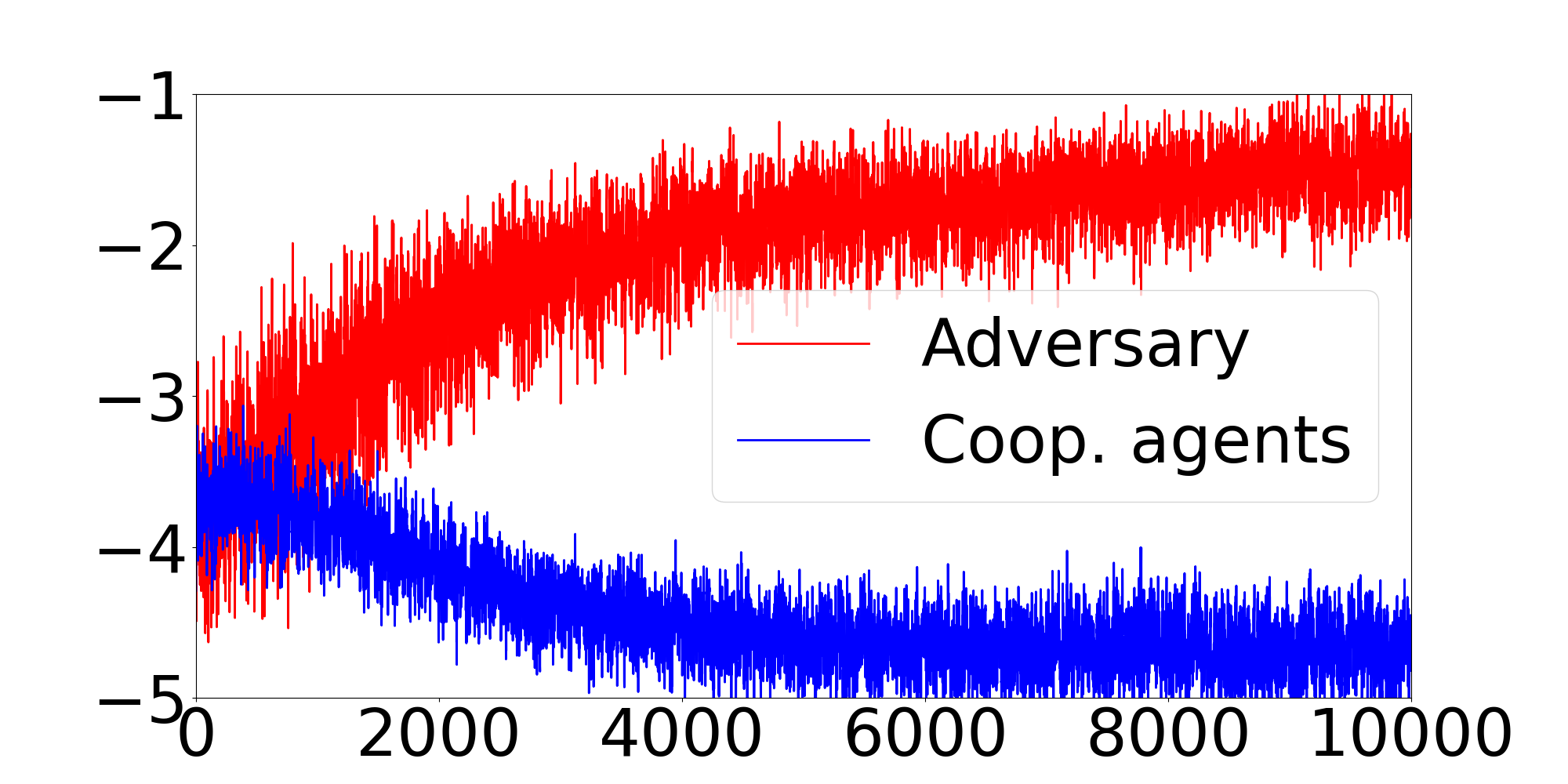}
  \caption{Strategic, $H=0$.}
\end{subfigure}
\begin{subfigure}{.45\textwidth}
\centering
\includegraphics[width=1\linewidth]{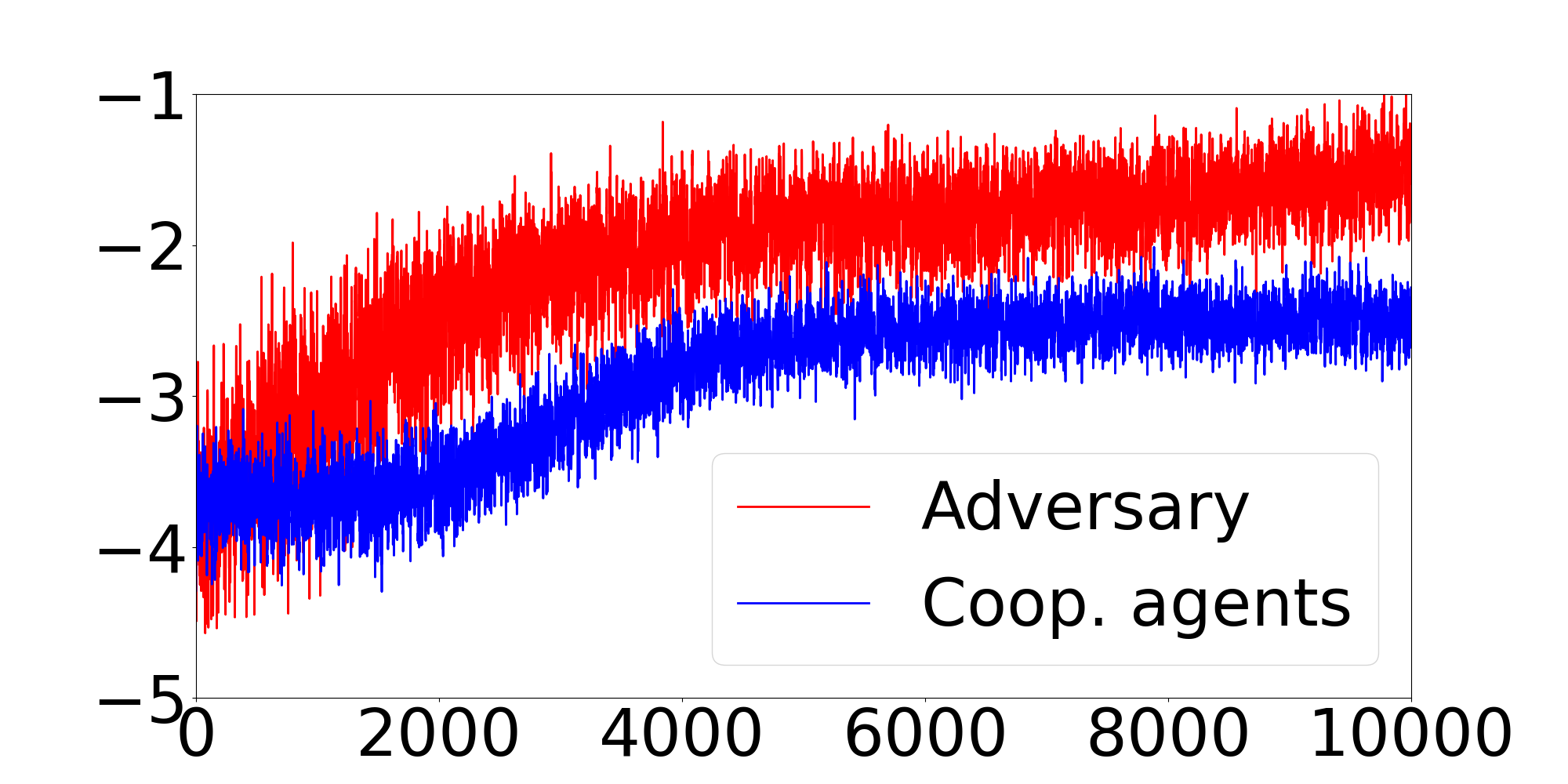}
  \caption{Strategic, $H=1$.}
\end{subfigure}
\caption{Episode returns of the adversary and the team of cooperative agents that employ Algorithm~\ref{alg:3} in four scenarios.}
\label{fig:sim_results}
\end{figure}

In the first scenario, we assume that all four agents are cooperative. Their performance is nearly identical for $H=0$ and $H=1$ as they learn near-optimal policies. In the second scenario, we assume that one agent is greedy. It participates in the team-average estimation in the sense that it sends its parameter values to the other agents but does not receive any values from them. It updates its actor network with the true local rewards as opposed to the cooperative agents. In this case, the cooperative agents end up maximizing the greedy agent's objective when $H=0$, and hence perform poorly with respect to the team-average objective. However, the cooperative agents are resilient to this attack when $H=1$ and successfully maximize the team-average objective of the cooperative network. In the third scenario, we assume that one agent is faulty. The agent sends constant parameter values to the other agents. Furthermore, it does not update the parameters internally. However, it updates its policy using the local rewards and faulty critic approximation. The cooperative agents perform poorly when $H=0$ but escape the attack with ease when $H=1$. In the last scenario, we assume that one agent is strategic in the sense that it has global knowledge of the problem and attempts to minimize the cooperative agents' objective and maximize its own objective. The adversarial agent is similar to the greedy agent in the second scenario but it trains an additional critic network for its own learning purpose. Specifically, it employs a local critic to evaluate its own performance and uses it along with local rewards to update its actor network. The team critic and the team-average reward function are trained based on the mean negative rewards of the cooperative agents. These approximations are designed to compromise the cooperative agents' performance. The adversarial agent indeed drives the cooperative agents to minimize their expected returns when $H=0$ but they manage to withstand this attack when $H=1$ and learn near-optimal policies.\par

\section{Conclusion}
In this work, we introduced novel resilient projection-based consensus actor-critic MARL algorithms with linear and nonlinear function approximation. These algorithms ensure Byzantine-resilient learning of cooperative agents in environments that are influenced by the policies of Byzantine agents. The algorithms significantly reduce the impact of Byzantine attacks on communication channels in training, and hence allow the cooperative agents to learn policies that maximize their team-average objective function. We provided a convergence analysis of the algorithm that uses linear approximation, where we proved that under a diminishing step size the policies of cooperative agents converge to a neighborhood of a local maximizer of the cooperative team-average objective function. In the simulations, we implemented the resilient algorithm that employs nonlinear approximation and demonstrated its functionality in four scenarios that included all-cooperative learning and learning with a greedy, faulty, and strategic adversarial agent present in the network, respectively. Moreover, we presented a simple motivating example that shows that the resilient projection-based method consensus method does not suffer overestimation in the function approximation unlike the method of trimmed means. In the future work, we would like to explore whether this promising projection-based aggregation method is advantageous in distributed machine learning as well.
\acks{
We thank Krishna Chaitanya Kosaraju for valuable discussion about the code implementation.}

\appendix
\section{Proofs of Theoretical Results in Section~\ref{sec:convergence}}
In this section, we provide mathematical proofs for the theoretical results stated in Section~\ref{sec:convergence}. In the analysis, we repeatedly use shorthand $\pi_t=\pi(a|s;\theta_t)$ to describe the dependence of the policy on the time-varying parameters $\theta_t$.
\subsection{Algorithm~\ref{alg:1}}

\begin{proof}(\textbf{Lemma~\ref{lem:spectral_radius}})
We consider an arbitrary vector $x$ and $x^T\mathbb{E}(C_t^T(I-\mathbf{11}^T/N)C_t|\mathcal{F}_t^x)x=\mathbb{E}(\Vert(I-\mathbf{11}^T/N)C_tx\Vert^2|\mathcal{F}_t^x)$. We provide an eigendecomposition of the term $(I-\mathbf{11}^T/N)C_tx$. We note that the matrices $(I-\mathbf{11}^T/N)$ and $C_t$ share the  eigenvector $\mathbf{1}$ and the product of the associated eigenvalues is zero. For any other vector that satisfies $x\perp\mathbf{1}$, we obtain $\Vert (I-\mathbf{11}^T/N)C_tx\Vert^2=\Vert C_tx\Vert^2\leq\Vert x\Vert^2$. By Assumption~\ref{as:consensus_matrix}, the mean consensus matrix $\mathbb{E}(C_t|\mathcal{F}_t^x)$ has all eigenvalues strictly less than one except for the eigenvalue one associated with eigenvector $\mathbf{1}$. Hence, we obtain $\mathbb{E}\big(\Vert (I-\mathbf{11}^T/N)C_tx\Vert^2|\mathcal{F}_t^x\big)\leq\rho_t\Vert x\Vert^2$, for some $\rho_t<1$.
\end{proof}

\begin{proof}(\textbf{Lemma~\ref{lem:v_consensus}})
The updates of $v_{\perp,t}$ can be compactly written as follows
\begin{align}
v_{\perp,t+1}&=\mathcal{J}_\perp\big[v_t+(C_t\otimes\Gamma_{v,t})\big(v_t+\alpha_{v,t}(A_{v,t}v_t+b_{v,t})\big)-(I\otimes\Gamma_{v,t})v_t\big]\nonumber\\
&=\mathcal{J}_\perp\big[(I\otimes\hat\Gamma_{v,t})v_t+(C_t\otimes\Gamma_{v,t})\big(v_t+\alpha_{v,t}(A_{v,t}v_t+b_{v,t})\big)\big]\nonumber\\
&=(I\otimes\hat\Gamma_{v,t})v_{\perp,t}+\mathcal{J}_\perp\big[(C_t\otimes\Gamma_{v,t})\big(v_{\perp,t}+\alpha_{v,t}(A_{v,t}v_{\perp,t}+b_{v,t})\big)\big]\label{v_perp_update},
\end{align}
where we used the fact that $(C_t\otimes \Gamma_{v,t})(\mathbf{1}\otimes\left<x\right>)=\mathbf{1}\otimes(\Gamma_{v,t}\left<x\right>)$ and $\mathcal{J}_\perp\big[\mathbf{1}\otimes(\Gamma_{v,t}\left<x\right>)\big]=0$. The updates of $v_{\perp,t}$ can be viewed in two mutually orthogonal subspaces as follows $v_{\perp,t+1}=(I\otimes\hat\Gamma_{v,t})v_{\perp,t+1}+(I\otimes\Gamma_{v,t})v_{\perp,t+1}$, where
\begin{align}
(I\otimes\hat\Gamma_{v,t})v_{\perp,t+1}&=(I\otimes\hat\Gamma_{v,t})v_{\perp,t}\label{v_perp_ort}\\
(I\otimes\Gamma_{v,t})v_{\perp,t+1}&=\big[(I-\mathbf{1}\mathbf{1}^T/N)C_t\otimes I\big](I\otimes\Gamma_{v,t})\big(v_{\perp,t}+\alpha_{v,t}(A_{v,t}v_t+b_{v,t})\big)\label{v_perp_grad}
\end{align}
We consider the expected values $\mathbb{E}(\Vert (I\otimes\hat\Gamma_{v,t})v_{\perp,t+1}\Vert^2|\mathcal{F}_t^v)$ and $\mathbb{E}(\Vert (I\otimes\Gamma_{v,t})v_{\perp,t+1}\Vert^2|\mathcal{F}_t^v)$. By \eqref{v_perp_ort}, the first term satisfies $\mathbb{E}(\Vert (I\otimes\hat\Gamma_{v,t})v_{\perp,t+1}\Vert^2|\mathcal{F}_{v,t}^v)=\mathbb{E}(\Vert (I\otimes\hat\Gamma_{v,t})v_{\perp,t}\Vert^2|\mathcal{F}_t^v)$, and thus the disagreement vector $v_{\perp,t}$ remains stable in the orthogonal subspace. To analyze stability of the term in \eqref{v_perp_grad}, we write
\begin{align}
\mathbb{E}(\Vert &(I\otimes\Gamma_{v,t})v_{\perp,t+1}\Vert^2|\mathcal{F}_t^v)\nonumber\\
&=\mathbb{E}\big(\big\Vert\big[(I-\mathbf{1}\mathbf{1}^T/N)C_t\otimes I\big](I\otimes\Gamma_{v,t})\big((I+\alpha_{v,t}A_{v,t})v_{\perp,t}+\alpha_{v,t}b_{v,t})\big)\big\Vert^2\big|\mathcal{F}_t^v\big)\nonumber\\
&\leq\rho_t\cdot\mathbb{E}\big(\big\Vert(I\otimes\Gamma_{v,t})\big((I+\alpha_{v,t}A_{v,t})v_{\perp,t}+\alpha_{v,t}b_{v,t})\big)\big\Vert^2\big|\mathcal{F}_t^v\big)\label{v_perp}
\end{align}
where $\rho_t$ is a spectral radius of $\mathbb{E}(C_t(I-\mathbf{11}^T/N)C_t|\mathcal{F}_t^v)$. We note that the inequality holds by the conditional independence of $C_t$ stated in Assumption~\ref{as:consensus_matrix}. Regarding the terms that involve $v_{\perp,t}$, we have $I+\alpha_{v,t}A_{v,t}=I\otimes(I+\alpha_{v,t}A_{v,t}^\prime)$, and hence we obtain $(I\otimes\Gamma_{v,t})(I+\alpha_{v,t}A_{v,t})=I\otimes (\Gamma_{v,t}+\alpha_{v,t}A_{v,t}^\prime)$. The eigenspace of matrix $\Gamma_{v,t}+\alpha_{v,t}A_{v,t}^\prime$ is spanned by vector $\Gamma_{v,t}\phi_t$. Hence, for $\nu_t$ that satisfies $\nu_t\leq (1+\alpha_{v,t}K_1)^2$, where $K_1=\sup_t\Vert(\gamma\phi_{t+1}-\phi_t)^T\phi_t\Vert<\infty$ by Assumption~\ref{as:linear_approx}, we have $\Vert (I\otimes \Gamma_{v,t})(I+\alpha_{v,t}A_{v,t})v_{\perp,t}\Vert^2\leq\nu_t\cdot\Vert(I\otimes\Gamma_{v,t})v_{\perp,t}\Vert^2$. We apply this inequality in the following lines, where we first premultiply both sides of \eqref{v_perp} by $\alpha_{v,t+1}^{-2}$ and apply the triangle inequality. Letting $\eta_{t+1}=\mathbb{E}(\Vert (I\otimes\Gamma_{v,t})\alpha_{v,t+1}^{-1}v_{\perp,t+1}\Vert^2|\mathcal{F}_t^v)$, we obtain the following inequality
\begin{align*}
\eta_{t+1}\leq\rho_t\cdot\nu_t\cdot\frac{\alpha_{v,t}^2}{\alpha_{v,t+1}^2}\big(\eta_t+2\nu_t^{-\frac{1}{2}}\eta_t^{\frac{1}{2}}\cdot\mathbb{E}(\Vert b_{v,t}\Vert|\mathcal{F}_t^v)+\nu_t^{-1}\cdot\mathbb{E}(\Vert b_{v,t}\Vert^2|\mathcal{F}_t^v)\big).
\end{align*}
Under Assumption \ref{as:step_size}, $\lim_{t\rightarrow\infty}\frac{\alpha_{v,t}^2}{\alpha_{v,t+1}^2}=1$ and $\lim_{t\rightarrow\infty}\nu_t=1$, and so there exists finite time $t_0$ and constant $\delta>0$ such that $\nu_t>0$ and $\rho_t\cdot\nu_t\cdot\frac{\alpha_{v,t}^2}{\alpha_{v,t+1}^2}\leq1-\delta$ for all $t>t_0$. Since $b_{v,t}$ is uniformly bounded by Assumption~\ref{as:linear_approx} and \ref{as:reward_bound}, we have $\nu_t^{-1}\cdot\mathbb{E}\big(\Vert b_{v,t}\Vert^2\big|\mathcal{F}_t^v\big)\leq K_2$ for some $K_2<\infty$. Therefore, for $t>t_0$ we can write
\begin{align*}
\eta_{t+1}&\leq(1-\delta)\big(\eta_t+2\sqrt{\eta_t}\cdot\sqrt{K_2}+K_2\big)\\
&=\big(1-\frac{\delta}{2}\big)\eta_t-\frac{\delta}{2}\big(\sqrt{\eta_t}-\frac{2}{\delta}(1-\delta)\sqrt{K_2}\big)^2+\frac{2}{\delta}(1-\delta)^2K_2+(1-\delta)K_2\\
&\leq\big(1-\delta/2\big)\eta_t+K_3,
\end{align*}
where $K_3=\frac{2}{\delta}(1-\delta)^2K_2+(1-\delta)K_2$. By induction, it follows that
$\eta_{t}\leq\big(1-\frac{\delta}{2}\big)^{t-t_0}\eta_{t_0}+\frac{2K_3}{\delta}$. Therefore, we have $\sup_t\mathbb{E}(\Vert(I\otimes\Gamma_{v,t})\alpha_{v,t}^{-1}v_{\perp,t}\Vert^2|\mathcal{F}_t^v)<K_4$ for some $K_4>0$. We note that the states are visited according to the stationary distribution $d_\pi(s)$, and hence the uniform bound holds for all $\Gamma_{v,t}$ visited in the infinite sequence. Therefore, we consider $\sum_t\mathbb{E}\big(\Vert(I\otimes\Gamma_{v,t})v_{\perp,t}\Vert^2|\mathcal{F}_t^v\big)<K_4\cdot\sum_t\alpha_{v,t}^2$ and obtain  $\lim_{t\rightarrow\infty}(I\otimes\Gamma_{v,t})v_{\perp,t}=0$ with probability one by Assumption~\ref{as:step_size}. This implies that $\lim_{t\rightarrow\infty}v_{\perp,t}=0$ with probability one.
\end{proof}

\begin{proof}(\textbf{Lemma~\ref{lem:lam_consensus}})
The proof is analogous to the proof of Lemma~\ref{lem:v_consensus}.
\end{proof}

\begin{proof}(\textbf{Theorem~\ref{thm:v_team_convergence}})
We write the iteration of $\left<v_t\right>$ as follows
\begin{align}
\left<v_{t+1}\right>&=\left<(I\otimes\hat\Gamma_{v,t})v_t+(C_t\otimes\Gamma_{v,t})\big(v_t+\alpha_{v,t}(A_{v,t}v_t+b_{v,t})\big)\right>\\
&=\left<(I\otimes\hat\Gamma_{v,t})v_t\right>\nonumber\\
&\qquad +\left<(C_t\otimes\Gamma_{v,t})\big(\mathbf{1}\otimes\left<v_t\right>+v_{\perp,t}+\alpha_{v,t}\big[(I\otimes A_{v,t}^\prime)(\mathbf{1}\otimes\left<v_t\right>+v_{\perp,t})+b_{v,t}\big]\big)\right>\nonumber\\
&=\Gamma_{v,t}\left<v_t\right>+\hat\Gamma_{v,t}\left<v_t\right>+\alpha_{v,t}\left<(C_t\otimes\Gamma_{v,t})(I\otimes A_{v,t}^\prime)(\mathbf{1}\otimes\left<v_t\right>)\right>\nonumber\\
&\quad+\alpha_{v,t}\left<(C_t\otimes\Gamma_{v,t})\big(\alpha_{v,t}^{-1}v_{\perp,t}+(I\otimes A_{v,t}^\prime)v_{\perp,t}+b_{v,t}\big)\right>\nonumber\\
&=\left<v_t\right>+\alpha_{v,t}A_{v,t}^\prime\left<v_t\right>+\alpha_{v,t}\left<(C_t\otimes\Gamma_{v,t})\big(\alpha_{v,t}^{-1}v_{\perp,t}+A_{v,t}v_{\perp,t}+b_{v,t}\big)\right>\nonumber\\
&=\left<v_t\right>+\alpha_{v,t}\big[g_t\big(\left<v_t\right>,\xi_t\big)+\delta M_t+\beta_t\big],
\end{align}
where the functions $g_t(\cdot,\cdot)$, $\delta M_t$, and $\beta_t$ are given as
\begin{align}
g_t\big(\left<v_t\right>,\xi_t\big)&=\mathbb{E}\big(A_{v,t}^\prime\left<v_t\right>+\left<(C_t\otimes \Gamma_{v,t})b_{v,t}\right>|\mathcal{F}_t^v\big)\\
\delta M_t&=A_{v,t}^\prime\left<v_t\right>+\left<(C_t\otimes I)b_{v,t}\right>+\left<(C_t\otimes\Gamma_{v,t})\big(\alpha_{v,t}^{-1}v_{\perp,t}+A_{v,t}v_{\perp,t}\big)\right>\\
&\quad-\mathbb{E}\big(A_{v,t}^\prime\left<v_t\right>+\left<(C_t\otimes \Gamma_{v,t})b_{v,t}\right>+\left<(C_t\otimes\Gamma_{v,t})\big(\alpha_{v,t}^{-1}v_{\perp,t}+A_{v,t}v_{\perp,t}\big)\right>|\mathcal{F}_t^v\big)\nonumber\\
\beta_t&=\mathbb{E}\big(\left<(C_t\otimes\Gamma_{v,t})(\alpha_{v,t}^{-1}v_{\perp,t}+A_{v,t}v_{\perp,t})\right>\big|\mathcal{F}_t^v\big).
\end{align}

To finalize the proof, we need to verify the conditions in Section~\ref{ap:unc}.
\begin{enumerate}
	
	\item We have $\Vert g_t\big(\left<x\right>,\xi_t\big)-g_t\big(\left<y\right>,\xi_t\big)\Vert=\Vert\mathbb{E}\big(A_{v,t}(\left<x\right>-\left<y\right>)|\mathcal{F}_t^v\big)\Vert\leq K_1\cdot\Vert \left<x\right>-\left<y\right>\Vert^2$ for some $K_1>0$ since $A_{v,t}^\prime$ is uniformly bounded by Assumption~\ref{as:linear_approx}. Therefore, $g_t\big(\left<v_t\right>,\xi_t\big)$ is Lipschitz continuous in $\left<v_t\right>$.

	\item The step size sequence $\{\alpha_{v,t}\}_{t\geq 0}$ satisfies $\sum_t\alpha_{v,t}=\infty$ and $\sum_t\alpha_{v,t}^2<\infty$, for $t\geq 0$.
	
	\item The martingale difference sequence $\delta M_t$ satisfies $\mathbb{E}\big(\Vert\delta M_t\Vert^2|\mathcal{F}_t^v\big)\leq K_2\cdot(1+\Vert \left<v_t\right>\Vert^2)$, since $A_{v,t}$, $b_{v,t}$, and $\alpha_{v,t}^{-1}v_{\perp,t}$ are uniformly bounded by Assumption~\ref{as:linear_approx} and \ref{as:reward_bound}, and Lemma~\ref{lem:v_consensus}.

	\item By Assumption~\ref{as:consensus_matrix}, we can write	$\beta_t=\mathbb{E}\big(\left<(C_t\otimes\Gamma_{v,t})(\alpha_{v,t}^{-1}v_{\perp,t}+A_{v,t}v_{\perp,t})\right>\big|\mathcal{F}_t^v\big)=\frac{1}{N}(\mathbf{1}^T\otimes I)\mathbb{E}\big(C_t\otimes I\big|\mathcal{F}_t^v\big)\mathbb{E}\big((I\otimes\Gamma_{v,t})(\alpha_{v,t}^{-1}v_{\perp,t}+A_{v,t}v_{\perp,t})\big|\mathcal{F}_t^v\big)=\frac{1}{N}(\mathbf{1}^T\otimes I)\mathbb{E}\big((I\otimes\Gamma_{v,t})(\alpha_{v,t}^{-1}v_{\perp,t}+A_{v,t}v_{\perp,t})\big|\mathcal{F}_t^v\big)$. The last term is uniformly bounded by Lemma~\ref{lem:v_consensus} and Assumption~\ref{as:linear_approx}. This and the fact that $(\mathbf{1}^T\otimes I)(I\otimes\Gamma_{v,t})(\alpha_{v,t}^{-1}v_{\perp,t}+A_{v,t}v_{\perp,t})=0$ imply that $\beta_t=0$.

	\item The Markov chain $\{\xi_t\}_{t\geq 0}$ is irreducible and has a stationary distribution $\eta$.
		
\end{enumerate}

Applying Theorem~\ref{thm:ODE_unc1}, it follows that the asymptotic behavior is described by the ODE $$\left<\dot{v}\right>=\bar{g}\big(\left<v\right>\big)=\mathbb{E}_{d_\pi}\big[g_t(\left<v_t\right>,\xi_t)\big]=\Phi^TD_\pi^s(\gamma P_\pi-I)\Phi\left<v\right>+\frac{1}{N}\sum_{i\in\mathcal{N}}\Phi^TD_\pi^sR_\pi^i.$$
We let $\lim_{c\rightarrow\infty}\bar{g}(cx)\cdot c^{-1}=g_\infty(x)=\Phi^TD_\pi^s(\gamma P_\pi-I)\Phi x$. Since $\Phi$ is full column rank by Assumption~\ref{as:linear_approx}, we let $\zeta$ and $\Phi y$ denote an arbitrary eigenvalue-eigenvector pair of the matrix product $D_\pi(\gamma P_\pi-I)$. We obtain $D_\pi(\gamma P_\pi-I)\Phi y=\zeta \Phi y$, which can be further manipulated to yield $y^T\Phi^T(\gamma P_\pi-I)D_\pi(\gamma P_\pi-I)\Phi y=\zeta y^T\Phi^T(\gamma P_\pi-I)\Phi y$, which implies that $$\zeta=\frac{y^T\Phi^T(\gamma P_\pi-I)^TD_\pi(\gamma P_\pi-I)\Phi y}{y^T\Phi^T(\gamma P_\pi-I)\Phi y}<0 \text{ with probability one},$$ since the numerator is positive definite and the denominator is negative definite with probability one. Therefore, the system $\dot{x}=g_\infty(x)$ has a unique globally asymptotically stable (GAS) equilibrium, and we can directly apply Theorem~\ref{thm:ODE_unc2} to obtain the desired boundedness of the iterates, i.e., $\sup_t\Vert v_t\Vert<\infty$ with probability one. Finally, we apply Theorem~\ref{thm:ODE_unc3} to establish convergence with probability one to the GAS of the ODE $\left<\dot{v}\right>=\bar{g}\big(\left<v\right>\big)=\Phi^TD_\pi^s(\gamma P_\pi-I)\Phi\left<v\right>+\frac{1}{N}\sum_{i\in\mathcal{N}}\Phi^TD_\pi^s R_\pi^i$. 

\end{proof}

\begin{proof}(\textbf{Theorem~\ref{thm:lam_team_convergence}})
The proof is analogous to the proof of Theorem~\ref{thm:v_team_convergence}.
\end{proof}

\begin{proof}(\textbf{Theorem~\ref{thm:actor}})
We define a filtration $\mathcal{F}_t^\theta=\sigma(\theta_0,Y_{\tau-1},\tau\leq t)$, where $Y_t$ are the actor updates. The recursion of agent~$i$, $i\in\mathcal{N}$, is given as
\begin{align}
\theta_{t+1}^i&=\Psi_\Theta^i\big(\theta_t^i+\alpha_{\theta,t}\delta_t^i\psi_t^i\big)\\
&=\Psi_\Theta^i\big(\theta_t^i+\alpha_{\theta,t}\big[g_t^i(\theta_t^i)+\delta M_t+\beta_t\big]\big),
\end{align}
where $\delta_t^i=f_t^T\lambda_t^i+\gamma\phi_{t+1}^Tv_t^i-\phi_t^Tv_t^i$, $\psi_t^i=\nabla_{\theta^i}\log\pi^i(a_t^i|s_t^i;\theta_t^i)$, and the functions $g_t(\cdot)$, $\delta M_t$, $\beta_t$ are given as
\begin{align}
g_t^i(\theta_t^i)&=\mathbb{E}_{\pi_t,d_{\pi_t},p}\big(\delta_{t,\pi_t}\psi_t^i|\mathcal{F}_t^\theta\big)\\
\delta M_t&=\delta_t^i\psi_t^i-\mathbb{E}_{\pi_t,d_{\pi_t},p}\big(\delta_t^i\psi_t^i|\mathcal{F}_t^\theta\big)\\
\beta_t&=\mathbb{E}_{\pi_t,d_{\pi_t},p}\big((\delta_{t}^i-\delta_{t,\pi_t})\psi_t^i|\mathcal{F}_t^\theta\big).
\end{align}
The signal $\delta_{t,\pi_t}$ is the approximated team-average TD error upon convergence of the parameters $v_t$ and $\lambda_t$ under the current network policy $\pi(a|s;\theta_t)$, i.e., $\delta_{t,\pi_t}=f_t^T\lambda_{\pi_t}+\gamma\phi_{t+1}v_{\pi_t}-\phi_t^Tv_{\pi_t}$. To complete the convergence proof, we verify conditions given in Appendix~\ref{ap:con}.
\begin{enumerate}

	\item The function $\delta_t^i$ is bounded by Assumption~\ref{as:linear_approx}, and Theorem~\ref{thm:v_team_convergence} and \ref{thm:lam_team_convergence}. The function $\psi_t^i$ is bounded by Assumption~\ref{as:policy_updates}. Therefore, we obtain $\sup_t\mathbb{E}\big(\Vert\delta_t^i\psi_t^i\Vert\big|\mathcal{F}_t^\theta\big)<\infty$.

	\item The step size sequence $\alpha_{\theta,t}$ satisfies $\sum_t\alpha_{\theta,t}^2<\infty$ and $\lim_t\frac{\alpha_{\theta,t+1}}{\alpha_{\theta,t}}=1$.

	\item The bias term satisfies $\beta_t\rightarrow 0$ with probability one since $v_t\rightarrow v_{\pi_t}$ and $\lambda_t\rightarrow \lambda_{\pi_t}$ on the faster time scale by Assumption~\ref{as:step_size}.
	
	\item The admissible set $\Theta$ is a hyperrectangle by Assumption~\ref{as:policy_updates}.		
	
	\item The function $g_t^i(\cdot)$ is continuous in $\theta_t^i$ uniformly in $t$. Furthermore, $g_t^i(\cdot):=\bar{g}^i(\cdot)$ since it is independent of $t$.	

	\end{enumerate}
Applying Theorem~\ref{thm:ODE_con1}, the asymptotic behavior of the actor updates is given by the ODE $\dot\theta^i=\Psi_{\Theta^i}\big[\bar{g}^i(\theta^i)\big]$. We define the approximated team-average objective function as $\tilde{J}(\theta)=\mathbb{E}_{\pi,d_{\pi},p}\big[\bar{r}(s,a;\lambda_\pi)+\gamma V(s^\prime;v_\pi)\big]$. It can be easily verified that $v_\pi$ and $\lambda_\pi$ are continuously differentiable in $\theta^i$. Furthermore, Assumption~\ref{as:policy_coop} ensures differentiability of $\nabla_{\theta^i}\log\pi^i(a_t^i|s_t;\theta)$. Therefore, $\tilde{J}(\theta)$ is continuously differentiable in $\theta^i$. The local AC policy gradient is given as $\nabla_{\theta^i}\tilde{J}(\theta)=\mathbb{E}_{\pi,d_\pi,p}\big[\big(\bar{r}(s,a;\lambda_\pi)+\gamma V(s^\prime;v_\pi)-V(s;v_\pi)\big)\nabla_{\theta^i}\log\pi^i(a^i|s;\theta^i)\big]$. We note that  $\bar{g}^i(\theta^i)=\nabla_{\theta^i}\tilde{J}(\theta)$ for $i\in\mathcal{N}$. The rate of change of $\tilde{J}(\theta)$ is given as $\dot{\tilde{J}}(\theta)=\nabla_\theta \tilde{J}(\theta)^T\big(\nabla_\theta \tilde{J}(\theta)+z\big)$, where $z$ is the reflection term that projects the actor parameters back into the admissible set $\Theta$, i.e., $z=-\nabla_\theta \tilde{J}(\theta)$ whenever a constraint is active and $z=0$ otherwise (elementwise). Therefore, we obtain $\dot{\tilde{J}}(\theta)>0$ if $\nabla_\theta \tilde{J}(\theta)+z\neq0$ and $\dot{\tilde{J}}(\theta)=0$ otherwise. By Theorem~\ref{thm:ODE_con2}, the solution of the ODE $\dot\theta=\Psi_\Theta\big[\bar{g}(\theta)\big]=\begin{bmatrix} \Psi_\Theta^1\big[\bar{g}^1(\theta^1)\big]^T & \dots & \Psi_\Theta^N\big[\bar{g}^N(\theta^N)\big]^T\end{bmatrix}^T$ converges to a set of stationary points $\nabla_\theta \tilde{J}(\theta)+z=0$ that correspond to the local maxima of the approximated objective function $\tilde{J}(\theta)$.
\end{proof}

\subsection{Algorithm~\ref{alg:2}}

\begin{proof}(\textbf{Lemma~\ref{lem:spectral_radius2}})
The proof is analogous to the proof of Lemma~\ref{lem:spectral_radius}. The difference here is that we assume connectivity of the consensus matrix $C_{x,t}^+$ for $t>0$ instead of the mean connectivity and that the positive lower-bound of the consensus weights is implied to Lemma~\ref{lem:equivalent_consensus}. Using the same reasoning about the eigenvalues, we obtain  $\Vert (I-\mathbf{11}^T/N^+)C_{x,t}^+x\Vert^2\leq\rho_t^+\Vert x\Vert^2$ for all $x$ and some $\rho_t^+<1$.
\end{proof}

\begin{proof}(\textbf{Lemma~\ref{lem:v_consensus2}})
We write the updates of $v_{\perp,t}^+$ in the same form as in \eqref{v_perp_update}. The updates are given as follows
\begin{align*}
v_{\perp,t+1}^+&=(I\otimes\hat\Gamma_{v,t})v_{\perp,t}^++\mathcal{J}_\perp\big[(C_{v,t}^+\otimes\Gamma_{v,t})\big(v_{\perp,t}^++\alpha_{v,t}(A_{v,t}^+v_{\perp,t}^++b_{v,t}^+)\big)\big].
\end{align*}
Spliting the updates into two mutually orthogonal subspaces, we obtain
\begin{align*}
(I\otimes\hat\Gamma_{v,t})v_{\perp,t+1}^+&=(I\otimes\hat\Gamma_{v,t})v_{\perp,t}^+\\
(I\otimes\Gamma_{v,t})v_{\perp,t+1}^+&=\big[(I-\mathbf{1}\mathbf{1}^T/N^+)C_{v,t}^+\otimes I\big](I\otimes\Gamma_{v,t})\big(v_{\perp,t}^++\alpha_{v,t}(A_{v,t}^+v_{\perp,t}^++b_{v,t}^+)\big)
\end{align*}
The first term equation implies $\Vert(I\otimes\hat\Gamma_{v,t})v_{\perp,t+1}^+\Vert^2=\Vert(I\otimes\hat\Gamma_{v,t})v_{\perp,t}^+\Vert^2$ and for the second equation we write
\begin{align*}
\Vert (I\otimes\Gamma_{v,t})v_{\perp,t+1}^+\Vert^2&=
\big\Vert\big[(I-\mathbf{1}\mathbf{1}^T/N^+)C_{v,t}^+\otimes I\big](I\otimes\Gamma_{v,t})\big(v_{\perp,t}^++\alpha_{v,t}(A_{v,t}^+v_{\perp,t}^++b_{v,t}^+)\big)\big\Vert^2\\
&\leq\rho_t^+\cdot\big\Vert(I\otimes\Gamma_{v,t})\big((I+\alpha_{v,t}A_{v,t}^+)v_{\perp,t}^++\alpha_{v,t}b_{v,t}^+)\big)\big\Vert^2,
\end{align*}
where $\rho_t^+<1$ by Lemma~\ref{lem:spectral_radius2}. Following the proof of Lemma~\ref{lem:v_consensus}, we obtain $\sup_t\Vert(I\otimes\Gamma_{v,t})\alpha_{v,t}^{-1}v_{\perp,t}^+\Vert^2<\infty$ and $\lim\limits_{t\rightarrow\infty}v_{\perp,t}^+=0$ with probability one.
\end{proof}
\begin{proof}(\textbf{Lemma~\ref{lem:lam_consensus2}})
The proof is analogous to the proof of Lemma~\ref{lem:v_consensus2}.
\end{proof}

\begin{proof}(\textbf{Theorem~\ref{thm:v_team_convergence2}})
We let $\mathcal{F}_t^{v}=\sigma(v_0^+,Y_{\tau-1},\xi_\tau,\tau\leq t)$ denote a filtration, where $Y_\tau$ is a critic update and $\xi_\tau=(r_\tau^+,s_\tau,a_\tau,C_{v,\tau-1}^+)$ is a collection of random variables. We write the iteration of $\left<v_t^+\right>$ as follows
\begin{align}
\left<v_{t+1}^+\right>&=\left<(I\otimes\hat\Gamma_{v,t})v_t^++(C_{v,t}^+\otimes\Gamma_{v,t})\big(v_t^++\alpha_{v,t}(A_{v,t}^+v_t^++b_{v,t}^+)\big)\right>\\
&=\left<v_t^+\right>+\alpha_{v,t}A_{v,t}^\prime\left<v_t^+\right>+\alpha_{v,t}\left<(C_{v,t}^+\otimes\Gamma_{v,t})\big(\alpha_{v,t}^{-1}v_{\perp,t}^++A_{v,t}^+v_{\perp,t}^++b_{v,t}^+\big)\right>\\
&=\left<v_t^+\right>+\alpha_{v,t}\big[g_t\big(\left<v_t^+\right>,\xi_t\big)+\delta M_t+\beta_t\big],
\end{align}
where the functions $g_t(\cdot,\cdot)$, $\delta M_t$, and $\beta_t$ are given as
\begin{align}
g_t\big(\left<v_t^+\right>,\xi_t\big)&=\mathbb{E}\big(A_{v,t}^\prime\left<v_t^+\right>|\mathcal{F}_t^v\big)+\left<(C_{v,t}^+\otimes \Gamma_{v,t})b_{v,t}^+\right>+\left<(C_{v,t}^+\otimes\Gamma_{v,t})\alpha_{v,t}^{-1}v_{\perp,t}\right>\\
\delta M_t&=A_{v,t}^\prime\left<v_t^+\right>-\mathbb{E}\big(A_{v,t}^\prime\left<v_t^+\right>|\mathcal{F}_t^v\big)\\
\beta_t&=\left<(C_{v,t}^+\otimes\Gamma_{v,t})A_{v,t}^+v_{\perp,t}^+\right>.
\end{align}
To finalize the proof, we need to verify the conditions in Appendix~\ref{ap:unc}.
\begin{enumerate}
	
	\item We have $\Vert g_t\big(\left<x\right>,\xi_t\big)-g_t\big(\left<y\right>,\xi_t\big)\Vert=\Vert\mathbb{E}\big(A_{v,t}(\left<x\right>-\left<y\right>)|\mathcal{F}_t^v\big)\Vert\leq K_1\cdot\Vert \left<x\right>-\left<y\right>\Vert^2$ for some $K_1>0$ since $A_{v,t}^\prime$ is uniformly bounded by Assumption~\ref{as:linear_approx}. Thus, $g_t\big(\left<v_t^+\right>,\xi_t\big)$ is Lipschitz continuous in $\left<v_t^+\right>$.
	
	\item The step size sequence $\{\alpha_{v,t}\}_{t\geq 0}$ satisfies $\sum_t\alpha_{v,t}=\infty$ and $\sum_t\alpha_{v,t}^2<\infty$, for $t\geq 0$.
	
	\item The martingale difference sequence $\delta M_t$ satisfies $\mathbb{E}\big(\Vert\delta M_t\Vert^2|\mathcal{F}_t^v\big)\leq K_2\cdot(1+\Vert \left<v_t^+\right>\Vert^2)$, since $A_{v,t}^\prime$ is uniformly bounded by Assumption~\ref{as:linear_approx}.
	
	\item By Lemma~\ref{lem:v_consensus2} and Assumption~\ref{as:linear_approx}, the bias term $\beta_t$ is uniformly bounded and $\beta_t\rightarrow 0$ with probability one.
	
	\item We let $D(\left<v\right>)$ denote a set of all occupation measures of the Markov chain $\{\xi_t\}_{t\geq 0}$ for a fixed $\left<v\right>$. The Markov chain $\{\xi_t\}_{t\geq 0}$ is uniformly bounded since $r_t$, $s_t$, $a_t$, and $C_{v,t}^+$ are uniformly bounded.
	
\end{enumerate}
By Theorem~\ref{thm:DI_unc2}, the asymptotic behavior is described by the differential inclusion $\left<\dot{v}^+\right>\in\mathbb{E}_{\pi,d_\pi}\big[g_t(\left<v^+\right>,\xi_t)\big]=\Phi^TD_\pi^s(\gamma P_\pi-I)\Phi\left<v^+\right>+\frac{1}{N^+}\sum_{i\in\mathcal{N^+}}\Phi^TD_\pi^s R_\pi^i+\Delta_v$, where
\begin{align*}
\Vert\Delta_v\Vert&\leq\lim_{t,m}\sup_{\xi_t}\bigg\Vert\frac{1}{m}\sum_{k=t}^{t+m-1}\big[(\mathbf{1}^TC_{v,k}^+r_{k+1}^+\otimes \phi_k)\nonumber\\
&\quad+(\mathbf{1}^TC_{v,k}^+\otimes\frac{\phi_k\phi_k^T}{\Vert \phi_k\Vert^2})\alpha_{v,k}^{-1}v_{\perp,k}^+\big]- \frac{1}{N^+}\sum_{i\in\mathcal{N^+}}\Phi^TD_\pi^{s} R_\pi^i\bigg\Vert.
\end{align*}
Since the terms $b_{v,t}^+$ and $\alpha_{v,t}^{-1}v_{\perp,t}^+$, and consequently $\Delta_v$, are uniformly bounded by Assumption~\ref{as:linear_approx} and \ref{as:reward_bound} and Lemma~\ref{lem:v_consensus2}, we can apply Theorem~\ref{thm:ODE_unc2} to establish boundedness of the critic updates. Finally, we apply Theorem~\ref{thm:DI_unc3} to establish that the team-average critic value $\left< v^+\right>$ converges with probability one to a bounded neighborhood around the cooperative-team-average true minimizer $v_\pi^+$ that satisfies $\Phi^TD_\pi^s(\gamma P_\pi-I)\Phi v_\pi^++\frac{1}{N^+}\sum_{i\in\mathcal{N^+}}\Phi^TD_\pi^sR_\pi^i=0$.
\end{proof}

\begin{proof}(\textbf{Lemma~\ref{thm:lam_team_convergence2}})
The proof is nearly identical to the proof of Theorem~\ref{thm:v_team_convergence2}. We let $\mathcal{F}_t^{\lambda}=\sigma(\lambda_0,\xi_\tau,\tau\leq t)$ denote a filtration, where $\xi_\tau=(r_\tau,s_\tau,a_\tau,C_{\lambda,\tau-1}^+)$ is a collection of signals. We write the updates in the form
\begin{align}
\left<\lambda_{t+1}^+\right>&=\left<(I\otimes\hat\Gamma_{\lambda,t})\lambda_t^++(C_{\lambda,t}^+\otimes\Gamma_{\lambda,t})\big(\lambda_t^++\alpha_{\lambda,t}(A_{\lambda,t}^+\lambda_t^++b_{\lambda,t}^+)\big)\right>\\
&=\left<\lambda_t^+\right>+\alpha_{\lambda,t}\big[g_t\big(\left<\lambda_t^+\right>,\xi_t\big)+\delta M_t+\beta_t\big],
\end{align}
where the functions $g_t(\cdot,\cdot)$, $\delta M_t$, and $\beta_t$ are given as
\begin{align}
g_t\big(\left<\lambda_t^+\right>,\xi_t\big)&=\mathbb{E}\big(A_{\lambda,t}^\prime\left<\lambda_t^+\right>|\mathcal{F}_t^\lambda\big)+\left<(C_{\lambda,t}^+\otimes \Gamma_{\lambda,t})b_{\lambda,t}^+\right>+\left<(C_{\lambda,t}^+\otimes\Gamma_{\lambda,t})\alpha_{\lambda,t}^{-1}\lambda_{\perp,t}\right>\\
\delta M_t&=A_{\lambda,t}^\prime\left<\lambda_t^+\right>-\mathbb{E}\big(A_{\lambda,t}^\prime\left<\lambda_t^+\right>|\mathcal{F}_t^\lambda\big)\\
\beta_t&=\left<(C_{\lambda,t}^+\otimes\Gamma_{\lambda,t})A_{v,t}^+\lambda_{\perp,t}^+\right>.
\end{align}
The conditions can be verified as in the proof of Theorem~\ref{thm:v_team_convergence2}, which leads to the convergence of $\left<\lambda_t^+\right>$ to a limit set of the differential inclusion $\left<\dot{\lambda}^+\right>\in-F^TD_\pi^{s,a}F\left<\lambda^+\right>+\frac{1}{N^+}\sum_{i\in\mathcal{N^+}}F^TD_\pi^{s,a}R^i+\Delta_\lambda$, where 
\begin{align*}
\Vert\Delta_\lambda\Vert&=\lim_{t,m\rightarrow\infty}\sup_{\xi_t}\bigg\Vert\frac{1}{m}\sum_{k=t}^{t+m-1}\frac{1}{N^+}\bigg((\mathbf{1}^TC_{\lambda,k}^+r_{k+1}^+\otimes f_k)\nonumber\\
&\qquad+(\mathbf{1}^TC_{\lambda,k}^+\otimes\frac{f_kf_k^T}{\Vert f_k\Vert^2})\alpha_{\lambda,t}^{-1}\lambda_{\perp,k}^+\bigg) - \frac{1}{N^+}\sum_{i\in\mathcal{N^+}}F^TD_\pi^{s,a}R^i\bigg\Vert.
\end{align*}
Hence, the team-average value of the team-average reward function parameter, $\left<\lambda^+\right>$, converges with probability one to a bounded neighborhood around the desired minimizer $\lambda_\pi^+$ that satisfies $F^TD_\pi^{s,a}\big(\frac{1}{N^+}\sum_{i\in\mathcal{N}^+}R^i-F\lambda_\pi^+\big)=0$.
\end{proof}

\begin{proof}(\textbf{Theorem~\ref{thm:actor2}})
We define a filtration $\mathcal{F}_t^\theta=\sigma(\theta_\tau^i,\tau\leq t)$. The actor updates of agent~$i$, $i\in\mathcal{N}^+$, are given as
\begin{align}
\theta_{t+1}^i&=\Psi_{\Theta^i}\big(\theta_t^i+\alpha_{\theta,t}\delta_t^i\psi_t^i\big)\\
&=\Psi_{\Theta^i}\big(\theta_t^i+\alpha_{\theta,t}\big[g_t(\theta_t^i)+\delta M_t\big]\big),
\end{align}
where $\delta_t^i=f_t^T\lambda_t^i+\gamma\phi_{t+1}^Tv_t^i-\phi_t^Tv_t^i$, $\psi_t^i=\nabla_{\theta^i}\log\pi^i(a_t^i|s_t^i;\theta_t^i)$, and the function $g_t(\cdot)$ and martingale difference $\delta M_t$ are given as
\begin{align}
g_t(\theta_t^i)&=\mathbb{E}_{\pi_t,d_{\pi_t},p}\big(\delta_{t,\pi_t}\psi_t^i|\mathcal{F}_t^\theta\big)+\mathbb{E}_{\pi_t,d_{\pi_t},p}\big((\delta_{t}^i-\delta_{t,\pi_t})\psi_t^i|\mathcal{F}_t^\theta\big)\\
\delta M_t&=\delta_t^i\psi_t^i-\mathbb{E}_{\pi_t,d_{\pi_t},p}\big(\delta_t^i\psi_t^i|\mathcal{F}_t^\theta\big).
\end{align}
The signal $\delta_{t,\pi_t}$ is the approximated network TD error under the current network policy $\pi(a|s;\theta_t)$ evaluated at $v_{\pi_t}^+$ and $\lambda_{\pi_t}^+$, i.e., $\delta_{t,\pi_t}=f_t^T\lambda_{\pi_t}^++\gamma\phi_{t+1}v_{\pi_t}^+-\phi_t^Tv_{\pi_t}^+$. To complete the convergence proof, we verify conditions given in Section~\ref{ap:con}.
\begin{enumerate}

	\item The function $\delta_t^i$ is bounded by Assumption~\ref{as:linear_approx} and Theorem~\ref{thm:v_team_convergence2} and \ref{thm:lam_team_convergence2}. The function $\psi_t^i$ is bounded by Assumption~\ref{as:policy_updates}. Therefore, we obtain $\sup_t\mathbb{E}\big(\Vert\delta_t^i\psi_t^i\Vert\big|\mathcal{F}_t^\theta\big)<\infty$.
	
	\item The step size sequence $\alpha_{\theta,t}$ satisfies $\sum_t\alpha_{\theta,t}^2<\infty$ and $\lim_{t\rightarrow\infty}\frac{\alpha_{\theta,t+1}}{\alpha_{\theta,t}}=1$.

	\item The bias term satisfies $\beta_t=0$ with probability one.
	
	\item The admissible set $\Theta$ is a hyperrectangle by Assumption~\ref{as:policy_updates}.	
	
	\item The function $g_t^i(\theta^i)$ is continuous in $\theta^i$ uniformly in $t$, which further implies that a set-valued function $G(\theta^i)=\big\{\lim_{n,m\rightarrow\infty}\frac{1}{m}\sum_{t=n}^{n+m-1}g_t^i(\theta^i)\big\}$ is upper semicontinuous.

	\end{enumerate}
Applying Theorem~\ref{thm:DI_con}, the asymptotic behavior of the actor updates is given by the differential inclusion $$\dot\theta^i=\Psi_{\Theta^i}\big[G^i(\theta^i)\big].$$ 
We let $\tilde{J}^+(\theta^+,\pi^-)=\mathbb{E}_{\pi,d_{\pi},p}\big[\bar{r}(s,a;\lambda_\pi^+)+\gamma V(s_{t+1};v_\pi^+)\big]$ denote the approximated team-average objective function. It can be easily verified that $v_\pi^+$ and $\lambda_\pi^+$ are continuously differentiable in $\theta^i$. Furthermore, Assumption~\ref{as:policy_coop} ensures differentiability of $\nabla_{\theta^i}\log\pi^i(a_t^i|s_t;\theta^i)$. Therefore, $\tilde{J}^+(\theta^+,\pi^-)$ is continuously differentiable in $\theta^i$ and the associated local AC policy gradient is given as
\begin{align*}
\nabla_{\theta^i}\tilde{J}^+(\theta^+,\pi^-)=\mathbb{E}_{\pi,d_{\pi},p}\big[\big(\bar{r}(s,a;\lambda_\pi^+)+\gamma V(s^\prime;v_\pi^+)-V(s;v_\pi^+)\big)\nabla_{\theta^i}\log\pi^i(a^i|s;\theta^i)\big].
\end{align*}
We note that $G(\theta^i)=\nabla_{\theta^i}\tilde{J}^+(\theta^+,\pi^-)+\varepsilon_t^i$, where $\varepsilon_t^i$ is a set-valued error due to the discrepancy $\mathbb{E}_{\pi,d_{\pi},p}\big((\delta_{t}^i-\delta_{t,\pi})\psi_t^i|\mathcal{F}_t^\theta\big)$. Using Assumption~\ref{as:policy_adv}, the rate of change of $\tilde{J}^+(\theta^+,\pi^-)$ is given in terms of the cooperative agents as follows $$\dot{\tilde{J}}^+(\theta^+,\pi^-)=\sum_{i\in\mathcal{N}^+}\big[\nabla_{\theta^i} \tilde{J}^+(\theta^+,\pi^-)^T\big(\nabla_{\theta^i}\tilde{J}^+(\theta^+,\pi^-)+\varepsilon_t^i+z_t^i\big)\big].$$ Here, $z_t^i$ is the reflection term that projects the actor parameters back into the admissible set $\Theta^i$, i.e., $z_t^i=-\nabla_{\theta^i} \tilde{J}(\theta^+,\pi^-)-\varepsilon_t^i$ whenever a constraint is active and $z_t^i=0$ otherwise (elementwise). Suppose that $\sum_{i\in\mathcal{N}^+}\Vert z_t^i+\varepsilon_t^i\Vert^2\leq\sum_{i\in\mathcal{N}^+}\Vert\nabla_{\theta^i}\tilde{J}(\theta^+,\pi^-)\Vert^2$ on a compact subset. By Cauchy-Schwartz inequality, we then obtain $\dot{\tilde{J}}^+(\theta)\geq 0$. Therefore, the policies converge to a neighborhood of a local maximizer of the cooperative team-average objective function provided that the errors $\varepsilon^i$ are small.
\end{proof}

\section{Stochastic Approximation}
In this section, we state important theoretical results on stochastic approximation. These results closely follow those in \citep{zhang2018} and are adopted from \citep{borkar2009book} and \citep{kushner2003book}. The latter provides the most general statements about the convergence of iterates.  The theoretical results presented here include convergence theorems for unconstrained and constrained stochastic approximation.

\subsection{Unconstrained Stochastic Approximation with Correlated Noise}\label{ap:unc}

We let $\theta_n$, $Y_n$ and $\xi_n$ denote the estimated parameter, observation, and state of a Markov chain, respectively. We define the filtration $\mathcal{F}_n=\sigma(\theta_0,Y_{i-1},\xi_i,i\leq n)$. The unconstrained stochastic updates are given as follows
\begin{align}
\theta_{n+1}&=\theta_n+\epsilon_n\big[\mathbb{E}(Y_n|\mathcal{F}_n)+\delta M_n\big]\\
&=\theta_n+\epsilon_n\big[g_n(\theta_n,\xi_n)+\delta M_n+\beta_n\big]\label{alg_unc}
\end{align}
where $\epsilon_n>0$ and  $\delta M_n=Y_n-E(Y_n|\mathcal{F}_n)$ is a martingale difference. We introduce assumptions for the updates above in the following lines.

\begin{enumerate}
	
	\item The function $g_n(\theta_n,\xi_n)$ is Lipschitz continuous in the first argument.
		
	\item The step size sequence $\{\epsilon_n\}_{n\geq 0}$ satisfies $\sum_n\epsilon_n=\infty$ and $\sum_n\epsilon_n^2<\infty$, for $n\geq 0$.
	
	\item The martingale difference sequence $\{\delta M_n\}_{n\geq 0}$ satisfies $\mathbb{E}(\Vert \delta M_{n+1}\Vert^2 | \mathcal{F}_n)\leq K\cdot(1+\Vert\theta_n\Vert^2)$ for all $n\geq0$ and some $K>0$.
	
	\item The random sequence $\{\beta_n\}_{n\geq 0}$ is bounded and satisfies $\beta_n\rightarrow 0$ with probability one.

	\item
	\begin{enumerate}
	\item $\{\xi_n\}_{n\geq 0}$ is an irreducible Markov chain with stationary distribution $\eta$.
	\item The Markov chain $\{\xi_n\}_{n\geq 0}$ is uniformly bounded and has a set of occupation measures $\mathcal{D}(\theta)$ for any $\theta$.
	\end{enumerate}
	
\end{enumerate}

\begin{theorem}\label{thm:ODE_unc1}
Under assumptions 1-4 and 5(a), the asymptotic behavior of the algorithm \eqref{alg_unc} is described by the ODE
\begin{align}
\dot\theta=\bar{g}(\theta):=\mathbb{E}_{i\in\eta}\big[g(\theta,i)\big].\label{ODE}
\end{align}
\end{theorem}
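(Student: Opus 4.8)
The plan is to establish this via the ODE method for stochastic approximation with Markov (correlated) noise, in the style of \citep{borkar2009book,kushner2003book}. First I would pass to continuous time: set $t_n=\sum_{k=0}^{n-1}\epsilon_k$, let $\bar\theta(\cdot)$ be the piecewise-linear interpolation of $\{\theta_n\}$ on this rescaled clock, and study the shifts $\bar\theta^n(\cdot)=\bar\theta(t_n+\cdot)$. The goal is to show that on every finite window these shifts are equicontinuous and that their limit points solve \eqref{ODE}. Tightness requires boundedness of the iterates; since this is supplied by the companion stability result (Theorem~\ref{thm:ODE_unc2} delivers $\sup_n\Vert\theta_n\Vert<\infty$ in the applications), I would either take $\sup_n\Vert\theta_n\Vert<\infty$ as given or run the whole argument on the event that the iterates remain in a fixed compact set via a standard localization.

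Next I would write the increment in \eqref{alg_unc} as the sum of four pieces, namely the averaged field $\bar{g}(\theta_n)$, the martingale noise $\delta M_n$, the vanishing bias $\beta_n$, and the Markov fluctuation $g_n(\theta_n,\xi_n)-\bar{g}(\theta_n)$, and then argue that the last three make no contribution to the limiting trajectory. The martingale term is controlled by Assumption~3 together with $\sum_n\epsilon_n^2<\infty$: on the bounded-iterate event the partial sums $M_n=\sum_{k<n}\epsilon_k\,\delta M_k$ form an $L^2$-bounded martingale, so $M_n$ converges almost surely and its oscillation over any fixed rescaled-time window tends to zero. The bias term is dispatched directly by Assumption~4, since $\beta_n\to0$ and a step-size-weighted null sequence contributes nothing over a finite horizon.

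The main obstacle is the Markov fluctuation $g_n(\theta_n,\xi_n)-\bar{g}(\theta_n)$, which does not vanish term by term and must be averaged against the stationary law $\eta$. Here my plan is the Poisson-equation device: using irreducibility of $\{\xi_n\}$ (Assumption~5(a)), for each frozen $\theta$ I would solve $(I-P)\hat{g}(\theta,\cdot)=g(\theta,\cdot)-\bar{g}(\theta)$, where $P$ is the transition kernel and the right-hand side has zero $\eta$-mean, and substitute to rewrite the fluctuation as a genuine martingale-difference increment plus the telescoping term $\hat{g}(\theta_n,\xi_n)-\hat{g}(\theta_{n+1},\xi_{n+1})$ and a remainder $\hat{g}(\theta_{n+1},\xi_{n+1})-\hat{g}(\theta_n,\xi_{n+1})$. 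The remainder is $O(\Vert\theta_{n+1}-\theta_n\Vert)=O(\epsilon_n)$ because $g$, and hence $\hat{g}$, is Lipschitz in $\theta$ by Assumption~1, so after weighting by $\epsilon_n$ and using $\sum_n\epsilon_n^2<\infty$ its cumulative effect vanishes, while the telescoping piece is handled by Abel summation and $\lim_n\epsilon_{n+1}/\epsilon_n=1$. Securing uniform regularity and growth bounds on the Poisson solution $\hat{g}$ over the relevant compact $\theta$-set, so that these estimates hold uniformly, is the delicate point of the whole argument.

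Finally I would invoke the averaging conclusion: the interpolated shifts are bounded and equicontinuous, so by Arzel\`a--Ascoli every subsequence has a uniformly convergent sub-subsequence, and by the three negligibility estimates above any such limit satisfies $\dot\theta=\bar{g}(\theta)$. Since averaging preserves the Lipschitz constant, $\bar{g}$ is Lipschitz, so \eqref{ODE} has a unique solution through each initial condition; hence all limit points agree and the interpolated process tracks the ODE, which is exactly the asymptotic characterization claimed in Theorem~\ref{thm:ODE_unc1}.
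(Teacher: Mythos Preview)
The paper does not prove Theorem~\ref{thm:ODE_unc1}; it merely states it in the appendix as a result ``adopted from \citep{borkar2009book} and \citep{kushner2003book}'' and then invokes it when analyzing Algorithms~\ref{alg:1} and~\ref{alg:2}. So there is no in-paper proof to compare against.

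Your sketch is a faithful outline of the standard ODE method with Markovian noise as presented in those references: continuous-time interpolation on the rescaled clock, the four-way split of the increment, martingale control via $\sum_n\epsilon_n^2<\infty$ and Assumption~3, elimination of $\beta_n$ by Assumption~4, the Poisson-equation decomposition of $g_n(\theta_n,\xi_n)-\bar g(\theta_n)$ into a martingale difference, a telescoping term, and an $O(\epsilon_n)$ Lipschitz remainder, and finally Arzel\`a--Ascoli to identify limits as solutions of \eqref{ODE}. This is exactly the route taken in \citep[Ch.~6]{borkar2009book}, so in spirit your approach \emph{is} the paper's approach by citation. One point to flag: you correctly note that equicontinuity requires $\sup_n\Vert\theta_n\Vert<\infty$, which the paper obtains separately via Theorem~\ref{thm:ODE_unc2}; strictly speaking Theorem~\ref{thm:ODE_unc1} as stated does not list boundedness among its hypotheses, so the cleanest reading is that the theorem asserts the ODE characterization on the event of bounded iterates (which is how it is actually used downstream in Theorems~\ref{thm:v_team_convergence} and~\ref{thm:v_team_convergence2}).
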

\begin{theorem}\label{thm:ODE_unc2}
Suppose that $\lim_{c\rightarrow\infty}\bar{g}(c\theta)\cdot c^{-1}=g_\infty(\theta)$ exists uniformly on compact sets for some $g_\infty\in C(\mathbb{R}^n)$. If the ODE $\dot\theta=g_\infty(\theta)$ has the origin as the unique globally asymptotically stable equilibrium, then $\sup_n\Vert\theta_n\Vert<\infty$ with probability one.
\end{theorem}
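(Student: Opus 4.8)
The plan is to establish boundedness through the rescaling argument of Borkar and Meyn, which is precisely the method underlying this stability criterion in \citep{borkar2009book}. The guiding intuition is that when $\Vert\theta_n\Vert$ is large, the mean increment of the recursion \eqref{alg_unc} is dominated by $g_n(\theta_n,\xi_n)$, whose averaged behavior at large radius is governed by the scaled field $g_\infty$. Since the origin is the unique globally asymptotically stable equilibrium of $\dot\theta=g_\infty(\theta)$, trajectories started far out are pulled back toward the origin, and this contraction should prevent the iterates from escaping to infinity.

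First I would introduce the interpolated timescale $t_n=\sum_{i=0}^{n-1}\epsilon_i$ together with the rescaled fields $g_c(\theta)=\bar{g}(c\theta)/c$ for $c\geq 1$, so that by hypothesis $g_c\to g_\infty$ uniformly on compact sets. Exploiting global asymptotic stability of the origin, I would fix a horizon $T>0$ such that every trajectory of $\dot\theta=g_\infty(\theta)$ started in the closed unit ball satisfies $\Vert\theta(T)\Vert<1/4$; a standard robustness-of-solutions argument then yields $\Vert\theta(T)\Vert<1/2$ for trajectories of $\dot\theta=g_c(\theta)$ once $c$ is large enough.

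Next I would partition the timescale into consecutive blocks $[T_k,T_{k+1})$ each of length at least $T$, with $T_k=t_{m(k)}$ for an increasing index sequence $m(k)$, and define block-wise rescaled iterates $\hat\theta_n=\theta_n/r_k$ with $r_k=\max(\Vert\theta_{m(k)}\Vert,1)$ for $m(k)\le n<m(k+1)$. The rescaled recursion is again of the form \eqref{alg_unc}, now with mean field $g_{r_k}$ and with the noise and bias terms divided by $r_k$. Invoking the averaging lemma that underlies Theorem~\ref{thm:ODE_unc1} (via conditions 1--4 and 5(a): the step-size-weighted martingale contribution is square-summable and hence its rescaled part vanishes, $\beta_n\to0$, and the Markov chain $\{\xi_n\}$ averages $g_n$ to $\bar{g}$), the interpolated rescaled trajectory over each block tracks a solution of $\dot\theta=g_{r_k}(\theta)$ uniformly. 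Combined with the contraction from the previous step, this gives $\Vert\theta_{m(k+1)}\Vert<r_k/2$ whenever $r_k$ is large, i.e.\ a geometric pull-back at the block endpoints; bounding the intra-block growth by a Gronwall estimate then forces $\sup_k r_k<\infty$ almost surely, whence $\sup_n\Vert\theta_n\Vert<\infty$.

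The hard part is the rescaled tracking step, precisely because the noise is correlated rather than i.i.d.: one must verify that, after dividing by $r_k$ and summing over a block, the Markov-modulated field $g_n(\theta_n,\xi_n)$ genuinely averages to $\bar{g}$ while the martingale and bias contributions become negligible uniformly over all blocks, regardless of how large $r_k$ grows. This is exactly where the ergodicity of $\{\xi_n\}$ with stationary distribution $\eta$ in condition 5(a) and the square-summability $\sum_n\epsilon_n^2<\infty$ enter, and it is the component I would cite directly from \citep{borkar2009book} rather than re-derive, since the present setting satisfies all of its hypotheses.
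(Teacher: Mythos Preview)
Your sketch is correct and is precisely the Borkar--Meyn rescaling argument. Note, however, that the paper does not supply its own proof of this theorem: it is stated in the appendix as a known result adopted from \citep{borkar2009book} and \citep{kushner2003book}, so there is no original proof to compare against---your proposal simply reproduces the argument behind the cited reference.
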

\begin{theorem}\label{thm:ODE_unc3}
If the ODE \eqref{ODE} has a unique globally asymptotically stable equilibrium $\theta^*$ and $\sup_n\Vert\theta_n\Vert<\infty$ with probability one, then $\theta_n\rightarrow\theta^*$ as $n\rightarrow\infty$ with probability one.
\end{theorem}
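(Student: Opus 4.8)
The plan is to apply the ODE method for stochastic approximation, using Theorem~\ref{thm:ODE_unc1} as the key tracking ingredient and the boundedness hypothesis to localize the limit set. First I would form the interpolated trajectory: setting $t_n = \sum_{k=0}^{n-1}\epsilon_k$, let $\bar\theta(\cdot)$ be the continuous piecewise-linear interpolation with $\bar\theta(t_n) = \theta_n$. The substance of Theorem~\ref{thm:ODE_unc1} is that, almost surely, this interpolant asymptotically tracks the flow of \eqref{ODE}: for every $T > 0$,
$$\sup_{0 \le s \le T}\big\Vert\bar\theta(t_n+s) - x^n(s)\big\Vert \longrightarrow 0 \quad\text{a.s.},$$
where $x^n(\cdot)$ solves $\dot\theta = \bar{g}(\theta)$ with $x^n(0) = \theta_n$. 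This rests on the Lipschitz continuity of $\bar{g}$, square-summability of $\{\epsilon_n\}$ (so the associated noise martingale converges a.s.), the vanishing bias $\beta_n \to 0$, and ergodic averaging of $g_n(\cdot,\xi_n)$ over the stationary distribution $\eta$ via condition 5(a).

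Next I would invoke the hypothesis $\sup_n\Vert\theta_n\Vert < \infty$ a.s. Combined with the tracking property, standard arguments in the theory of asymptotic pseudotrajectories \citep{borkar2009book} show that the limit set $L = \bigcap_{n}\overline{\{\theta_m : m \ge n\}}$ of the iterates is, almost surely, a nonempty, compact, connected, and \emph{internally chain transitive} invariant set of the ODE \eqref{ODE}. Boundedness is exactly what guarantees that $L$ is nonempty and compact and that the interpolation remains in a region where the tracking estimate applies.

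Finally I would use global asymptotic stability to collapse $L$. Since every solution of $\dot\theta = \bar{g}(\theta)$ converges to $\theta^*$, the equilibrium $\theta^*$ is the unique attractor and its basin is all of $\mathbb{R}^n$; consequently the only internally chain transitive invariant set is the singleton $\{\theta^*\}$. Any internally chain transitive set is invariant and closed, hence would contain an entire trajectory, but all trajectories limit to $\theta^*$, and chain transitivity forbids a proper invariant subset that merely accumulates at the equilibrium. Therefore $L = \{\theta^*\}$ almost surely, which is precisely the assertion $\theta_n \to \theta^*$ a.s.

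The hardest part is the tracking estimate itself, which in this correlated-noise setting cannot be handled by a direct martingale argument: one must average $g_n(\cdot,\xi_n)$ against the stationary law of the Markov chain and control the residual fluctuations (e.g.\ through a Poisson-equation/averaging decomposition). Because Theorem~\ref{thm:ODE_unc1} already encapsulates this step, the remaining work here is the comparatively clean dynamical-systems argument showing that global asymptotic stability forces the internally chain transitive limit set to reduce to $\{\theta^*\}$.
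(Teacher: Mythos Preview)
Your argument is correct and follows the standard ODE-method approach from \citep{borkar2009book}. Note, however, that the paper does not supply its own proof of this theorem: it is stated in the appendix as a result adopted directly from \citep{borkar2009book} and \citep{kushner2003book}, so there is no independent proof in the paper to compare against. Your sketch---interpolated trajectory, asymptotic pseudotrajectory/tracking via Theorem~\ref{thm:ODE_unc1}, boundedness giving a nonempty compact internally chain transitive limit set, and global asymptotic stability forcing that set to be $\{\theta^*\}$---is precisely the argument in Borkar's Chapter~2 that the paper is citing.
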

\begin{theorem}\label{thm:DI_unc1}
Under assumptions 1-4 and 5(b), the asymptotic behavior of the algorithm \eqref{alg_unc} is described by the differential inclusion
\begin{align}
\dot\theta\in G(\theta):=\bigg\{\lim_{n,m\rightarrow\infty}\frac{1}{m}\sum_{i=n}^{n+m-1}g_i(\theta,j),\,j\in\mathcal{D}\bigg\}\label{DI}.
\end{align}
\end{theorem}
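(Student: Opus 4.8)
The plan is to follow the standard ODE/differential-inclusion method of stochastic approximation (as in \citep{kushner2003book,borkar2009book}), adapting the ergodic argument behind Theorem~\ref{thm:ODE_unc1} to the state-dependent, non-ergodic noise captured by assumption 5(b). I would work throughout on the almost-sure event that the iterates are bounded (boundedness itself being supplied separately by the companion boundedness result, so that only the characterization is at stake here). First I would introduce the rescaled time grid $t_0=0$, $t_n=\sum_{i=0}^{n-1}\epsilon_i$, define the piecewise-linear interpolation $\bar\theta(\cdot)$ of the iterates with $\bar\theta(t_n)=\theta_n$, and consider the left-shifts $\bar\theta^n(\cdot)=\bar\theta(t_n+\cdot)$. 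The goal is to show that every subsequential limit of $\{\bar\theta^n(\cdot)\}$ in the space of continuous $\mathbb{R}^d$-valued functions on $[0,\infty)$, under uniform-on-compacts convergence, is an absolutely continuous solution of the differential inclusion \eqref{DI}.

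The second step decomposes the increment of $\bar\theta^n(\cdot)$ over a window $[t_n,t_n+T]$ into three pieces coming from \eqref{alg_unc}: the drift $\sum_i\epsilon_i\,g_i(\theta_i,\xi_i)$, the martingale tail $\sum_i\epsilon_i\,\delta M_i$, and the bias tail $\sum_i\epsilon_i\,\beta_i$. By assumption 3, the conditional second moment of $\delta M_i$ is bounded by $K(1+\Vert\theta_i\Vert^2)$, which is uniformly bounded on the event of bounded iterates; together with $\sum_n\epsilon_n^2<\infty$ from assumption 2 this makes $\sum_i\epsilon_i\,\delta M_i$ an $L^2$-bounded martingale, so it converges almost surely and its contribution to $\bar\theta^n(\cdot)$ vanishes uniformly on compacts as $n\to\infty$. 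Since $\beta_n\to 0$ almost surely by assumption 4 and the window length $\sum_i\epsilon_i$ is bounded on compacts, the bias tail likewise vanishes. These facts, combined with the local boundedness of the drift and $\sum_n\epsilon_n=\infty$, yield equicontinuity of $\{\bar\theta^n(\cdot)\}$; with the uniform bound on the iterates, Arzel\`a--Ascoli gives relative compactness.

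The heart of the proof is the averaging of the drift. Fixing the window and invoking the Lipschitz continuity of $g_i$ in its first argument (assumption 1) together with the equicontinuity just established, I would freeze $\theta_i\approx\theta_n$ up to a vanishing error, reducing matters to the Ces\`aro averages $\frac{1}{m}\sum_{i=n}^{n+m-1}g_i(\theta_n,\xi_i)$. Under assumption 5(b) the chain $\{\xi_i\}$ is uniformly bounded, so its empirical occupation measures are tight; each weak limit point lies in the set of occupation measures $\mathcal{D}(\theta_n)$, and integrating $g$ against such a limit produces exactly an element of $G(\theta_n)$ as defined in \eqref{DI}. Passing to a convergent subsequence $\bar\theta^{n_k}\to\theta(\cdot)$ and combining the vanishing of the noise terms with this averaging shows $\dot\theta(t)\in G(\theta(t))$ for almost every $t$, which is the assertion. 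The main obstacle is precisely this averaging: because the chain need not be ergodic, the time-averages of $g_i(\theta,\xi_i)$ may accumulate at distinct points, so one cannot collapse to a single mean drift $\bar g(\theta)$ as in Theorem~\ref{thm:ODE_unc1}; instead one must verify that $G(\theta)$ has nonempty, compact, convex values and is upper semicontinuous in $\theta$, so that \eqref{DI} admits solutions and the interpolation limit is a genuine solution of the inclusion.
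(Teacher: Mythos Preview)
The paper does not actually prove this theorem: it is stated in the appendix as a known result ``adopted from \citep{borkar2009book} and \citep{kushner2003book}'' and used as a black box in the analysis of Algorithms~\ref{alg:1} and~\ref{alg:2}. Your proposal is therefore not so much a different route as a reconstruction of the proof that the cited references contain. As such, your sketch is faithful to the standard ODE/differential-inclusion method: the time-rescaling and interpolation, the martingale and bias tails handled via assumptions 2--4, the Arzel\`a--Ascoli compactness, and the freezing-and-averaging of the drift via Lipschitz continuity and tightness of occupation measures are exactly the ingredients in Kushner--Yin and Borkar. The one point that deserves a little more care is the passage from tightness of the empirical measures of $\{\xi_i\}$ to membership of the limit in $\mathcal{D}(\theta)$: assumption 5(b) posits that $\mathcal{D}(\theta)$ is the set of occupation measures \emph{for} the controlled chain, so you should make explicit that the weak limits of the empirical measures along the iterates indeed lie in this set (this is where the ``controlled Markov'' structure in Borkar's Chapter~6 enters), and that the resulting $G(\theta)$ is the Marchaud map needed for the differential inclusion to be well-posed. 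Otherwise your outline matches the textbook argument the paper is invoking.
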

\begin{theorem}\label{thm:DI_unc2}
Under assumptions 1-4 and 5(b), suppose that $\lim_{c\rightarrow\infty}G(c\theta)\cdot c^{-1}=g_\infty(\theta)$ exists uniformly on compact sets for all $i\in\mathcal{D}$ and some $g_\infty\in C(\mathbb{R}^n)$. If the ODE $\dot\theta=g_\infty(\theta,i)$ has the origin as the unique globally asymptotically stable equilibrium, then $\sup_n\Vert\theta_n\Vert<\infty$ with probability one.
\end{theorem}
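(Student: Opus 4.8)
The plan is to adapt the Borkar--Meyn scaling argument used in the ODE case (Theorem~\ref{thm:ODE_unc2}) to the set-valued setting, relying on the differential-inclusion characterization already established in Theorem~\ref{thm:DI_unc1}. By that theorem, the interpolated iterate trajectory is an asymptotic pseudo-trajectory of the differential inclusion $\dot\theta\in G(\theta)$, where $G$ aggregates the limiting drift over the occupation measures $\mathcal{D}$. I would introduce the scaled set-valued field $G_c(\theta):=G(c\theta)/c$ and note that, by the hypothesis $\lim_{c\rightarrow\infty}G(c\theta)/c=g_\infty(\theta)$ uniformly on compact sets, the scaled inclusions $\dot\theta\in G_c(\theta)$ converge to the limiting system $\dot\theta=g_\infty(\theta)$, whose origin is assumed to be the unique globally asymptotically stable equilibrium.

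The second step is the rescaling construction. Define the algorithmic time $t_n=\sum_{k=0}^{n-1}\epsilon_k$, fix a horizon $T>0$, and let $\{T_m\}$ be the induced partition of $[0,\infty)$ into blocks of length approximately $T$; write $r(n)$ for the index at the start of the block containing $t_n$. I would then consider the rescaled iterates $\hat\theta_n:=\theta_n/\max(1,\Vert\theta_{r(n)}\Vert)$ and their piecewise-linear interpolation $\hat\theta(\cdot)$. Using assumption~3 (the quadratic-growth bound on the martingale differences $\delta M_n$) together with $\sum_n\epsilon_n^2<\infty$ to control the accumulated noise, and assumption~4 to discard the vanishing bias $\beta_n$, one shows that on each block $\hat\theta(\cdot)$ is, up to an error that vanishes as the block index grows, a solution of the scaled inclusion $\dot\theta\in G_c(\theta)$ with $c=\Vert\theta_{r(n)}\Vert$. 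Passing to the limit, the rescaled trajectory tracks the limiting system $\dot\theta=g_\infty(\theta)$.

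The final step exploits the GAS assumption. Since the origin is globally asymptotically stable for $\dot\theta=g_\infty(\theta)$, there exist $T$ and $\rho\in(0,1)$ such that every solution starting in the closed unit ball lies in the ball of radius $\rho$ at time $T$. By the tracking established above, this contraction is inherited (for all sufficiently large blocks) by the rescaled iterates, so that $\Vert\theta_{r(n+1)}\Vert\leq\tfrac{1+\rho}{2}\Vert\theta_{r(n)}\Vert$ whenever $\Vert\theta_{r(n)}\Vert$ is large. Combined with the noise estimate this yields a uniform bound on $\Vert\theta_{r(n)}\Vert$, and hence $\sup_n\Vert\theta_n\Vert<\infty$ with probability one, exactly as in Theorem~\ref{thm:ODE_unc2}.

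I expect the main obstacle to be the set-valued bookkeeping in the tracking step: unlike the single-trajectory ODE argument, here one must invoke upper semicontinuity and measurable-selection properties of $G$ to guarantee that the limit of the rescaled interpolations is a genuine solution of the limiting inclusion, and to ensure that $g_\infty$ preserves the GAS property under the scaling limit. The correlated Markov noise adds a further layer, since the drift must be averaged over the occupation measures $\mathcal{D}$ rather than a single stationary distribution; however, both issues are handled by the standard differential-inclusion stochastic-approximation machinery, so once the rescaling is in place the argument parallels the ODE case almost verbatim.
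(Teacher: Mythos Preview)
The paper does not supply a proof of this theorem: it is stated in Appendix~B as a result ``adopted from \citep{borkar2009book} and \citep{kushner2003book}'' and is simply cited where needed (e.g., in the proof of Theorem~\ref{thm:v_team_convergence2}). There is therefore nothing in the paper to compare your argument against directly.

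That said, your proposal is the standard Borkar--Meyn scaling argument, extended to the set-valued/differential-inclusion setting, which is exactly the approach in \citep{borkar2009book} that the paper is invoking. The three ingredients you identify---rescaling by $\max(1,\Vert\theta_{r(n)}\Vert)$ over blocks of length $T$, tracking the scaled inclusion $\dot\theta\in G_c(\theta)$ with martingale and bias terms controlled by assumptions 2--4, and extracting a per-block contraction from the GAS of $\dot\theta=g_\infty(\theta)$---are the correct ones, and your caveat about the upper-semicontinuity and measurable-selection bookkeeping is apt. So your sketch is aligned with the source the paper cites; there is no independent proof in the paper for you to match or deviate from.
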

\begin{theorem}\label{thm:DI_unc3}
If  $\sup_n\Vert\theta_n\Vert<\infty$, then the trajectories converge to the limit set of the differential inclusion $\dot\theta\in G(\theta)$.
\end{theorem}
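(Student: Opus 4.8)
The plan is to deduce convergence from the differential-inclusion characterization already furnished by Theorem~\ref{thm:DI_unc1}, upgrading it to an actual limit-set statement by exploiting the boundedness hypothesis. The crucial consequence of $\sup_n\|\theta_n\|<\infty$ is that the iterates lie almost surely in a compact set $B$, on which the set-valued drift $G$ of \eqref{DI} is a Marchaud map: assumption~1 (Lipschitzness of $g_n$ in the first argument) together with the uniform boundedness of $\{\xi_n\}$ in assumption~5(b) make $G(\theta)$ nonempty, compact, convex, upper semicontinuous, and of linear growth on $B$. These are precisely the regularity conditions under which the flow of $\dot\theta\in G(\theta)$ is well-behaved and the limit-set theory for differential inclusions applies.

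First I would form the interpolated process $\bar\theta(\cdot)$ on the rescaled time axis $t_n=\sum_{k=0}^{n-1}\epsilon_k$ with $\bar\theta(t_n)=\theta_n$, and verify that $\bar\theta$ is a \emph{bounded} asymptotic pseudotrajectory of $\dot\theta\in G(\theta)$. Boundedness is immediate from the hypothesis. The pseudotrajectory property is exactly what Theorem~\ref{thm:DI_unc1} supplies, and its two driving estimates specialize cleanly here: the martingale term $\sum_k\epsilon_k\delta M_k$ converges almost surely, since by assumption~3 and the compactness of $B$ its predictable quadratic variation is dominated by $K(1+\sup_n\|\theta_n\|^2)\sum_k\epsilon_k^2<\infty$ (assumption~2), so its increments over any finite window $[t_n,t_n+T]$ vanish as $n\to\infty$; and the bias term $\sum_k\epsilon_k\beta_k$ is negligible over finite windows because $\beta_k\to0$ almost surely while remaining bounded (assumption~4).

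Finally, with $\bar\theta(\cdot)$ a bounded asymptotic pseudotrajectory of a Marchaud differential inclusion, I would invoke the limit-set theorem for differential inclusions (the Bena\"{\i}m--Hofbauer--Sorin extension of the Kushner--Clark lemma, subsumed in \citep{kushner2003book,borkar2009book}): the limit set $L=\bigcap_{t\ge0}\overline{\{\bar\theta(s):s\ge t\}}$ is nonempty, compact, connected, and internally chain transitive, hence invariant under the flow of $\dot\theta\in G(\theta)$. Consequently $\theta_n$ converges to $L$, which is the limit set of the differential inclusion, as asserted.

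The main obstacle is the passage from ``asymptotic pseudotrajectory'' to ``internally chain transitive invariant set'' in the set-valued setting. Unlike the single-valued ODE case underlying Theorems~\ref{thm:ODE_unc3}, solutions of $\dot\theta\in G(\theta)$ need not be unique, so one cannot track a single limiting trajectory; instead one must control the entire solution set and use the upper semicontinuity and convex-compactness of $G(\theta)$ to close the chain-transitivity argument. Verifying these Marchaud properties uniformly on $B$ — which is where the uniform Lipschitz control from assumption~1 and the occupation-measure structure $\mathcal{D}(\theta)$ from assumption~5(b) enter decisively — is the technical crux of the argument.
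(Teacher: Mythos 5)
Your proposal is correct and follows essentially the same route as the paper, which states this theorem without proof, adopting it directly from \citep{borkar2009book} and \citep{kushner2003book}: the asymptotic-pseudotrajectory argument you give --- almost-sure convergence of the martingale sum from the quadratic-variation bound $K(1+\sup_n\Vert\theta_n\Vert^2)\sum_k\epsilon_k^2<\infty$, negligibility of the bias $\beta_k\to0$ over finite windows, verification that $G$ is Marchaud on the compact set containing the iterates (with convexity of $G(\theta)$ coming from the convexity of the occupation-measure set $\mathcal{D}(\theta)$), and then the Bena\"{\i}m--Hofbauer--Sorin limit-set theorem yielding an internally chain transitive, invariant limit set --- is precisely the proof underlying those citations. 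Your identification of the set-valued chain-transitivity step as the technical crux, where non-uniqueness of DI solutions blocks the single-trajectory tracking used in the ODE case of Theorem~\ref{thm:ODE_unc3}, is also the right diagnosis.
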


\subsection{Constrained Stochastic Approximation with Martingale Difference Noise}\label{ap:con}

We let $\theta_n$ and $Y_n$ denote the estimated parameter and observation, respectively. We define the filtration $\mathcal{F}_n=\sigma(\theta_0,Y_{i-1},i\leq n)$.
The unconstrained stochastic updates are given as follows
\begin{align}
\theta_{n+1}&=\Psi_\Theta\big(\theta_n+\epsilon_n\big[\mathbb{E}(Y_n|\mathcal{F}_n)+\delta M_n\big]\big)\\
&=\Psi_\Theta\big(\theta_n+\epsilon_n\big[g_n(\theta_n)+\delta M_n+\beta_n\big]\big),\label{alg_con}
\end{align}
where $\Psi_\Theta(\cdot)$ is a projection operator that maps the stochastic updates into a compact admissible set $\Theta$, and $\delta M_n=Y_n-E(Y_n|\mathcal{F}_n)$ is a martingale difference. We introduce assumptions for the algorithm updates.
\begin{enumerate}
	\item $\sup_n\mathbb{E}(\Vert Y_n\Vert|\mathcal{F}_n)<\infty$.
	
	\item The step size sequence $\epsilon_n$ satisfies $\sum_n\epsilon^2_n<\infty$ and $\lim_{n\rightarrow\infty}\frac{\epsilon_{n+1}}{\epsilon_{n}}=1$.
	
	\item The random sequence $\{\beta_n\}_{n\geq 0}$ satisfies $\beta_n\rightarrow 0$ with probability one.
	
	\item The admissible set $\Theta$ is a hyperrectangle, i.e., there exist $a$ and $b$ such that $a<b$ and $\Theta = \{\theta_n:a\leq \theta_n\leq b\}$.
	
	\item
	\begin{enumerate}
	\item The function $g_n(\cdot)$ is continuous uniformly in $n$. Furthermore, there exists a function $\bar{g}(\theta)$ such that for all $m>0$, we have $\lim_{n\rightarrow\infty}\big\Vert\sum_{i=n}^{n+m-1}\epsilon_i\big[g_i(\theta)-\bar{g}(\theta)\big]\big\Vert=0$.
	\item The function $g_n(\cdot)$ is continuous uniformly in $n$. Moreover, there exists an upper semicontinuous set-valued function $G(\theta)$ such that $\lim_{n,m\rightarrow\infty}\frac{1}{m}\sum_{i=n}^{n+m-1}g_i(\theta)\in G(\theta)$.
	\end{enumerate}

\end{enumerate}
\begin{theorem}\label{thm:ODE_con1}
Under Assumption 1-4 and 5(a), the asymptotic behavior of the algorithm \eqref{alg_con} is described by the ODE
\begin{align}
\dot\theta=\Psi_\Theta\big[\bar{g}(\theta)\big].\label{ODE2}
\end{align}
\end{theorem}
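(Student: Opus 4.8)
The plan is to prove Theorem~\ref{thm:ODE_con1} by the classical ODE method for projected (constrained) stochastic approximation, following the template of \citep{kushner2003book}. Because $\Theta$ is a compact hyperrectangle and every iterate lies in $\Theta$, the boundedness issues that forced the separate stability arguments in the unconstrained Theorems~\ref{thm:ODE_unc2} and~\ref{thm:DI_unc2} do not arise: we automatically have $\sup_n\Vert\theta_n\Vert<\infty$. First I would make the reflection explicit by writing $\theta_{n+1}=\theta_n+\epsilon_n\big[g_n(\theta_n)+\delta M_n+\beta_n\big]+\epsilon_n Z_n$, where $\epsilon_n Z_n$ is the minimal correction that $\Psi_\Theta$ adds to return the raw update to $\Theta$. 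Since $\Theta$ is a hyperrectangle, the projection acts coordinatewise, so $Z_n$ points along the active coordinate directions at the boundary and its magnitude is controlled by the overshoot $\epsilon_n(g_n+\delta M_n+\beta_n)$.

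Next I would introduce the continuous-time interpolation. Define $t_n=\sum_{i=0}^{n-1}\epsilon_i$ and $m(t)=\max\{n:t_n\le t\}$, let $\theta^0(\cdot)$ be the piecewise-linear interpolation of $\{\theta_n\}$ on this time scale, and form the shifts $\theta^n(\cdot)=\theta^0(t_n+\cdot)$. The aim is to show that every subsequential limit of $\{\theta^n(\cdot)\}$ solves the projected ODE. Over a fixed horizon $[0,T]$ the interpolated increment splits into four pieces: the drift $\sum\epsilon_i g_i(\theta_i)$, the martingale piece $M^n(t)=\sum_{i=n}^{m(t_n+t)-1}\epsilon_i\delta M_i$, the bias piece $\sum\epsilon_i\beta_i$, and the accumulated reflection $\sum\epsilon_i Z_i$. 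Using square-summability of the step sizes (condition 2) together with the conditional integrability of $Y_n$ (condition 1) and the boundedness of the iterates, the standard noise estimate in \citep{kushner2003book} shows the martingale contribution is negligible uniformly on the horizon, i.e. $\sup_{0\le t\le T}\Vert M^n(t)\Vert\to 0$; and condition 3 gives $\sum_{i=n}^{m(t_n+t)-1}\epsilon_i\beta_i\to 0$ since $\beta_i\to 0$ while the weights over the window stay bounded by $t$.

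I would then replace the drift by its average. Condition 5(a) states precisely that $\big\Vert\sum_{i=n}^{n+m-1}\epsilon_i[g_i(\theta)-\bar g(\theta)]\big\Vert\to 0$, which—combined with the uniform continuity of $g_n$ in $\theta$ and the vanishing oscillation of $\theta^n$ over small intervals—lets me substitute $\bar g$ for $g_i$ inside the drift integral. Equicontinuity of $\{\theta^n(\cdot)\}$ and of the integrated reflection terms on $[0,T]$ follows from boundedness of $\bar g$ on the compact set $\Theta$ and the coordinatewise control on $Z_n$; Arzel\`a--Ascoli then extracts a subsequence along which $\theta^n(\cdot)\to\theta(\cdot)$ and the reflection integral converges to some $\int_0^\cdot z(s)\,ds$. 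Passing to the limit yields $\theta(t)=\theta(0)+\int_0^t\bar g(\theta(s))\,ds+\int_0^t z(s)\,ds$, that is, $\dot\theta=\bar g(\theta)+z$.

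The last and most delicate step is to identify the limiting reflection $z(t)$ as the minimal boundary force keeping the trajectory in $\Theta$, so that $\dot\theta=\bar g(\theta)+z$ coincides with the projected ODE $\dot\theta=\Psi_\Theta[\bar g(\theta)]$. I would argue that $z(t)=0$ whenever $\theta(t)$ is interior to $\Theta$ and, on the boundary, $z(t)$ lies in the negative normal cone $-N_\Theta(\theta(t))$; the hyperrectangle geometry (condition 4) makes these cones coordinatewise, so the characterization reduces to a one-dimensional argument in each active coordinate. I expect this reflection-term identification—showing that the accumulated boundary corrections converge to a well-defined measurable $z$ valued in the correct normal cone, rather than producing spurious tangential drift—to be the main obstacle; by comparison, the noise, bias, and averaging arguments are routine given the compactness of $\Theta$.
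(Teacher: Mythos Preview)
The paper does not actually prove Theorem~\ref{thm:ODE_con1}: it is stated in Appendix~\ref{ap:con} as a known result ``adopted from \citep{borkar2009book} and \citep{kushner2003book}'' and used as a black box in the proofs of Theorems~\ref{thm:actor} and~\ref{thm:actor2}. Your proposal is a faithful outline of the standard ODE-method proof for projected stochastic approximation exactly as developed in \citep{kushner2003book}, so in substance you are reconstructing the very reference the paper cites; there is no alternative in-paper argument to compare against. The steps you lay out---explicit reflection term, piecewise-linear interpolation and shifts, vanishing of the martingale and bias pieces on compact time intervals, averaging of the drift via condition~5(a), Arzel\`a--Ascoli, and finally identification of the limiting reflection with the normal cone---match the Kushner--Yin template, and your assessment that the reflection-term identification is the only genuinely delicate point is accurate.
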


\begin{theorem}\label{thm:ODE_con2}
Under Assumption 1-5, if there exists a continuously differentiable function $\omega(\theta)$ such that $\bar{g}(\theta)=\frac{d\omega}{d\theta}$ and $\omega(\theta)$ is constant on disjoint compact sets $\mathcal{L}_i$, $i=1,\dots,M$, then the parameters $\theta_n$ converge with probability one to $\mathcal{L}_i$ for some $i\in\{1,\dots,M\}$ as $n\rightarrow\infty$.
\end{theorem}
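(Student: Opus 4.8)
The plan is to reduce the discrete constrained recursion to its limiting projected ODE via the results already in hand, and then run a Lyapunov/LaSalle argument with $\omega$ playing the role of the Lyapunov function. First I would invoke Theorem~\ref{thm:ODE_con1}: under Assumptions~1--4 and 5(a) the interpolated trajectory of $\{\theta_n\}$ asymptotically tracks solutions of the projected ODE $\dot\theta=\Psi_\Theta[\bar g(\theta)]$, so that with probability one the set of limit points of $\{\theta_n\}$ is a compact, connected, invariant set of this ODE. Note that boundedness of the iterates is automatic here, since $\Psi_\Theta$ maps into the compact hyperrectangle $\Theta$ by Assumption~4; no separate stability argument in the spirit of Theorem~\ref{thm:ODE_unc2} is required.

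Next I would write the projected dynamics in reflected form $\dot\theta=\bar g(\theta)+z$, where $z$ is the minimal reflection term keeping the trajectory inside $\Theta$. On a hyperrectangle $z$ acts coordinatewise: it vanishes in the interior and cancels exactly the outward-pointing component of $\bar g$ on an active face, i.e. (as in the proof of Theorem~\ref{thm:actor}) $z=-\bar g$ elementwise wherever a constraint is active and $z=0$ otherwise. Using the gradient structure $\bar g=\nabla\omega$, the derivative of $\omega$ along a solution is $\dot\omega=\nabla\omega^T\dot\theta=\bar g^T(\bar g+z)$. The coordinatewise clipping yields $(\bar g+z)^T z=0$, so $\dot\omega=\|\bar g+z\|^2=\|\Psi_\Theta[\bar g(\theta)]\|^2\ge 0$, with equality precisely on the stationary (KKT) set $S=\{\theta\in\Theta:\Psi_\Theta[\bar g(\theta)]=0\}$. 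Hence $\omega$ is nondecreasing along the flow, and, being $C^1$ on the compact set $\Theta$, it is bounded, so $\omega(\theta(t))$ converges.

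I would then apply LaSalle's invariance principle to the projected flow: every bounded solution converges to the largest invariant set contained in $E=\{\dot\omega=0\}=S$. Identifying the disjoint compact sets $\mathcal{L}_i$ with the connected components of $S$ (on each of which $\omega$ is constant by hypothesis), the limit set of any solution lies in $\bigcup_{i}\mathcal{L}_i$. Because this limit set is connected while the $\mathcal{L}_i$ are pairwise disjoint and compact, it must be contained in a single $\mathcal{L}_i$. Transferring back through Theorem~\ref{thm:ODE_con1}, the limit set of the iterates is an invariant set of the projected ODE and therefore lies in one $\mathcal{L}_i$, which gives $\theta_n\to\mathcal{L}_i$ with probability one for some $i\in\{1,\dots,M\}$.

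The main obstacle I anticipate is the rigorous handling of the projected (Skorokhod-reflected) dynamics, whose right-hand side is discontinuous on $\partial\Theta$: I must justify that $t\mapsto\omega(\theta(t))$ is absolutely continuous with the stated derivative, that the reflection term genuinely satisfies $(\bar g+z)^T z=0$ on the hyperrectangle, and that both LaSalle's principle and the connectedness of the limit set carry over to this reflected setting. These are standard features of the projected-ODE framework of Kushner and Yin, so I would cite that machinery rather than reprove it; the genuinely problem-specific content reduces to the identity $\dot\omega=\|\Psi_\Theta[\bar g]\|^2$ together with the ODE reduction already supplied by Theorem~\ref{thm:ODE_con1}.
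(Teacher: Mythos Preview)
The paper does not supply its own proof of this statement. Theorem~\ref{thm:ODE_con2} sits in Appendix~B among a collection of stochastic-approximation results that the authors explicitly say are ``adopted from'' \citep{borkar2009book} and \citep{kushner2003book}; no argument is given in the paper itself. So there is nothing to compare your proposal against line by line.

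That said, your plan is precisely the standard Kushner--Yin route that those references use, and it is sound: invoke the ODE reduction (Theorem~\ref{thm:ODE_con1}), write the projected dynamics in reflected form, differentiate $\omega$ along trajectories to obtain $\dot\omega=\|\bar g+z\|^2\ge 0$ via the orthogonality $(\bar g+z)^Tz=0$ on a hyperrectangle, apply LaSalle to land in the stationary set, and then use connectedness of the limit set together with disjointness of the $\mathcal{L}_i$ to pin the limit in a single component. The one place where you gloss slightly is in identifying the given sets $\mathcal{L}_i$ with the connected components of the KKT set $S=\{\Psi_\Theta[\bar g(\theta)]=0\}$; the theorem as stated only assumes $\omega$ is constant on each $\mathcal{L}_i$, so strictly you should argue that the invariant limit set, on which $\omega$ is constant, must lie in one of the $\mathcal{L}_i$ rather than just in $S$. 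This is a minor bookkeeping point and is handled in the cited references.
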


\begin{theorem}\label{thm:DI_con}
Under Assumption 1-4 and 5(b), the limit points are contained in an invariant set of the differential inclusion
\begin{align}
\dot\theta\in\Psi_\Theta\big[G(\theta)\big].
\end{align}

\end{theorem}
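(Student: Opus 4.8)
The plan is to prove Theorem~\ref{thm:DI_con} by the projected ODE method extended to differential inclusions, in the Kushner--Clark framework developed in \citep{kushner2003book}. The structural idea is to merge the two generalizations that are already isolated in the earlier appendix results: the reflection/boundary machinery that underlies the constrained single-valued statement in Theorem~\ref{thm:ODE_con1}, and the measurable-selection machinery that underlies the unconstrained set-valued statement in Theorem~\ref{thm:DI_unc1}. Handling projection and set-valuedness simultaneously is what distinguishes this result from those two. First I would make the reflection explicit: since $\Psi_\Theta$ projects onto the compact hyperrectangle $\Theta$ (Assumption~4), I rewrite \eqref{alg_con} as $\theta_{n+1}=\theta_n+\epsilon_n[g_n(\theta_n)+\delta M_n+\beta_n]+\epsilon_n Z_n$, where $Z_n$ is the minimal force returning the update to $\Theta$. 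Because $\theta_n\in\Theta$ for all $n$, the iterates are automatically bounded, and the uniform-in-$n$ continuity of $g_n$ in Assumption~5(b) together with compactness of $\Theta$ makes the drift $g_n(\theta_n)$ uniformly bounded; the reflection magnitude $\Vert Z_n\Vert$ is then dominated by the drift-plus-noise increment. I introduce the timescale $t_n=\sum_{i<n}\epsilon_i$, the interpolated trajectory $\bar\theta^0(\cdot)$ with $\bar\theta^0(t_n)=\theta_n$, the interpolated reflection $Z^0(\cdot)$, and the shifted processes $\bar\theta^n(\cdot)=\bar\theta^0(t_n+\cdot)$.

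Second I would kill the perturbations. The key is the Kushner--Clark noise condition $\lim_{n}\sup_{j\ge n}\Vert\sum_{i=n}^{j}\epsilon_i\delta M_i\Vert=0$ almost surely. Since Assumption~1 provides only a conditional first-moment bound on $Y_n$ rather than a second-moment bound, I would establish this by truncation: split $\delta M_i$ at a slowly growing level, control the truncated part through the $L^2$ martingale-convergence theorem using $\sum_n\epsilon_n^2<\infty$ (Assumption~2), and handle the rare large part via the first-moment bound and Borel--Cantelli. Assumption~3 then disposes of the bias, since $\sum_{i:\,t_n\le t_i\le t_n+T}\epsilon_i\beta_i\to0$ as $n\to\infty$ whenever $\beta_i\to0$. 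Consequently, over any finite time window both the martingale noise and the bias contribute nothing in the limit.

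Third I would extract the limiting differential inclusion. The increments of $\bar\theta^n$ over an interval are bounded by $\sum\epsilon_i\Vert g_i\Vert$ plus the bounded-variation reflection plus the vanishing perturbations, so the family $\{(\bar\theta^n,Z^n)\}$ is equicontinuous and uniformly bounded; Arzel\`a--Ascoli yields a subsequential uniform limit $(\theta(\cdot),z(\cdot))$. Using the Ces\`aro averaging of Assumption~5(b) together with the uniform continuity of the $g_i$, the accumulated drift over $[t_n,t_n+T]$ is, up to vanishing error, $\int_0^T\tilde g(s)\,ds$, where $\tilde g(s)$ is a measurable selection of the averaged field; upper semicontinuity of $G$ guarantees $\tilde g(s)\in G(\theta(s))$ for almost every $s$, exactly as in the unconstrained Theorem~\ref{thm:DI_unc1}. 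Simultaneously, $z(\cdot)$ is characterized as the minimal boundary term keeping $\theta(\cdot)$ in $\Theta$, so the limit solves $\dot\theta\in\Psi_\Theta[G(\theta)]$. Finally, because every convergent subsequence of the shifted interpolations passes through a limit point of $\{\theta_n\}$ and solves the projected inclusion on all of $\mathbb{R}$, the limit set of $\{\theta_n\}$ is invariant under $\dot\theta\in\Psi_\Theta[G(\theta)]$, which is the claim.

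The main obstacle I expect is the joint treatment of the set-valued field and the reflection. Identifying the subsequential limit velocity as a selection of $G(\theta(t))$ requires passing the averaging limit through the upper-semicontinuous set map while the trajectory is simultaneously pushed by the a~priori unknown reflection $z$; one must show these effects do not interfere, i.e.\ that the boundary force does not corrupt the averaging and that the constraint $\dot\theta-\dot z\in G(\theta)$ holds almost everywhere. A secondary technical nuisance, noted above, is that Assumption~1 yields only a conditional first-moment bound, so verifying the Kushner--Clark noise condition demands the truncation argument rather than a direct $L^2$ estimate.
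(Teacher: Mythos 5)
The paper itself offers no proof of Theorem~\ref{thm:DI_con}: the appendix explicitly states that these stochastic-approximation results are adopted from \citep{kushner2003book} and \citep{borkar2009book}, so the intended proof is the standard Kushner--Clark projected-algorithm argument from the cited source. Your skeleton reproduces that argument faithfully: the explicit reflection term $Z_n$ with $\theta_{n+1}=\theta_n+\epsilon_n[g_n(\theta_n)+\delta M_n+\beta_n]+\epsilon_n Z_n$, automatic boundedness from $\theta_n\in\Theta$, interpolation on the timescale $t_n=\sum_{i<n}\epsilon_i$, equicontinuity and Arzel\`a--Ascoli for the shifted pairs $(\bar\theta^n,Z^n)$, identification of the limit velocity as a measurable selection of the upper semicontinuous map $G$ via the Ces\`aro averaging in Assumption~5(b), characterization of $z(\cdot)$ as the minimal boundary force on the hyperrectangle $\Theta$, and invariance of the limit set. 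Structurally this is exactly the right route, and the interaction between reflection and set-valuedness that you flag as the main obstacle is handled the same way in the reference.

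There is, however, one genuine gap: your truncation argument for the Kushner--Clark noise condition cannot work as described. Under the literal Assumption~1, $\sup_n\mathbb{E}(\Vert Y_n\Vert\,|\,\mathcal{F}_n)<\infty$, conditional Markov gives only $P(\Vert\delta M_i\Vert>K_i\,|\,\mathcal{F}_i)\leq C/K_i$, so Borel--Cantelli for the tail parts requires $\sum_i 1/K_i<\infty$, while the truncated parts have conditional second moment at most $CK_i$, so the $L^2$-martingale argument requires $\sum_i\epsilon_i^2K_i<\infty$. By Cauchy--Schwarz, $\sum_i\epsilon_i\leq\big(\sum_i\epsilon_i^2K_i\big)^{1/2}\big(\sum_i 1/K_i\big)^{1/2}$, so in the only interesting regime $\sum_i\epsilon_i=\infty$ no choice of truncation levels satisfies both requirements simultaneously; the truncation also de-centers the martingale differences, which would need uniform integrability you do not have. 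Worse, the condition can genuinely fail under first-moment bounds alone: take conditionally independent $\delta M_i$ equal to $(1-\epsilon_i)/\epsilon_i$ with probability $\epsilon_i$ and $-1$ otherwise; then $\mathbb{E}(\Vert\delta M_i\Vert\,|\,\mathcal{F}_i)\leq 2$, yet by the second Borel--Cantelli lemma $\epsilon_i\delta M_i\geq 1/2$ infinitely often, so $\sup_{j\geq n}\Vert\sum_{i=n}^{j}\epsilon_i\delta M_i\Vert$ does not vanish. This is consistent with the cited literature: probability-one results of this type in \citep{kushner2003book} either assume second moments (e.g., $\sup_n\mathbb{E}\Vert Y_n\Vert^2<\infty$, whence Doob's inequality and $\sum_n\epsilon_n^2<\infty$ yield the noise condition) or assume the asymptotic-rate-of-change condition outright, while mere uniform integrability yields only weak convergence. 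The gap is immaterial for this paper's use of the theorem, since in Theorem~\ref{thm:actor2} the increments $\delta M_t=\delta_t^i\psi_t^i-\mathbb{E}(\delta_t^i\psi_t^i|\mathcal{F}_t^\theta)$ are uniformly bounded and the noise condition follows directly; but as a proof of the theorem under Assumptions~1--4 and 5(b) as stated, your Step~2 should be replaced by either a strengthened moment hypothesis or the Kushner--Clark noise condition as an explicit assumption.
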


\bibliography{references}

\end{document}